\newcommand{\footremember}[2]{%
	\footnote{#2}
	\newcounter{#1}
	\setcounter{#1}{\value{footnote}}%
}
\newcommand{\footrecall}[1]{%
	\footnotemark[\value{#1}]%
}
\newcommand{\prox}{\operatorname{prox}}
\newcommand{\proj}{\operatorname{proj}}
\def\argmax{\operatornamewithlimits{arg\,max}}
\def\argmin{\operatornamewithlimits{arg\,min}}
\newcommand{\dom}{\text{dom}}
\newcommand{\eni}{\begin{equation}}
\newcommand{\enf}{\end{equation}}
\newcommand{\be}{\begin{equation}}
\newcommand{\ee}{\end{equation}}
\newcommand{\br}{\begin{rem}}
\newcommand{\er}{\end{rem}}
\newcommand{\bex}{\begin{ex}}
\newcommand{\eex}{\end{ex}}
\newcommand{\bc}{\begin{center}}
\newcommand{\ec}{\end{center}}
\newcommand{\bn}{\begin{enumerate}}
\newcommand{\en}{\end{enumerate}}
\newcommand{\bi}{\begin{itemize}}
\newcommand{\ei}{\end{itemize}}
\newcommand{\beas}{\begin{eqnarray*}}
\newcommand{\eeas}{\end{eqnarray*}}
\newcommand{\bea}{\begin{eqnarray}}
\newcommand{\eea}{\end{eqnarray}}
\newcommand{\bt}{\begin{thm}}
\newcommand{\et}{\end{thm}}
\newcommand{\bl}{\begin{lem}}
\newcommand{\el}{\end{lem}}
\newcommand{\bd}{\begin{defi}}
\newcommand{\ed}{\end{defi}}
\newcommand{\bp}{\begin{proof}}
\newcommand{\ep}{\end{proof}}
\newcommand{\bq}{\begin{que}}
\newcommand{\eq}{\end{que}}
\newcommand{\bas}{\begin{ass}}
\newcommand{\eas}{\end{ass}}
\newtheorem{thm}{Theorem}[section]
\newtheorem{defi}{Definition}[section]
\newtheorem{lem}{Lemma}[section]
\newtheorem{prop}{Proposition}[section]
\newtheorem{rem}{Remark}[section]
\newtheorem{ex}{Example}[section]
\newtheorem{ass}{Assumption}[section]
\newtheorem{que}{Question}[section]
\providecommand{\nor}[1]{\left\lVert {#1} \right\rVert}
\providecommand{\bf}[1]{\mathbf{#1}}
\providecommand{\abs}[1]{\lvert{#1}\rvert}
\providecommand{\scal}[2]{\left\langle{#1},{#2}\right\rangle}
\providecommand{\hi}[1]{\left|{#1} \right|_+}
\newcommand{\R}{\mathbb R}
\newcommand{\hh}{\mathcal H}
\newcommand{\la}{\lambda}
\newcommand{\Z}{Z}
\definecolor{greenyy}{rgb}{0.0,0.6,0.3} 
\newcommand{\vas}[1]{\color{greenyy}{#1} \color{black}} 
\newcommand{\grandO}[1]{O\mathopen{}\left(#1\right)}
\providecommand{\norme}[1]{\left\lVert {#1} \right\rVert}
\newcommand{\off}[1]{}
\providecommand{\ackn}[2]{\textbf{Acknowledgements :} #2}
\newtheorem{theorem}{Theorem}[section]
\newtheorem{assumption}{Assumption}
\numberwithin{equation}{section}
\title{Iterative regularization in classification via hinge loss diagonal descent }
\author{
Vassilis Apidopoulos \footremember{Dibris}{MaLGa, DIBRIS, Universit\`a di Genova, Genoa, Italy. E-mail: vassilis.apid@gmail.com (Vassilis Apidopoulos)}\footnote{Archimedes, Athena RC, Athens, Greece.}%
\and Tomaso Poggio\footremember{mit}{CBMM, MIT, MA, USA. E-mail: tp@ai.mit.edu (Tomaso Poggio), 	lorenzo.rosasco@unige.it (Lorenzo Rosasco)}%
\and Lorenzo Rosasco\footrecall{Dibris} ~\footrecall{mit} ~\footnote{Istituto Italiano di Tecnologia, Genoa, Italy.}
\and Silvia Villa\footnote{MaLGa, DIMA, Universit\`a di Genova, Genoa, Italy. E-mail: villa@dima.unige.it (Silvia Villa)}%
}
\date{}
\begin{document}

\maketitle

\begin{abstract}

Iterative regularization is a classic idea in regularization theory,  that has recently become popular  in machine learning. On the one hand, it allows to design efficient algorithms controlling at the same time  numerical and statistical accuracy. On the other hand it allows to shed light on the  learning curves observed while training neural networks. In this paper,  we focus on  iterative  regularization in the context of classification. After contrasting this  setting with that of linear inverse problems, we develop an iterative regularization approach based on the use of the hinge loss function. More precisely we consider a diagonal approach for a family of algorithms for which we prove convergence as well as rates of convergence and stability results for a suitable classification noise model.
Our approach compares favorably with other alternatives, as confirmed  by numerical simulations. 

\end{abstract}


\section{Introduction}

Estimating a quantity of interest from finite measurements is a central problem in a number of fields including  inverse problems but also machine learning, statistics and  signal processing. In this context, a key idea  is that reliable estimation  requires imposing some prior assumptions on the problem at hand. Regularization  theory for inverse problems provides a principled framework to formalize this  idea \cite{engl1996regularization}. The quantity of interest is typically seen as a function, or a vector, and prior assumptions take the form of suitable functionals, called regularizers. Following this idea, Tikhonov regularization provides a classic  approach to estimate solutions \cite{tihonov1963solution,thikhonov1976methodes}.  The latter are found  minimizing an empirical objective where a data fit term is penalized adding a chosen regularizer.
Other regularization approaches are classic in inverse problems,  and in particular iterative regularization has become popular in machine learning, see e.g. \cite{yao2007early,raskutti2014early,stankewitz2021inexact}. This approach is based on the observation that iterative optimization procedures have a self-regularizing property, so that  a chosen  regularization can be  enforced implicitly along the iterations. Such an observation seems to shed light on some theoretical properties of deep learning approaches in machine learning \cite{hardt2016train,neyshabur2017geometry,gunasekar2018characterizing, gunasekar2018implicit,rangamani2021dynamics}. More generally, iterative regularization provides an approach to design algorithms striking a balance between statistical accuracy and computational efficiency \cite{bottou201113,zhang20015Boosting,rosasco2015learning,matet2017don}. 

In this paper, we focus on iterative regularization in the context of classification, perhaps the most classical among  machine learning problems \cite{cortes1995support,steinwart2008support,shalev2014understanding}. After discussing the differences and similarities between classification and classical linear inverse problems, we recall how different iterative regularization schemes for classification can be  defined depending on the considered loss function. Then, we  focus on the hinge loss used in support vector machines \cite{steinwart2008support}. In this case, compared to other loss functions  such as the exponential and logistic loss  \cite{nacson2018convergence,soudry2018implicitGD,ji2019implicit}, a simple gradient approach does not allow to establish iterative regularization properties. Indeed,  we propose a diagonal approach in the same spirit of \cite{garrigos2018iterative} and its inertial version \cite{calatroni2019accelerated} and prove their regularization properties, including convergence  and stability. The proposed approach compares favorably to analogous results for the logistic loss \cite{nacson2018convergence,soudry2018implicitGD,ji2019implicit,ji2020gradient}, but also with recent  approaches considering the hinge loss \cite{molitor2020bias}. Indeed, we show that fast convergence rates are possible, largely  improving previous results. Further, we prove the first  stability results under a suitable classification noise model which is inspired by the deterministic noise classically  considered in linear inverse problems. We note that  in a noiseless setting,  our approach can also be seen as a way to solve the basic  separable support vector machines problem introduced in the seminal work \cite{cortes1995support}. In this view, relevant studies, among the rich literature, can be found for example in \cite{cortes1995support,freund1998large,cristianini2000introduction,chapelle2007training,shalev2011pegasos}.
 Our theoretical results are illustrated via numerical simulations, where we also investigate empirically  the stability properties of the proposed methods.

The rest of the paper is organized as follows. In Section \ref{section2}, we briefly discuss the ideas of explicit and implicit regularization approach for regression problems. In Section \ref{section3} we are reviewing these approaches in terms of classification problems. In Section \ref{sectionprel},  we describe the main proposed schemes based on a diagonal iterative regularization procedure for the hinge loss. In Section \ref{sectionmain} we present the main results and we provide the corresponding convergence and stability analysis. Finally in Section \ref{sectionnum} we illustrate the performance of the proposed algorithms on some simple numerical examples. Appendix \ref{appendixa} contains some general facts and all  the technical proofs and lemmas.

\subsection{Notation}
\label{sec:notation}
We first introduce some notations and recall a few basic notions that will be needed throughout the paper. The interested reader can consult \cite{bauschke2011convex} regarding the main tools and their associated properties used in this work. 
Let $\scal{\cdot}{\cdot}$ denote the standard Euclidean inner product and $\norme{\cdot}$ the associated Euclidean norm. For a linear operator $Z:\R^n\to \R^{m}$, we denote with $\Im(Z)$ and $\ker(Z)$ its range and kernel respectively We also note with $\norme{Z}_{op}$ and $\norme{Z}_{F}$ its operator and Frobenious norm respectively. We also denote with $\text{Id}$ the identity operator and with $\mathbf{1}$ the $n$-vector with entry $1$ in each coordinate. 

Given a convex and closed set $C\subset \R^{n}$, the distance of a point $x\in \R^{n}$ from the set $C$ is $\text{dist}(x,C)=\underset{y\in C}{\inf}\norme{x-y}$. The indicator function of $C$, $\iota_{C}(\cdot):\R^n\to
\bar{\R}:=\R\cup\{+\infty\}$ is defined as
$\iota_{C}(x)=\begin{cases}
	&0 ~ \text{ if } x\in C \\
	&+\infty ~ \text{ if not }
\end{cases}.$ 

For a proper, convex and lower semicontinuous function $f:\R^n\to\bar{\R}$, we define its subdifferential $\partial f : \R^n\to2^{\R^{n}}$, as $\partial f(x)=\{u\in \R^{n}~:~ f(y)\geq f(x) +\scal{u}{y-x} ,~ \forall y\in \R^{n} \}$.
For $\gamma>0$, the proximal operator of $f$, $\prox_{\gamma f}:\R^n\to\R^{n}$ with step $\gamma$, is defined by $\prox_{\gamma f}(x)=\underset{y\in \R^n}{\argmin}\{f(y)+\frac{1}{2\gamma}\norme{x-y}^{2}\}$, for all $x\in \R^{n}$. The projection operator onto a convex and closed set $C\subset \R^{n}$ is defined as $\proj_{C}:\R^{n}\to\R^{n}$ such that  $\proj_{C}(x)=\underset{y\in C}{\argmin}\norme{y-x}$.
We denote the Fenchel conjugate of $f$ as $f^{\ast}:\R^n\to\bar{\R}$, such that $f^{\ast}(x)=\underset{y\in \R^{n}}{\sup}\{\scal{x}{y}-f(y)\}$.

\section{Background: explicit and implicit regularization in regression}\label{section2}
The classical regression problem in supervised learning and statistics corresponds to estimating a function of interest $f$ ,  given a finite number of (possibly noisy) evaluations at a number of input points.
The problem takes a simple and familiar form if $f$ is assumed to be linear. Indeed, in this case it corresponds to  estimating  $w\in \R^d$, given a set of equations,   
\be\label{inter}
\scal{w}{x_i}= y_i, \qquad i=1, \dots, n,
\ee
where $x_i \in \R^d$ and $y_i\in \R$.

The above problem can be restated as the linear inverse problem 
\be\label{ip}
Xw=y,
\ee
where  $X:\R^d\to \R^n$ is a matrix  with rows the input points, and $y
\in \R^n$ is a vector with entries the outputs.
Moreover, the simple linear case can be generalized if $f$ belongs to a reproducing kernel Hilbert space (RKHS).
\br[Regression in RKHS \cite{steinwart2008support}]
 Recall that a Hilbert space $\mathcal{H}$ of real valued functions on a set $\cal X$  is called a RKHS if for all $f\in \hh$ and $x\in \cal X$, there exists $C_x\in \R$ such that  $ |f(x)|\le C_x\nor{f}_\hh.$
 From the above definition  and Riesz lemma it follows immediately that   there exists a $k_x\in \hh$ such that 
 $
 f(x)=\scal{f}{k_x}_\hh.
 $
 Then we can write the regression problem as the problem of finding $f\in \hh$, given a set of equations,   
$$
\scal{f}{k_{x_i}}_\hh= y_i, \qquad i=1, \dots, n,
$$
where $x_i \in \cal X$ and $y_i\in \R$. 
 \er
\noindent Keeping the above remark in mind, in the following we primarily focus on the linear case to ease the notation. 

A key observation is that the problem~\eqref{ip} might not admit solutions or admit multiple ones. This latter situation is the most common  in machine learning, where high dimensional (overparameterized), or even infinite dimensional (like RKHS) models are often considered. In the linear setting,  this corresponds to the case where $n<d$ in~\eqref{ip}, (assuming the inputs to be linearly independent). In this case, problem~\eqref{ip} admits infinitely many solutions and a classic way to select one is to consider the minimum norm solution
\begin{equation}\label{minimalnormsolution1}
w^{\dagger}= \argmin_{w\in \R^d} \big\{
\nor{w} ~ : ~  \scal{w}{x_{i}}= y_{i} , \quad i=1, \dots, n \big\}.
\end{equation} 
The minimum norm solution can be written in terms of the pseudoinverse of $X$ as
$w^{\dagger}=X^{\dagger} y$ \cite[Definition $2.2$]{engl1996regularization}. This makes it clear that instability  in the solution might occur when $X$ is ill conditioned. The basic idea of regularization is to consider a family of estimates $\{w_{\nu}\}_{\nu>0}$ that approaches $w^{\dagger}$ with better stability properties.
In this sense, the regularization property of such estimates is related with i) the convergence of $w_{\nu}$ to the minimal norm solution $w^{\dagger}$ and ii) its stability. Here the notion of stability expresses how close are the estimates $\{w_{\nu}\}_{\nu>0}$ and $\{\tilde{w}_{\nu}\}_{\nu>0}$ that are generated from the true output $y$ and a noisy version of it $\tilde{y}$ (respectively), provided $y$ and $\tilde{y}$ are close enough (see e.g. \cite[Section $3$ \& Definition $3.1$]{engl1996regularization} for a detailed definition).
We next recall two basic approaches with this property.

\paragraph{Tikhonov (explicit) regularization.} The most classic regularization approach is  Tikhonov regularization
\begin{equation}\label{tikhonovpath1}
w_{\lambda}=\argmin_{w\in R^d}
\nor{Xw-y}^2
+\lambda \nor{w}^2,
\end{equation} 
where $\la>0$ is called the regularization parameter. The set of solutions corresponding to different values of $\lambda$ defines the regularization method and a direct computation shows that 
$w_{\lambda}=(X^\top X+\la \text{Id})^{-1}X^\top y$. From this expression is easy to see that 
the sequence $w_\la$ converges to $w^\dagger$ as $\la$ tends to zero, \off{\sout{showing the regularization property of the method} ? } (see e.g. \cite[Theorem $5.2$]{engl1996regularization}).
 The above ideas can be generalized as we discuss next. 
\br[Loss and regularizers]
The ideas in~\eqref{minimalnormsolution1},~\eqref{tikhonovpath1} can be generalized  replacing the squared norm in~\eqref{tikhonovpath1} with other regularizers $R:\R^d\to  [0,\infty)$, the $\ell_1$ norm being a popular example. Similarly instead of the least squares error in~\eqref{tikhonovpath1} 
other error cost functions (loss functions) $\ell:\R\to [0,\infty)$ can be considered. In regression, loss functions depend on the difference $y-f(x)$, examples besides the square loss include the absolute value loss, or the $epsilon$-insensitive loss used in support vector machine regression 
\cite{steinwart2008support}. For general losses,  the corresponding  solutions might not admit a closed form expression, but it is still possible to prove that the Tikhonov regularized solutions converge in some proper sense to the corresponding minimum norm solutions (see e.g. \cite{dontchev92,attouch1996viscosity}). For sake of completeness we provide a proof in a general setting in Lemma \ref{tikhonovlemma} in Appendix \ref{appendixa}.
\er
 In machine learning, regularization {\em \`a la Tikhonov} is sometimes called explicit since a  penalty is added to the data fit term. We next recall  iterative regularization and discuss  why  it is called implicit in machine learning \cite{Gunasekar2017}. 
 
 \paragraph{Iterative (implicit)  regularization.}
 The simplest example of iterative regularization is  the gradient descent iteration of the least squares error, that is
\[
 w_{t+1}= w_t-\gamma 
 X^\top (Xw-y)
\]
for some suitable initialization $w_0$ and step size $\gamma$.
It is well known that the above iteration converges to the minimal norm solution~\eqref{minimalnormsolution1}, and further that the stability of the solution varies along the iterations \cite[Section $6.1$]{engl1996regularization}. In this view, the stopping time becomes the regularization parameter defining a family of regularized solutions. This kind of regularization is called implicit in machine learning since there is no explicit penalty or constraint in the optimization model \cite{neyshabur2014search,neyshabur2017geometry}, and stability relies on the self-regularizing properties of the optimization process.
 These ideas have recently received a lot of attention in machine learning, since they are useful to understand the theoretical properties of more complex non-linear systems like neural networks \cite{Gunasekar2017,chizat2020implicit}. Further, they have been advocated as a way to  design resource efficient machine learning algorithms controlling at the same time  numerical and statistical accuracy.  In the remainder of the paper we discuss how these ideas extend beyond regression to the classification setting. 

We first add some remarks. First,  we note that other optimization approaches than gradient descent can be, and have been considered, including accelerated \cite{engl1996regularization,Neubauer2016OnNA,pagliana2019implicit,kindermann2021optimal,zhang2023acceleration} and stochastic methods \cite{rosasco2014convergence,mucke2019beating,dieuleveut2020bridging}, as well as mirror descent approaches \cite{gunasekar2021mirrorless,jin2023stochastic}. Second we note that,  
compared to Tikhonov regularization,  for iterative regularization the extension to other losses and regularizers is not  straightforward. The case of regularizers that are norms in reflexive Banach space has been considered in \cite{schuster2012regularization,brianzi2013preconditioned}, whereas the case of strongly convex regularizers has been considered in \cite{matet2017don,gunasekar2018characterizing}. The case of convex regularizers has been recently considered in \cite{molinari2021iterative}. The case of smooth loss functions for regression is quite straightforward, considering the gradient iteration
\begin{equation}\label{gdr1}
 w_{t+1}= w_t-\gamma \sum_{i=1}^{n} x_i\ell'(y_i-\scal{w_t}{x_i}), \quad t= 0, \dots, T,
\end{equation}
 where $\ell'$ is the derivative of the loss. 
 The case of convex but non smooth loss function can also be considered using subgradient methods \cite{lin2016Iterative}. Note that, if the above iteration is initialized in the span of the input points, it remains in the span. This observation is at basis of the proof that the above iteration converges to the minimal norm interpolant~\eqref{minimalnormsolution1}, see Lemma \ref{gradientdescentlemma} in Appendix \ref{appendixa}. 
 
 Provided, with the above background we next discuss the case of classification.
%


\section{Implicit regularization: from regression to classification}\label{section3}

In this section, we introduce the problem of classification and investigate how the ideas reviewed in the previous section for regression and linear inverse problems translate to this setting. In particular, we first discuss a  notion of minimal norm solution for classification. 

Similarly to regression, in classification the goal is to estimate a functional relationship, but the difference is that the  outputs are  binary valued, that is $y_i\in\{-1,1\}$. Estimating a binary valued function is computationally unfeasible and a classic approach relies on estimating a real valued function $f$ of which the sign is then taken,  that is $\text{sign}(f(x))= 1$ if $f(x)>0$, and $\text{sign}(f(x))= -1$ otherwise (here ties are broken arbitrarily). In this context, a natural quantity is the product $yf(x)$, 
called the {\em margin} of $f$ at $(x,y)$.
If the margin is positive it means that $f$ will classify correctly the  input point,  if the margin is large we can intuitively expect a confident prediction. 

For linear functions we can formalize this idea (see e.g. \cite{cortes1995support,steinwart2008support}) considering the problem of finding $w\in \R^d$ satisfying the set of inequalities
\be\label{linsep}
y_i\scal{w}{x_i}>0, \quad i=1, \dots, n.
\ee
If such a $w$ exists we say that the data $(x_1, y_1), \dots, (x_n,y_n)$ are linearly separable. 
Deriving necessary and sufficient conditions for the above inequalities to be feasible is not an elementary problem.  As shown below, it is easy to see that overparametrization ($n<d$), hence interpolation, will be a sufficient condition for linear separability. Since this is the relevant setting for us, from now on, we assume that the data are linearly separable:
\begin{assumption}[Linear separability]\label{assumptionseparate}
There exists some ${w}\in \R^d$ that separates the data, i.e. \begin{equation} y_{i}\scal{{w}}{x_{i}} >0 \qquad \forall i=1,...,n
\end{equation}
\end{assumption}

In general, also in this setting we can expect multiple solutions,  the classical way 
to approach the problem is to consider the so called max margin solution.
The linear margin $M:\R^d\to \R$ for a given dataset is defined as, 
\be\label{margin}
M(w)= \min_{i=1, \dots, n} y_i\scal{w}{x_i},
\ee
and, correspondingly, the  maximum (max) margin problem is defined as 
\begin{equation}\label{max_sphere}\tag{MM}
w_{+}=\argmax \{ M(w) ~ : ~ \norme{w}=1\}.
\end{equation}
We make a few observations.
First, the intuition underling the above problem is that among all separating solutions,
we  are interested into  one for which the margin is maximized.  Second, we note that without the unit norm constraint the problem of maximizing the margin \eqref{margin} is degenerate and one can obtain trivial solutions by rescaling arbitrarily any separating solution. Indeed, since the margin is scale invariant, the max margin becomes a direction problem, which naturally leads to considering the constrained problem \eqref{max_sphere} (see e.g. \cite{cortes1995support}).
Third, it is possible to show that the max margin problem has an equivalent formulation, that highlights the connection to minimal norm solutions.  Indeed, consider the problem
\begin{equation}\label{min_norm}\tag{MN}
w_\ast= \argmin_{w\in \R^d} \big\{\norme{w}~ : ~ y_{i}\scal{w}{x_{i}}\geq 1 ~, ~ \forall i\leq n \big\}.
\end{equation}
Then the following result holds.
\begin{lem}\label{lemmaequivalencemaxmin}
Problem~\eqref{max_sphere} is equivalent to Problem~\eqref{min_norm}. 
In particular, if $w_\ast$ is a solution of Problem~\eqref{min_norm} then 
$$w_+=\frac{w_\ast}{\nor{w_\ast}}$$ 
is a solution of Problem~\eqref{max_sphere} and $M(w_{+})=\frac{1}{\norme{w_{\ast}}}$. 
Moreover, if $w_+$ is the solution of Problem~\eqref{max_sphere} then 
$$w_\ast=\frac{w_+}{M(w_+)}$$ 
is a solution of Problem~\eqref{min_norm}. 
Further,  it holds that $M(w_{\ast})=1$
\end{lem}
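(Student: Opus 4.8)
The plan is to establish the equivalence by exhibiting the two explicit maps between solution sets stated in the lemma and checking they are mutually inverse (up to the scalings indicated). I would begin with a scaling observation: the margin $M$ in~\eqref{margin} is positively homogeneous of degree one, i.e. $M(tw)=tM(w)$ for $t\geq 0$, and by Assumption~\ref{assumptionseparate} there is a separating $\tilde w$ with $M(\tilde w)>0$, so that after rescaling the feasible set of~\eqref{min_norm} is nonempty and the optimal value $\norme{w_\ast}$ of~\eqref{min_norm} is a strictly positive finite number. Existence of $w_\ast$ follows since we are minimizing a coercive, continuous function over a nonempty closed convex set; existence of $w_+$ follows since $M$ is continuous and $\{\norme{w}=1\}$ is compact. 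These preliminaries let me speak of ``the'' solutions without circularity.

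Next I would prove the first direction. Let $w_\ast$ solve~\eqref{min_norm}; set $w_+=w_\ast/\norme{w_\ast}$, so $\norme{w_+}=1$. The constraints $y_i\scal{w_\ast}{x_i}\geq 1$ give $M(w_\ast)\geq 1$, hence $M(w_+)=M(w_\ast)/\norme{w_\ast}\geq 1/\norme{w_\ast}$. For the reverse inequality, take any $w$ with $\norme{w}=1$; I may assume $M(w)>0$ (otherwise $M(w)\le 1/\norme{w_\ast}$ trivially since the right side is positive). Then $w/M(w)$ is feasible for~\eqref{min_norm} because $y_i\scal{w/M(w)}{x_i}=y_i\scal{w}{x_i}/M(w)\geq 1$ for all $i$, so by optimality of $w_\ast$ we get $\norme{w_\ast}\leq \norme{w/M(w)}=1/M(w)$, i.e.\ $M(w)\leq 1/\norme{w_\ast}$. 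Taking the supremum over the unit sphere shows $w_+$ attains the max-margin value $1/\norme{w_\ast}$, so $w_+$ solves~\eqref{max_sphere} and $M(w_+)=1/\norme{w_\ast}$. This argument also yields $M(w_\ast)=\norme{w_\ast}M(w_+)=1$, giving the last claim.

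For the converse direction, let $w_+$ solve~\eqref{max_sphere}; since $\tilde w/\norme{\tilde w}$ is a competitor with positive margin we have $M(w_+)>0$, so $w_\ast:=w_+/M(w_+)$ is well defined and, as above, feasible for~\eqref{min_norm} with $M(w_\ast)=1$. To see it is optimal, suppose $w$ is any feasible point of~\eqref{min_norm}, so $M(w)\geq 1$ and in particular $w\neq 0$; then $w/\norme{w}$ lies on the unit sphere, and by optimality of $w_+$, $M(w_+)\geq M(w/\norme{w})=M(w)/\norme{w}\geq 1/\norme{w}$, hence $\norme{w}\geq 1/M(w_+)=\norme{w_\ast}$. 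Thus $w_\ast$ solves~\eqref{min_norm}. Finally I would remark that the two maps $w_\ast\mapsto w_\ast/\norme{w_\ast}$ and $w_+\mapsto w_+/M(w_+)$ are inverse to each other on the respective solution sets, using $M(w_\ast)=1$ and $\norme{w_+}=1$, which makes the word ``equivalent'' precise.

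I do not expect a serious obstacle here; the only point requiring a little care is handling the sign of $M(w)$ in the max-margin-to-min-norm comparison (a generic unit vector need not separate the data, so $M(w)$ can be nonpositive and one cannot blindly divide by it), and making sure the nonemptiness/finiteness preliminaries are in place so that the optimal values one divides by are genuinely positive. Both are dispatched by the homogeneity remark and Assumption~\ref{assumptionseparate}.
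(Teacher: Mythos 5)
Your proof is correct, and it takes a different route from the one in the paper. The paper (Lemma \ref{lemmaClassificationEquivalence} in the appendix) proves the equivalence by a chain of problem reformulations: it writes the margin as a slack maximization $\max\{\gamma>0 : y_i h_i(w)\geq\gamma,\ \norme{w}=1\}$, then performs successive changes of variables ($w=w'/\norme{w'}$, $\tilde\gamma=\gamma\norme{w'}$, $v=w'/\tilde\gamma$) exploiting positive $1$-homogeneity, until Problem~\eqref{max_sphere} is literally transformed into Problem~\eqref{min_norm}; the formulas $w_+=w_\ast/\norme{w_\ast}$ and $w_\ast=w_+/M(w_+)$ are then read off from the composed substitutions. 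You instead verify the two maps directly by mutual feasibility and optimality comparisons: $w/M(w)$ is feasible for~\eqref{min_norm} whenever $M(w)>0$, and $w/\norme{w}$ is a competitor for~\eqref{max_sphere} whenever $w$ is feasible for~\eqref{min_norm}. Both arguments rest on the same homogeneity of the margin, but yours is more elementary and more careful at the edges: you explicitly handle unit vectors with $M(w)\leq 0$ (where one cannot divide by the margin), and you establish nonemptiness and positivity of the optimal values before dividing by them, points the paper's slack reformulation passes over silently since restricting to $\gamma>0$ implicitly discards non-separating directions. The paper's version buys generality, since its appendix statement covers arbitrary positively $1$-homogeneous features $h_i$ rather than just the linear margin; your direct argument would extend to that setting with no change.
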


A few observations can be made.
The above lemma is a classical result in the theory of SVM. Indeed,  Problem~\eqref{min_norm} is called hard margin support vector machines  \cite{cortes1995support}. For sake of completeness we provide its proof (see Lemma \ref{lemmaClassificationEquivalence} in Appendix \ref{appendixa}). In terms of the constrained min-norm problem \eqref{min_norm}, Assumption \ref{assumptionseparate}, ensures that the feasible set is non-empty and hence such a solution $w_{\ast}$ (thus $w_{+}$) exists. In addition since \eqref{min_norm} can also be equivalently expressed as
	\begin{equation}\label{minorm}
		w_\ast= \argmin_{w\in \R^d} \left\{\frac{\norme{w}^{2}}{2}~ : ~ y_{i}\scal{w}{x_{i}}\geq 1 ~, ~ \forall i\leq n \right\},
	\end{equation} 
the solution $w_{\ast}$ (thus $w_{+}$) is unique, thanks to the strong convexity of the squared norm in \eqref{minorm}. Note that,from an optimization point of view, formulation \eqref{minorm} is more useful and will be used hereafter, instead of \eqref{min_norm}.

From a regularization perspective,  the above result shows that the max margin problem can  be see  as a minimum norm problem akin to Problem~\eqref{minimalnormsolution1} in linear inverse problems, but the linear equations are now replaced by inequalities. This can be made even more explicit noting that for binary valued outputs, it holds:
$$
y_{i}=\scal{w}{x_{i}}~\Leftrightarrow ~y_{i}\scal{w}{x_{i}}= 1
$$
This last expression also clarifies the earlier observation that interpolation implies separation, and that $n<d$ with linearly independent inputs is a sufficient condition for linear separability.

We next discuss how the ideas of explicit and implicit regularization can be adapted to the classification context. Note that in the following, in analogy to the regression case, we will say that a family of solutions has the regularization property if it converges to the max margin (min norm) solution \and is stable with respect to a noisy version of the true output $y$ (see the related discussion in paragraph \ref{section stability}).

\paragraph{Explicit regularization for classification.} 
To extend  Tikhonov regularization approach to classification, an appropriate loss function $\ell:\R\to [0,\infty)$ needs to be considered. As it turns out in classification, loss functions depend on the margin $yf(x)$,  rather than the difference $y-f(x)$ as in regression, and indeed are called margin loss functions. Some popular examples are the following:

\hspace{-6mm}\begin{minipage}{0.52\textwidth}
 	\begin{itemize}
		\item hinge loss : ~ \(\ell(a)=\abs{1-a}_{+}=\max\{0,1-a\}\)
		
		\item exponential loss : ~\(\ell(a)=e^{-a}\)
		
		\item logistic loss : ~\(\ell(a)=\ln(1+e^{-a})\)
	\end{itemize}
\end{minipage} \hfill
\begin{minipage}{0.48\textwidth}
	\begin{figure}[H]
		\includegraphics[trim=0mm 4mm 19cm 0mm, scale=0.8
		]{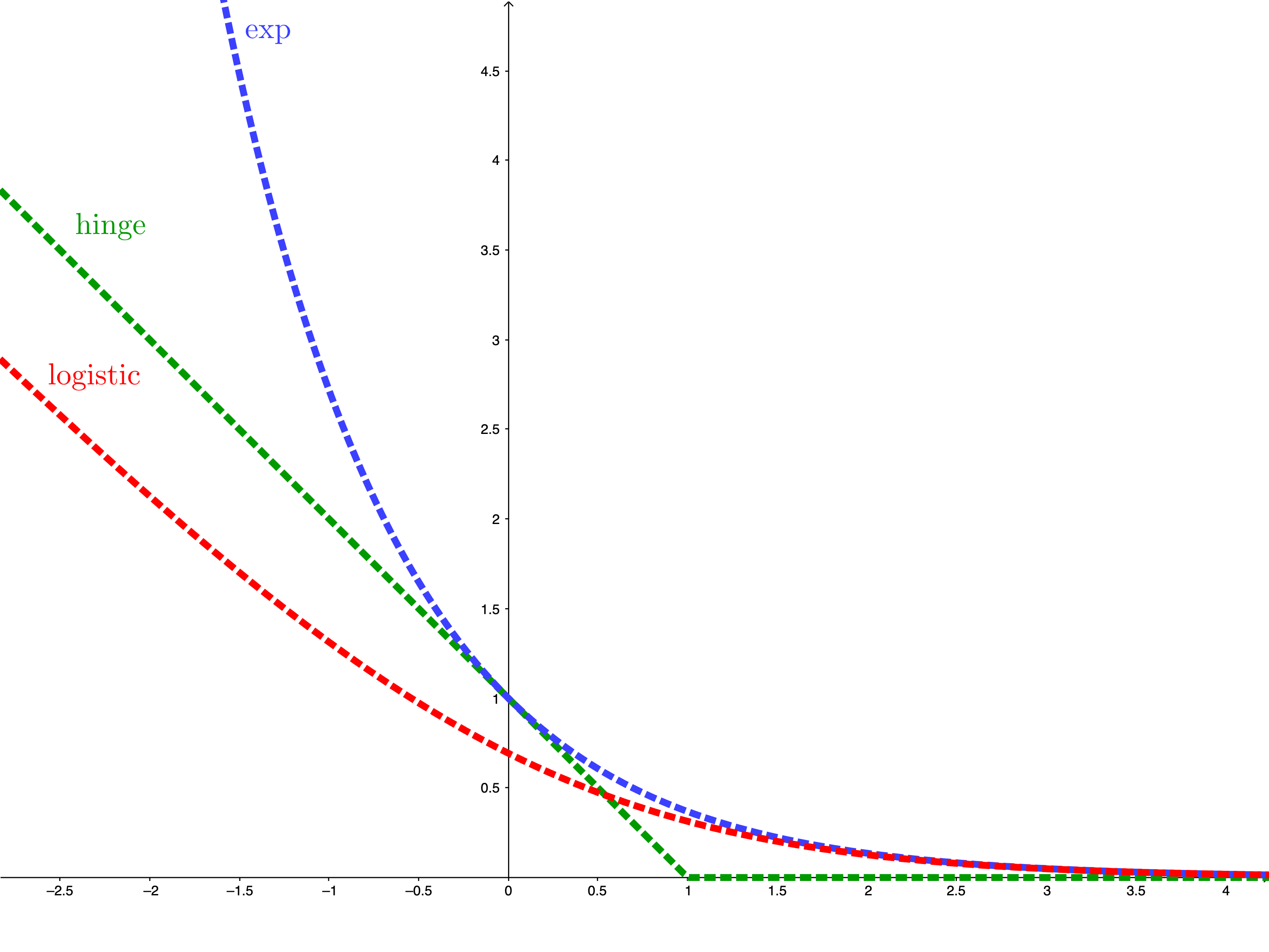}
		\caption{The exponential (blue), logistic (red) and hinge (green) loss function.}\label{Figurelosses}
	\end{figure}
\end{minipage} 

\vspace{5mm}

Given a convex  margin loss function, and considering again linear models, 
Tikhonov regularization in classification corresponds to 
\begin{equation}\label{tikhonovclass}
w_{\lambda}=\argmin_{w\in R^d}\sum_{i=1}^n 
\ell(y_i\scal{w}{x_i})
+\lambda \nor{w}^2,
\end{equation} 
for $\la>0$. It is natural to ask whether the above approach can be shown to be regularizing in the sense that the sequence $\{w_\la\}_{\la>0}$ converges to the minimum norm solution~\eqref{min_norm}. Indeed, for the hinge loss this can be proven, while in the cases of the exponential or logistic loss, where the set of minimizers is empty, one can show that $w_{\lambda}$ diverges (see e.g. Lemma \ref{tikhonovlemma} in Appendix \ref{appendixa}). Nevertheless under some suitable assumptions on the loss function, convergence in direction that is \[\frac{w_{\lambda}}{\norme{w_{\lambda}}} \underset{\lambda\to 0}{\to}\frac{w_{\ast}}{\norme{w_{\ast}}}=w_{+}\] has been proven for a wide class of margin loss functions (see e.g. \cite[Theorem $2.1$]{rosset2004margin} and \cite{hastie2004entire,rosset2004boosting}). Notice that according to the previous discussion and Lemma \ref{lemmaequivalencemaxmin}, since the max margin problem \eqref{max_sphere} is a direction problem, it is still relevant to consider convergence in direction. We next turn our attention to implicit regularization reviewing some recent results.

\paragraph{Implicit regularization for classification.}
Following the same reasoning as in regression for the minimal norm interpolating solution \eqref{minimalnormsolution1} and the gradient method \eqref{gdr1}, it is natural to ask whether it is possible to prove the regularization properties of the gradient iteration applied to a margin loss function, i.e.
\be\label{gdc}
 w_{t+1}= w_t-\gamma \sum_{i=1}^{n} y_ix_i\ell'(y_i\scal{w_t}{x_i}), \quad t= 0, \dots, T.
 \ee
 where $\gamma >0$ is a suitable step size.
The above iteration is well defined for smooth loss functions such as the exponential or the logistic loss. However, for losses like exponential or logistic which do not admit any minimizer, the gradient descent iteration in~\eqref{gdc} diverges. 
Interestingly, in a recent line of works (see \cite{soudry2018implicitGD,nacson2018convergence,ji2019implicit} and references therein), convergence in {\em}direction of gradient descent has been proved for these loss functions, i.e.
$$
\lim_{t\to \infty}\frac{w_t}{\nor{w_t}}= w_+ .
$$
In addition, rates of convergence for the normalized iterates $\norme{\frac{w_t}{\nor{w_t}} - w_+}$,  the angle gap  $1-\scal{\frac{w_{t}}{\norme{w_{t}}}}{w_{+}}$, and the normalized margin gap $M\left(\frac{w_{t}}{\norme{w_{t}}}\right)- M(w_{+})$ were also proved, but are very slow. For example for the logistic and exponential losses the margin rate is of order \(\grandO{\frac{1}{\log t}}\) (see \cite{soudry2018implicitGD}). Improved rates of order $\grandO{\frac{\log t}{\sqrt{t}}}$ were given recently considering a variable step-size gradient descent version in \cite{nacson2018convergence}. Similar works include also (accelerated) mirror descent approaches \cite{ji2020gradient,ji2021fast} that provide margin rates up to $\grandO{\frac{\log t}{t^{2}}}$.
Concerning  the hinge loss, the natural extension of the above idea is to consider a subgradient iteration 
\be\label{smc}
 w_{t+1}= w_t-\gamma_t \sum_{i=1}^{n} y_ix_i g_i(w_t), \quad t= 0, \dots, T.
 \ee
 for some sequence of step-sizes $(\gamma_t)_t$ and where $g_i(w_t) \in \partial \ell(y_i\scal{w_t}{x_i})$ is an element of the subgradient of the loss. In this case, the minimization problem 
 $$
\min_{w\in \R^d} \sum_{i=1}^{n} 
|1-y_i\scal{w_t}{x_i}|_+
$$
does have a solution and indeed the iteration in~\eqref{smc} converges to it. 
However, the solution minimizing the hinge loss error cannot be expected
to be the max margin (min norm) solution in general, and thus the subgradient iteration in \eqref{smc} does not a provide regularization properties.

Next, we focus on the hinge loss and provide two iterative regularization schemes via a diagonal principle. For ease of reading, we first present and describe the two main iterative methods and then the main convergence and stability results characterizing their regularization properties. 



\off{
The concept of diagonal process was considered in the context of inverse problem theory as an iterative regularization method (see e.g. \cite{auslender1987penalty,martinet1978perturbation,boyer1974quelques,bahraoui1994convergence,engl1996regularization,book} and the more recent ones \cite{garrigos2018iterative,calatroni2019accelerated}). The general principle consists in approaching a solution of the initial hierarchical optimization problem such as \eqref{min_norm} via an approximating scheme, by changing the regularization parameter along the algorithm's iterations. More precisely the proposed algorithms are designed for the penalized model \eqref{tikhonovclass} with the hinge loss, by setting $\lambda=\lambda_{t} \underset{t\to\infty}{\longrightarrow}0$  which is monotonically decreasing with respect to the number of iterations $t$. In this way we do not aim to solve each penalized problem  \eqref{tikhonovclass} separately for every fixed value of the regularization parameter $\lambda$, but instead approach "directly" the initial problem  \eqref{min_norm}. This procedure turns the number of iterations $t$ of the optimization method, as the only regularization parameter (since $\lambda_{t}$ depends on $t$), as also approximation precision.

To the best of our knowledge, in the context of classification and SVM problems we are not aware of papers dealing with diagonal regularization procedures, except the recent one \cite{molitor2020bias}, where the authors study an iterative (homotopic) subgradient method for solving the primal problem  \eqref{Tikhonovgeneral} for the hinge loss and deduce some convergence rates to the hard margin solution.

In this work instead we are interested in a family of diagonal regularization algorithms via the dual formulation of the penalized problem for the hinge loss \eqref{tikhonovclass} (see paragraph \ref{subsectionIterativereg}), in the spirit of \cite{garrigos2018iterative,calatroni2019accelerated}) in the context of inverse problems. As we shall see passing to the dual and exploiting the nice structure of the hinge loss, offers a lot of advantages, by allowing a large freedom on the choice of the decay rate of the regularization parameter $\lambda_{t}$, as also providing some improved convergence rates (see Theorems \ref{basicteoGD} and \ref{basicteoiGD}).
}

\off{
\section{Introduction}

Recently, a lot of attention has been made on implicit regularization abilities of optimization algorithms in machine learning problems. Oftentimes generalization properties of the training models, coincide with the ability of biasing towards a particular (regularized) solution of some empirical-risk optimization problem\footnote{among multiple existing ones in over-parametrized models.}. In terms of inverse problem theory, a classical way to enforce such selection principles/biases is done by adding explicitly (penalizing) an objective (or loss) function by a suitable regularization term (see for example \cite{engl1996regularization}).

On the other hand, it was observed that different optimization methods applied directly for the minimization problem of an objective loss function, can induce implicitly certain biases in various settings. We refer to these biases as implicit or iterative regularization properties of the algorithm. It turns out that sometimes implicit regularization might play an important role concerning the reduction of computational cost for approaching a solution with the desired properties, in comparison with the computation of a regularized path computed by explicit regularization methods.

Implicit regularization properties of gradient type methods have been studied a lot over the last years for regression-type problems (see for example \cite{yao2007early} for GD on least-square regression or \cite{pmlr-v48-lina16} for more general losses and \cite{Neubauer2016OnNA,pagliana2019implicit} for accelerated GD versions and \cite{hardt2016train,rosasco2015learning} for stochastic methods). 

On the other side, concerning problems associated with classification, where -roughly speaking- systems of equations are replaced by sets of inequalities, less is known and the implicit regularization properties of different optimization algorithms is a challenging question. To this direction the authors in a series of papers ( see \cite{gunasekar2018implicit,soudry2018implicitGD,nacson2018convergence}, as also \cite{telgarsky2013margins,ji2019implicit,ji2020gradient}) show that the Gradient Descent algorithm applied on some particular exponential-type losses\footnote{roughly speaking being  asymptotically between a threshold of exponentials (see Definition $2$ in \cite{soudry2018implicitGD}).}, without admitting any minimizers (such as the logistic or exponential loss), induces some implicit bias towards the direction of the maximum-margin solution for linear SVM  (equivalently the normalized hard-margin solution). These works shed some light on the (implicit) regularization properties of the Gradient-Descent algorithm for this particular family of exponential-type loss-functions. It is also worth mentioning that the exponential decay rate for loss-functions without any minimizer is a necessary condition for convergence -in direction- to a max-margin solution (see the associated counter examples in \cite{nacson2018convergence} and \cite{ji2020gradient} for loss-functions behaving like $\ell(a)=a^{-p}$, $p>1$).

Nevertheless for other popular choices of loss-functions, such as the hinge-loss, it is important to mention that the (sub)gradient descent fails to converge in general\footnote{depending on the initialization of the algorithm.} to the max-margin solution (see Remark $2$ in \cite{nacson2018convergence}). To the best of our knowledge concerning the hinge-loss, an explicit regularization scheme with a suitable vanishing regularization term is necessary to be considered, in order to assure convergence of the penalized path (see for example \cite{rosset2004margin,rosset2004boosting} or \cite{hush2006qp}and the Pegasos-schemes in \cite{shalev2011pegasos} and the references therein).  

In this work we focus on an iterative (implicit) regularization procedure in the context of binary classification by using the hinge loss. In particular we consider an alternative way of performing implicit regularization, namely a diagonal regularization procedure via the dual formulation of the penalized hinge-loss with a time-depending regularization parameter, for solving linear SVM problems. Iterative regularization (or diagonal) methods were considered in the context of inverse problem theory (see for example \cite{engl1996regularization}, \cite{book} and \cite{bahraoui1994convergence}, or the more recent ones \cite{garrigos2018iterative,calatroni2019accelerated}.
While explicit regularization techniques aim to solve a regularized (penalized) problem for each fixed regularization parameter $\lambda$, in iterative (diagonal) regularization methods consider a time-varying regularization parameter $\lambda_{t}$ converging to zero. 
In this way the number of iterations turns into the only regularization parameter and together with the optimization scheme rules the convergence to the max-margin solution.

A similar approach was considered recently in \cite{molitor2020bias}, where the authors analyzed the behavior of an iterative regularization (homotopic) subgradient method for optimizing the penalized hinge loss. Their method consists of a "warm-up" diagonal procedure with a piece-wise constant regularization parameter depending on the number of iterations (see Algorithm $1$ in \cite{molitor2020bias}). They deduced some explicit convergence rates both for the margin and angle gap, as also for the associated iterates to a min-norm solution which are approximately of order $\grandO{t^{-\frac{1}{6}}}$, where $t$ is the number of overall iterations.

\subsection{Contributions}

In this paper we study a family of (inertial) iterative regularization methods for the hinge loss, via its dual formulation, for solving SVM problems. Inspired by techniques used in optimization literature and in particular from inverse problem theory (see \cite{engl1996regularization}, \cite{bahraoui1994convergence}, \cite{garrigos2018iterative}, \cite{calatroni2019accelerated}), we propose and analyze two  diagonal schemes (see Algorithms \ref{algodualprojGD} and \ref{algodualinertialGD} in section \ref{sectionprel}). In addition we provide some improved convergence rates for the error of the iterates to the min-norm solution, as well as for the margin gap, which are of order $\grandO{t^{-\frac{1}{2}}}$ (or $\grandO{t^{-1}}$ for the associated inertial version) where $t$ is the number of iterations. Thanks to the dual formulation, these methods are Kernel-adaptive and all the results hold also true for SVM problems expressed via feature mapping (see Paragraph \ref{subsectionKernel}).
While the choice of decay rate of the regularization parameter $\lambda_{t}$ can often be a challenging issue, in order to assure convergence of the diagonal penalized problem to the initial bi-level optimization problem, we shall see that the nice structure of the (dual) hinge loss, allows a lot of freedom on this choice. Finally we illustrate numerically our results and compare them with some of the related ones in the existing literature, on some simple synthetic data-set.

\subsection{Organization}
The paper is organized as follows. In Section \ref{sectionprel} we define the the problem setting and present some of the basic tools that we use in the forthcoming analysis, as also the proposed dual diagonal algorithms. In Section \ref{sectionmain}, we give the main results concerning the convergence rates of these methods. Section \ref{sectionconv} contains the basic elements for the convergence results. All the detailed proofs of these results can be found in Appendix \ref{appendixa}. Finally in Section \ref{sectionnum} we illustrate the convergence behavior of the proposed dual diagonal schemes on some simple synthetic data-set and compare with some other methods in the literature.  

}

\section{Iterative regularization for hinge loss via diagonal descent}\label{sectionprel}

In this section,  we present  two  iterative regularization approaches based on the hinge loss. 
The first is given in  Algorithm \ref{algodualprojGD}, and the second  in Algorithm \ref{algodualinertialGD}   corresponds to a practically faster variant.



\begin{algorithm}[H]
	\caption{Projected iterative GD on the dual}\label{algodualprojGD}
	Let $\{\lambda_{t}\}_{t\geq0}$ be a decreasing-to-zero sequence of positive numbers, $u_{0}\in [-\lambda_{0}^{-1},0]^{n}$ and $0<\gamma\leq \norme{XX^\top}_{\text{op}}^{-1}$. For all $t\geq 0$, consider $g_{t}=(g_{t}^{i})_{i=1,...,n}$, $u_{t}=(u_{t}^{i})_{i=1,...,n}$ and $w_{t}=(w_{t}^{l})_{l=1,...,d}$, such that for all $i=1,...,n$ and $l=1,...,d$ :
	\begin{align}
		g_{t}^{i}& = 
		\off{ 
			\big(Id-\gamma XX^\top\big)u_{t} =
		}
		u_{t}^{i}-\gamma\sum_{j=1}^{n}y_{i}y_{j}\scal{x_{i}}{x_{j}}u_{t}^{j} \\
		u_{t+1}^{i}&=
	\begin{cases}
			-\frac{1}{\lambda_{t}}  & \text{ if }  g_{t}^i<\gamma-\frac{1}{\lambda_t}\\
			g_{t}^i-\gamma &  \text{ if } g_{t}^i\in[\gamma-\frac{1}{\lambda_t},\gamma]\\
			0 &  \text{ if } g_{t}^i>\gamma
		\end{cases}   \label{utdualGD}  \\
		w_{t+1}^{l} & = -\sum_{i=1}^{n}y_{i}u_{t+1}^{i}x_{i}^{l}
	\end{align}
\end{algorithm}


\begin{algorithm}[H]
	\caption{Projected iterative i-GD on the dual}\label{algodualinertialGD}
	Let $\{\lambda_{t}\}_{t\geq0}$ a decreasing-to-zero sequence of positive numbers, $u_{0}=u_{1}\in [-\lambda_{0}^{-1},0]^n$ and $0<\gamma\leq \norme{XX^\top}_{\text{op}}^{-1}$. For all $t\geq 1$, consider $q_{t}=(q_{t}^{i})_{i=1,...,n}$, $g_{t}=(g_{t}^{i})_{i=1,...,n}$, $u_{t}=(u_{t}^{i})_{i=1,...,n}$ and $w_{t}=(w_{t}^{l})_{l=1,...,d}$, such that for all $i=1,...,n$ and $l=1,...,d$ :
	\begin{align}
		q_{t}^{i} & = u_{t}^{i} +\alpha_{t}(u_{t}^{i}-u_{t-1}^{i}) \quad , \quad \alpha_{t}=\frac{t}{t+\alpha} \\
		g_{t}^{i}& = \off{
			\big(Id-\gamma XX^\top\big)q_{t} =
		} 
		q_{t}^{i}-\gamma\sum_{j=1}^{n}y_{i}y_{j}\scal{x_{i}}{x_{j}}q_{t}^{j} \\
		u_{t+1}^{i}&=\off{
			\prox_{\frac{\gamma}{\lambda_{t}}\mathcal{L}^{\ast}(\lambda_{t}\cdot)}\big(g_{t}\big) =\frac{1}{\lambda_{t}}\prox_{\gamma\lambda_{t} \mathcal{L}^{\ast}}\big(\lambda_{t} g_{t}\big) \label{inertialprojevaluationt+1} \\
			&= \frac{1}{\lambda_{t}}\biggl(\prox_{\gamma\lambda_{t} \ell^{\ast}}\big(\lambda_{t} g^{i}_{t}\big)\biggr)_{i\leq n}
			=}
		\begin{cases}
			-\frac{1}{\lambda_t} & \text{ if } g_{t}^i<\gamma-\frac{1}{\lambda_t}\\
			g_{t}^i-\gamma & \text{ if } g_{t}^i\in[\gamma-\frac{1}{\lambda_t},\gamma]\\
			0 & \text{ if } g_{t}^i>\gamma
		\end{cases}  \label{utdualiGD}  \\
		w_{t+1}^{l} & =-\sum_{i=1}^{n}y_{i}u_{t+1}^{i}x_{i}^{l}
	\end{align}
\end{algorithm}

We add a few comments before providing a detailed derivation of the two procedures above. 
In both Algorithms \ref{algodualprojGD} and \ref{algodualinertialGD}, $\gamma$ denotes a constant step-size and $\lambda_{t}$ a  vanishing-with-iterations  parameter.  Both procedures are based on  simple and easy to implement iterations  that require only vector multiplication and thresholding operations. 
 The sequence $\{u_{t}\}_{t\geq0}$ represents a dual variable and is computed coordinate-wise via a simple projection operation (see \eqref{utdualGD} and \eqref{utdualiGD}), while $g_{t}$ corresponds to a classical gradient (forward) step related to the square norm \({\norme{\cdot}^2}/{2}\). 
We note that the difference between the two schemes is that in Algorithm \ref{algodualinertialGD} the sequence $g_{t}$ is computed via the auxiliary sequence $q_{t}$ instead of $u_{t}$.  The sequence $q_{t}$ equals to $u_{t}$ extrapolated by the  term $\alpha_{t}(u_{t}-u_{t-1})$, called inertial term,  which plays an important role on the convergence speed of Algorithm \ref{algodualinertialGD}. Finally, $\{w_{t}\}_{t>0}$ corresponds to the primal sequence designed to approximate the min-norm solution $w_{\ast}$. 
In the next section, we discuss the derivation of the above two procedures.

\subsection{Diagonal methods via dual hinge loss}\label{subsectionIterativereg}

Algorithms \ref{algodualprojGD} and \ref{algodualinertialGD}  are based on a so called 
 diagonal regularization process \cite{bahraoui1994convergence} 
 applied to a suitably defined dual  problem.  We next describe these ideas in some detail.
 We note in passing that, while diagonal approaches have been considered before for inverse problems \cite{auslender1987penalty,martinet1978perturbation,boyer1974quelques,bahraoui1994convergence,garrigos2018iterative,calatroni2019accelerated}),  
 we are not aware of their application to classification.


The basic idea of diagonal approaches can perhaps be better explained recalling 
that the penalized functional such as~\eqref{tikhonovclass} converges to minimal norm separating solution~\eqref{minorm}, as the  regularization parameter goes to zero. The idea is to start by considering an optimization procedure for solving Problem~\eqref{tikhonovclass}, to then modify it by letting the regularization parameter decrease at each iteration. 
In this way, the obtained iteration no longer converges to a solution of Problem~\eqref{tikhonovclass},  but rather directly to the minimal norm separating solution~\eqref{minorm}.  To derive Algorithms  \ref{algodualprojGD} and \ref{algodualinertialGD} this basic idea is actually applied to the dual formulation of Problem~\eqref{tikhonovclass}.  

%
%

To describe the above ideas more precisely, we begin rewriting  Problem~\eqref{tikhonovclass} for the special case of hinge loss $\ell(\cdot)=\hi{1-\cdot}$. In order to ease the notation, we  define the matrix $Z=\text{diag}(Y)X$ and consequently $Z=(z_{i})_{i\leq n}$, with $z_{i}=(y_ix_{i})_{i\leq n}\in \R^{n\times d}$, we set  $\mathcal{L} : \R^{n}\to\R$, with $\mathcal{L}(u)=\sum_{i=1}^{n}\ell(u_i)$, and we get that solving Problem~\eqref{tikhonovclass} is equivalent to solve
\begin{equation}\label{generalReghinge}
	\underset{w\in\R^d}{\min} \frac{1}{\lambda}\mathcal{L}(Zw) + \frac{\norme{w}^2}{2}.
\end{equation}
The objective function in~\eqref{generalReghinge} is the sum of a smooth term, the squared norm, and a convex nonsmooth one $\frac{1}{\lambda}\mathcal{L}\circ Z$. 
Problems such as \eqref{generalReghinge} are called structured composite convex minimization problems, and can be often solved efficiently by  proximal-gradient methods \cite{combettes2005signal}, a class of first order methods splitting the contribution of the smooth and the nonsmooth part. At every iteration, the smooth part is activated through a gradient step, while for the nondifferentiable one the computation of the proximity operator is required. 
In order to implement a proximal gradient algorithm to solve problem ~\eqref{generalReghinge} we would need the computation of the proximity operator of $\mathcal{L}\circ Z$, which is not available in closed form and may be computationally expensive.  Therefore,  we consider the dual problem (see for example \cite[Definition $15.10$]{bauschke2011convex}) associated to \eqref{generalReghinge} (which is equivalent to \eqref{min_norm}) which is given by 
\begin{equation}\label{dualminproblem}
	\underset{u\in \R^{n}}{\min} \frac{\norme{Z^{\top}u}^2}{2} +\frac{1}{\lambda}\mathcal{L}^{\ast}(\la u), 
\end{equation}
where $\mathcal{L}^{\ast}$ denotes the Fenchel conjugate of $\mathcal{L}$ defined in Section~\ref{sec:notation}.  Its computation is simplified by the fact that 
$\mathcal{L}$ is a sum of separable functions, and can be written as  $\mathcal{L}^{\ast}(u)=\sum_{i=1}^{n}\ell^{\ast}(u^{i})= \sum_{i=1}^{n}u^{i} + \iota_{[-1,0]}(u^{i})$, 
which implies 
\begin{equation}
\label{eq:ellstar}
\frac{1}{\lambda}\mathcal{L}^{\ast}(\la u)=\sum_{i=1}^{n}u^{i} + \iota_{[-1/\lambda,0]}(u^{i}).
\end{equation}
 It is also important to write the dual problem associated to min-norm problem \ref{minorm} which is given by 
\begin{equation}\label{dualconstrainedmin}
	\min_{u\in \R^n} D_{\infty}(u)=\frac{\norme{Z^{\top}u}^{2}}{2}+\sum_{i=1}^{n}u^{i}+\iota_{(-\infty,0]^{n}}(u).
\end{equation}
Indeed, it is possible to show  (see Lemma \ref{remarkdecreasingDt}) that,  for $\lambda\to 0$,  the dual regularized Problem \eqref{dualminproblem} converges point-wise to Problem~\eqref{dualconstrainedmin}. To pass from convergence properties in the dual space to  the primal space,  recall that, if strong duality holds (see e.g. \cite[Section $15.2$]{bauschke2011convex}), the value of problem~\eqref{dualminproblem} is the same as the value of \eqref{generalReghinge} and,  for every $\lambda>0$, one can recover a solution $w_{\lambda}$ of the primal problem \eqref{generalReghinge}, from a solution $u_{\lambda}$ of the dual problem \eqref{dualminproblem} via the formula \cite[Section $19$]{bauschke2011convex}, 
\begin{equation}\label{dualtoprimalformula}
	w_{\lambda}=-Z^{\top}u_{\lambda}. 
\end{equation}
Problem~\ref{dualminproblem} is another composite convex optimization problem, where  \(\frac{\norme{Z^{\top}\cdot}}{2}\), has a  $\norme{XX^{\top}}_{\text{op}}$-Lipschitz gradient,  and  $\mathcal{L}^{\ast}$ is the nonsmooth part, of which  the proximity operator can be easily computed, as we show below. We can then implement a diagonal proximal gradient iteration on the dual function defined in \eqref{dualminproblem} as follows.
For a given a starting point $u_{0}$,  some step-size $\gamma>0$ and a decreasing sequence $\lambda_{t}$, the (diagonal) proximal gradient iteration corresponding to Problem \eqref{dualminproblem}  is given by, 
\begin{equation}\label{proxiteration}
	u_{t+1}=\prox_{\frac{\gamma}{\lambda_{t}}\mathcal{L}^{\ast}(\lambda_{t}\cdot)}\big(u_{t}-\gamma ZZ^{\top}u_{t}\big).
\end{equation}

%

We add several comments.  First, if $\la_t$ is taken to be constant then the above iteration solves Problem~\eqref{dualminproblem}, if the stepsize $\gamma\leq \norme{XX^\top}_{\text{op}}^{-1}$. Following, the previous discussion, by letting $\la_t$ decrease at each iteration  we obtain a diagonal process solving Problem \eqref{dualconstrainedmin}. 
Second,
the  computation of the proximity operator is simplified by the fact that 
$\mathcal{L}^{\ast}(\lambda_t\cdot)$ is a sum of separable functions, as can be seen from \eqref{eq:ellstar}.
Indeed, this allows to compute the proximal operator component-wise
$$
\prox_{\mathcal{L}^{\ast}(\lambda_{t}\cdot)}(u)=\big(\prox_{\ell^{\ast}(\lambda_{t}\cdot)}(u^{i})\big)_{i\leq n},
$$ 
and  derive the following iteration 
\begin{equation}\label{projevaluationt+1} 
	u_{t+1}= \biggl(\prox_{\frac{\gamma}{\lambda_{t}} \ell^{\ast}(\lambda_t\cdot)}\big(\left(u_{t}-\gamma ZZ^{\top}u_{t}\right)_{i}\big) \biggr)_{i\leq n}.
\end{equation}

%
Note that  the proximal operator of $\ell^{\ast}$, can be computed in closed form. Indeed, for any $\gamma,\lambda\in\mathbb{R}$, $p\in \R$,
\begin{equation}\label{computationprox}
	\begin{aligned}
		\prox_{\frac{\gamma}{\lambda} \ell^{\ast}(\lambda\cdot)}(p)&=\argmin_{s\in \R} \big\{s+\iota_{[-1/\lambda,0]}(s)+\frac{\norme{s-p}^{2}}{2\gamma}\big\} \\
		&=\proj_{[-1/\lambda,0]}\bigl(p-\gamma\bigr)=\begin{cases}
			-1/\lambda & p<\gamma-1/\lambda\\
			p-\gamma & p\in[\gamma-1/\lambda,\gamma]\\
			0 & p>\gamma.
		\end{cases}
	\end{aligned}
\end{equation}

Finally,  putting together all the above observations we  derive Algorithm \ref{algodualprojGD}. 
Algorithm \ref{algodualinertialGD} is derived  by considering a classic variation  of the proximal gradient iteration in Algorithm \ref{algodualprojGD}, namely a so called inertial step.
The latter corresponds to replacing $u_{t}$ in \eqref{projevaluationt+1} with $q_{t}=u_{t}+\frac{t}{t+\alpha}(u_{t}-u_{t-1})$. For both Algorithm \ref{algodualprojGD} and Algorithm \ref{algodualinertialGD} the last step corresponds to the dual-to-primal update $w_{t+1}=-Z^{\top}u_{t+1}$.

\begin{rem}\label{remarkfeatures}
Notice that Algorithms \ref{algodualprojGD} and \ref{algodualinertialGD} can be obtained by replacing $Z$ with $\text{diag}(Y)X$. In fact, our analysis also extends to considering general linearly parametrized models of the form $f_{w}(x)=\text{sign}(\scal{w}{\phi(x)})$, where $\phi : \R^{d}\to \mathcal{H}$ denotes some feature mapping (possibly infinite dimensional) which may be specified explicitly, or via some kernel operator, i.e. $K(x_{i},x_{j})=\scal{\phi(x_{i})}{\phi(x_{j})}$\footnote{The use of this feature-mapping $\phi$ allows to consider non-linear classifiers for possibly non-linearly separable data (see for example \cite{cortes1995support,steinwart2008support}).}. If we set $\phi(X)=(\phi(x_i))_{i\leq n}$, the penalized hinge-loss problem associated to the minimal norm separating problem and its corresponding dual are:
	\begin{align}
				&\underset{w\in \mathcal{H}}{\min}\left\{ \frac{1}{\lambda}\sum_{i=1}^{n}\ell(\scal{\phi(x_{i})}{w})+\frac{\norme{w}^2}{2}\right\} \label{featuresprimal}\\
				&\underset{u \in \R^n}{\min}\left\{ \sum_{i=1}^{n}\frac{\ell^{\ast}(\lambda u^{i})}{\lambda}+\frac{\norme{\phi(X)^{\top}u}^2}{2}= \sum_{i=1}^{n}\frac{\ell^{\ast}(\lambda u^{i})}{\lambda} + \frac{\sum_{i,j=1}^{n}u^{i}K(x_{i},x_{j})u_{j}}{2}\right\} \label{featuresdual}
			\end{align}
Since Algorithms \ref{algodualprojGD} and \ref{algodualinertialGD} are designed from the dual problem \eqref{featuresdual}, the information needed for the dual update $u_{t+1}$ is only the kernel evaluation at each data-point and not the one of $\phi(X)$ (see e.g. \cite{steinwart2008support}). In this case Algorithms \ref{algodualprojGD} and \ref{algodualinertialGD} can be rewritten by replacing $ZZ^{\top}$, with the operator $K$. Finally while the primal update $w_{t}$ may not be computable, the predictor can be still computed as $f_{w_{t}}(x)=\scal{w_{t}}{\phi(x)}=\scal{K(x_{i},x)}{u_{t}}$ via the dual iterates and the kernel evaluation.

\end{rem}

We end this section with two remarks commenting on some related literature and then analyze the properties of Algorithm \ref{algodualprojGD} and Algorithm \ref{algodualinertialGD} in the next section. 

\begin{rem}[Implicit regularization via homotopic subgradient]
Another  implicit regularization approach for the hinge loss was recently studied  in \cite{molitor2020bias} and derived using  an homotopic subgradient method for the primal problem \eqref{generalReghinge}. Our approach, in contrast, is based on a diagonal process on the dual problem \eqref{dualminproblem} and, as discussed later, leads to faster convergence rates.
\end{rem}

\begin{rem}[Hard-SVM]
The dual formulation \eqref{dualminproblem} is the one used for solving linear SVM problems (see \cite{steinwart2008support}). Indeed tackling the max-margin problem via its dual formulation \eqref{dualminproblem} is  popular, due to its favorable structure and there is a very rich literature on methods to solve it (see e.g. interior point-methods (\cite{boyd2004convex,ferris2002interior}) or decomposition methods (see e.g. \cite{cortes1995support,steinwart2008support}, \cite{ferris2002interior,ZANGHIRATI2003535} and \cite{platt1999fast,articleKeerthi,inproceedingsdorronsoro}. Compared to this methods our diagonal approach enjoys good theoretical guarantees  while providing a direct link to regularization methods. 
\end{rem}

\off{
	It is worth mentioning that the dual formulation \eqref{dualminproblem} was the first one used for solving linear SVM problems (see \cite{cortes1995support,steinwart2008support}). Indeed tackling the max-margin problem via its dual formulation \eqref{dualminproblem} is very popular, due to its nice structure and there is a very rich literature of methods exploited in order to solve it (Interior point-methods (\cite{boyd2004convex}, \cite{ferris2002interior}) or decomposition methods (to cite but a few \cite{cortes1995support,steinwart2008support}, \cite{ferris2002interior,ZANGHIRATI2003535} and \cite{platt1999fast,articleKeerthi,inproceedingsdorronsoro})).
	
	Nevertheless, both for primal and dual setting, it is important to keep in mind that these methods provide convergence guarantees and rates to the regularized solution $w_{\lambda}$ (resp. $u_{\lambda}$) of \eqref{generalReghinge} (resp. \eqref{dualminproblem}) and not to the solution of the initial min-norm problem \eqref{min_norm}. Even if we know that the path $\{w_{\lambda}\}_{\lambda>0}$ converges to $w_{\ast}$ as $\lambda\to0$, in practice it is not easy to estimate a "good" value for $\lambda$ for which $w_{\lambda}$ is a "good" approximation of $w_{\ast}$. This often results in solving several times the optimization problem \eqref{generalReghinge} or \eqref{dualminproblem} for different values of regularization parameter $\lambda$, which may lead to a significant increase of computational cost or time. In what follows we are considering a diagonal optimization methods for \eqref{generalReghinge}, where the regularization parameter $\lambda$ will depend on each iteration.  
}

\section{Main results and convergence analysis}\label{sectionmain}

In this section we present and discuss the main results of this work,  deferring the associated proofs to the Appendix. Before stating the main theorems, we discuss a key property of the sequence of dual problems. In particular, there exist some positive constants $\mu$, $M$ and $R$, such that, for all $t\geq 0$, each of the dual functions $D_{t}(\cdot)=\frac{1}{\lambda_{t}}\mathcal{L}^{\ast}(\lambda_{t}\cdot) + \frac{1}{2}\norme{Z^{\top}\cdot}^{2}$ related to problem \eqref{dualminproblem} satisfies the $\mu$-\L ojasiewicz condition in $[D_{t}\leq \min D_{t} + M] \cap \mathbb{B}(\mathbf{0},R)$, i.e. for all $u\in [D_{t}\leq \min D_{t} + M] \cap \mathbb{B}(\mathbf{0},R)$ and $t\geq 0$, it holds
	\begin{equation}\label{PLinequality}
		D_{t}(u) -\min D_{t} \leq \frac{1}{2\mu}\text{dist}\left(\partial D_{t}(u),0\right)^2,
	\end{equation}
as we will show in Lemma~\ref{lemmaPL}.  The previous condition is  a relaxation of  strong convexity, and it is well-known to imply linear convergence of the standard Forward-Backward scheme (see e.g. \cite{Loja63,bolte2017error,garrigos2017convergence,Karimi2016} and references therein). Theorem~\eqref{basicteoGD} extends these classical results to the diagonal setting. The value of  $\mu$ is crucial, since it determines the constant appearing in the linear convergence bound in \eqref{eq: basicteoGD}. As can be seen from \eqref{PLinequality}, $\mu$ is independent from $t$, and an explicit expression for $\mu$ can be given by
	\begin{equation}\label{eq:mu}
	\mu =  \frac{1}{8\tau^{2}\left( \left(3\sqrt{\norme{X^{\top}u_{0}}^{2}-\norme{X^{\top}u_{\ast}}^2 +\scal{\mathbf{1}}{u_{0}-u_{\ast}}} + \sqrt{2}\norme{X}_{op}\left(\norme{u_{0}}+2\norme{u_{\ast}}\right)\right)^{2} +2\right) },
	\end{equation}
	where $\tau$ is the Hoffman constant (see e.g. \cite{hoffman1952approximate,guler2010foundations} for a  definition) of a system of linear inequalities and equalities describing the set of minimizers of the dual objective function $D_t$. The explicit computation of this constant is expensive, but an expression is available in closed form, and is given by (see e.g. \cite[Lemma $15$]{wang2014iteration}): 
\begin{equation}\label{hoffmansconstant}
	\tau=\underset{(u,v) \in\R^{2n}\times\R^{d+1} }{\sup}\left\{ \left\lVert \begin{split}
		u \\ v
	\end{split}\right\rVert ~ : ~ \begin{split} &\norme{A^{\top}u+E^{\top}v}=1,~u\geq 0. \text{ The rows of $A$, $E$}  \\
&\text{ corresponding to non-zero components}\\ &\text{ of $u$ and $v$ are linearly independent.}
\end{split}\right\}
\end{equation}	
where $A=\begin{bmatrix}
 \text{Id}_{n} \\ -\text{Id}_{n}
\end{bmatrix}\in \R^{(2n)\times n}$ and $E=\begin{bmatrix}
Z^{\top} \\ \mathbf{1}^{\top}
\end{bmatrix}\in \R^{(d+1)\times n}$.
	
\off{
\begin{equation}\label{hoffmansconstant}
    \tau=\underset{B\in \mathcal{B}}{\max}\frac{1}{\sigma_{min}(B)},
\end{equation}
where $\mathcal{B}$ consists of the set of all matrices which are formed by taking the linearly independent rows of the matrix $E=\begin{bmatrix}
    X^{\top} \\ \mathbf{1}^{\top} \\ \text{Id}_{n} \\ -\text{Id}_{n}
\end{bmatrix}\in \R^{(d+1+2n)\times n}$.
}

We are now ready to state the main results. In Theorem \ref{basicteoGD}  we state the convergence results for the sequence generated by Algorithm \ref{algodualprojGD} to the minimal norm separating solution $w_{\ast}$ (see \eqref{min_norm}), and in Theorem \ref{basicteoiGD} we state the convergence results for the inertial Algorithm \ref{basicteoiGD}.  

\begin{theorem}\label{basicteoGD}
	Let $w_{\ast}$ the solution of \eqref{min_norm}, and $\{u_{t}\}_{t\geq 0}$ and $\{w_{t}\}_{t\geq 0}$ the sequences generated by Algorithm \ref{algodualprojGD}. Then $\{w_{t}\}_{t\geq 0}$ converges to $w_{\ast}$.
	In addition, if $u_{\ast}$ is a solution of the associated dual problem \eqref{dualconstrainedmin} and $\lambda_{0}\leq \nor{u_{\ast}}^{-1} $, then for all $t\geq 1$, the following estimate holds true:
	\begin{equation}\label{eq: basicteoGD}
		\norme{w_{t}-w_{\ast}}\leq C\left(1-\frac{\gamma\mu}{1+\gamma\mu}\right)^{\frac{t}{2}}
	\end{equation} 
	where $C=\sqrt{\norme{X^{\top}u_{0}}^{2}-\norme{X^{\top}u_{\ast}}^2 +2\scal{\mathbf{1}}{u_{0}-u_{\ast}}}$ and $\mu$ is defined in \eqref{eq:mu}.
	In addition, there exists some $t^{\ast}>0$, such that for all $t\geq t^{\ast}$, the following rates hold true for the angle and the margin gap (respectively):
	\begin{equation}\label{anglegapGD} 
		1-\frac{\scal{w_{t}}{w_{\ast}}}{\norme{w_{t}}\norme{w_{\ast}}}\leq\frac{C^2}{\norme{w_{\ast}}^{2}}\left(1-\frac{\gamma\mu}{1+\gamma\mu}\right)^{t}
	\end{equation}
	\begin{equation}\label{margingapGD} 
		M\left(\frac{w_{\ast}}{\norme{w_{\ast}}}\right)-M\left(\frac{w_{t}}{\norme{w_{t}}}\right)\leq\frac{2C\norme{X}_{F}}{\norme{w_{\ast}}^{2}}\left(1-\frac{\gamma\mu}{1+\gamma\mu}\right)^{\frac{t}{2}}
	\end{equation}
\end{theorem}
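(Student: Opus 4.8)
The plan is to run the whole argument in the dual, where the objectives $D_{t}(u)=\frac{1}{\lambda_{t}}\mathcal L^{\ast}(\lambda_{t}u)+\frac12\norme{X^{\top}u}^{2}$ are explicit, and only pass to the primal at the end via $w_{t}=-X^{\top}u_{t}$ (cf. \eqref{dualtoprimalformula}). The key preliminary observation is that the hypothesis $\lambda_{0}\le\norme{u_{\ast}}^{-1}$ makes $u_{\ast}$ a \emph{common} minimizer of all the regularized dual problems: indeed $\abs{u_{\ast}^{i}}\le\norme{u_{\ast}}\le\lambda_{0}^{-1}\le\lambda_{t}^{-1}$ gives $u_{\ast}\in[-\lambda_{t}^{-1},0]^{n}$, so $u_{\ast}$ is feasible for $D_{t}$ and there $D_{t}(u_{\ast})=\scal{\mathbf 1}{u_{\ast}}+\frac12\norme{X^{\top}u_{\ast}}^{2}=D_{\infty}(u_{\ast})$; since moreover the boxes $[-\lambda_{t}^{-1},0]^{n}$ shrink inside $(-\infty,0]^{n}$ we have $D_{t}\ge D_{\infty}$ pointwise and $D_{t+1}\le D_{t}$ (as $\lambda_{t}\downarrow$), whence $\min D_{t}=\min D_{\infty}$ and $u_{\ast}\in\argmin D_{t}$ for every $t$. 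By \eqref{dualtoprimalformula} and uniqueness of the min-norm solution (Lemma~\ref{lemmaequivalencemaxmin}, \eqref{minorm}) this also reads $-X^{\top}u_{\ast}=w_{\ast}$.

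Next I would set up a transfer inequality from dual suboptimality to the primal error. Since $g(u)=\frac12\norme{X^{\top}u}^{2}$ is quadratic and the nonsmooth part of $D_{t}$ is convex, the elementary Bregman computation at a minimizer $u_{\ast}$ of $D_{t}$ gives $\frac12\norme{X^{\top}(u-u_{\ast})}^{2}\le D_{t}(u)-\min D_{t}$, so that $\norme{w_{t}-w_{\ast}}^{2}=\norme{X^{\top}(u_{t}-u_{\ast})}^{2}\le 2\big(D_{t}(u_{t})-\min D_{t}\big)=:2\delta_{t}$, and it remains to show $\delta_{t}$ decays geometrically. For this I combine the forward–backward structure of Algorithm~\ref{algodualprojGD} with the uniform \L ojasiewicz inequality \eqref{PLinequality} of Lemma~\ref{lemmaPL}: for $\gamma\le\norme{XX^{\top}}_{op}^{-1}$ one has the sufficient decrease $D_{t}(u_{t+1})\le D_{t}(u_{t})-\frac{1}{2\gamma}\norme{u_{t+1}-u_{t}}^{2}$ and the subgradient estimate $\text{dist}\big(0,\partial D_{t}(u_{t+1})\big)\le\frac{2}{\gamma}\norme{u_{t+1}-u_{t}}$; plugging both into \eqref{PLinequality} yields the one–step contraction $D_{t}(u_{t+1})-\min D_{t}\le\big(1-\frac{\gamma\mu}{1+\gamma\mu}\big)\big(D_{t}(u_{t})-\min D_{t}\big)$ with $\mu$ as in \eqref{eq:mu}. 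Using $D_{t+1}\le D_{t}$ and $\min D_{t+1}=\min D_{t}$ from the first paragraph, $\delta_{t+1}=D_{t+1}(u_{t+1})-\min D_{t+1}\le D_{t}(u_{t+1})-\min D_{t}\le\big(1-\frac{\gamma\mu}{1+\gamma\mu}\big)\delta_{t}$, hence $\delta_{t}\le\big(1-\frac{\gamma\mu}{1+\gamma\mu}\big)^{t}\delta_{0}$ with $\delta_{0}=D_{0}(u_{0})-\min D_{\infty}$ computed in closed form from $u_{0},u_{\ast},w_{0},w_{\ast}$ (this is the constant $C$ of \eqref{eq: basicteoGD}, up to routine bookkeeping). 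Together with the transfer inequality this proves \eqref{eq: basicteoGD}; the unconditional statement $w_{t}\to w_{\ast}$ follows since $\lambda_{t}\downarrow 0$ eventually forces $\lambda_{t}\le\norme{u_{\ast}}^{-1}$, after which the same argument applies, the early iterates being controlled by monotonicity of $t\mapsto D_{t}(u_{t})$ together with the pointwise convergence $D_{t}\to D_{\infty}$ (Lemma~\ref{remarkdecreasingDt}).

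For the angle gap \eqref{anglegapGD} I would use $\norme{w_{t}}\norme{w_{\ast}}-\scal{w_{t}}{w_{\ast}}\le\frac12\norme{w_{t}-w_{\ast}}^{2}$ (AM–GM on $\norme{w_{t}}^{2}+\norme{w_{\ast}}^{2}$), giving $1-\frac{\scal{w_{t}}{w_{\ast}}}{\norme{w_{t}}\norme{w_{\ast}}}\le\frac{\norme{w_{t}-w_{\ast}}^{2}}{2\norme{w_{t}}\norme{w_{\ast}}}$; since $w_{t}\to w_{\ast}$ there is $t^{\ast}$ with $\norme{w_{t}}\ge\frac12\norme{w_{\ast}}$ for $t\ge t^{\ast}$, and then \eqref{eq: basicteoGD} yields the quadratic-in-$\rho$ rate. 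For the margin gap \eqref{margingapGD} I would use that $M(w)=\min_{i}\scal{w}{x_{i}}$ (after the $x_{i}\leftarrow y_{i}x_{i}$ relabelling) is $\norme{X}_{F}$-Lipschitz because $\max_{i}\norme{x_{i}}\le\norme{X}_{F}$, combined with $\big\lVert\frac{w_{t}}{\norme{w_{t}}}-\frac{w_{\ast}}{\norme{w_{\ast}}}\big\rVert\le\frac{2}{\norme{w_{t}}}\norme{w_{t}-w_{\ast}}$ and again $\norme{w_{t}}\ge\frac12\norme{w_{\ast}}$ for $t\ge t^{\ast}$; being linear in $\norme{w_{t}-w_{\ast}}$ it inherits the $\rho^{t/2}$ rate.

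The main obstacle is not this skeleton but Lemma~\ref{lemmaPL}: establishing \eqref{PLinequality} with a constant $\mu$ \emph{independent of $t$} on a region $[D_{t}\le\min D_{t}+M]\cap\mathbb B(\mathbf 0,R)$ that actually contains every iterate. Two facts must be pinned down: (i) $D_{t}(u_{t})-\min D_{t}\le\delta_{0}$ for all $t$, which follows from the monotonicity $D_{t+1}(u_{t+1})\le D_{t}(u_{t+1})\le D_{t}(u_{t})$; and (ii) $\norme{u_{t}}\le\norme{u_{0}}+2\norme{u_{\ast}}$, which follows from the Fej\'er inequality $\norme{u_{t+1}-u_{\ast}}\le\norme{u_{t}-u_{\ast}}$ valid because $u_{\ast}\in\argmin D_{t}$ and $\gamma\le\norme{XX^{\top}}_{op}^{-1}$ — precisely the quantities appearing in \eqref{eq:mu}. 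The value of $\mu$ itself comes from a Hoffman-type error bound for the polyhedral description of $\argmin D_{t}$ (hence the Hoffman constant $\tau$ in \eqref{eq:mu}), and one must check that the $t$-dependence entering through the box constraint is harmless — for $t$ large the box faces are inactive near $\argmin D_{\infty}$, so a single Hoffman constant governs all $D_{t}$.
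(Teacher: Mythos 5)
Your proposal is correct and follows essentially the same route as the paper: pass to the dual, note that $\lambda_0\le\norme{u_\ast}^{-1}$ makes $u_\ast$ a common minimizer of all the $D_t$, combine the forward--backward sufficient decrease with the uniform Hoffman-based \L ojasiewicz inequality of Lemma~\ref{lemmaPL} to get geometric decay of $D_t(u_t)-D_\infty(u_\ast)$ (Proposition~\ref{ratesdualenergy}), transfer to the primal via the strong convexity of $\frac12\norme{X^\top\cdot}^2$ (Lemma~\ref{lemmadualprimalsequence}), and deduce the angle and margin rates from $\norme{w_t}\ge\frac12\norme{w_\ast}$ for $t\ge t^\ast$. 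The only quibble is quantitative: your bound $\text{dist}\big(0,\partial D_t(u_{t+1})\big)\le\frac{2}{\gamma}\norme{u_{t+1}-u_t}$ would give the contraction factor $1-\frac{\gamma\mu}{4+\gamma\mu}$ rather than the stated $1-\frac{\gamma\mu}{1+\gamma\mu}$; the paper obtains the sharper constant $\frac{1}{\gamma}$ by writing the optimality condition as $(\text{Id}-\gamma XX^{\top})(u_t-u_{t+1})\in\gamma\,\partial D_t(u_{t+1})$ and using nonexpansiveness of $\text{Id}-\gamma XX^{\top}$ for $\gamma\le\norme{XX^{\top}}_{op}^{-1}$.
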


\begin{theorem}\label{basicteoiGD}
	Let $w_{\ast}$ the solution of \eqref{min_norm} and  $\{u_{t}\}_{t\geq 0}$ and $\{w_{t}\}_{t\geq 0}$ the sequences generated by Algorithm \ref{algodualinertialGD} with $\alpha\geq 3$. Then $\{w_{t}\}_{t\geq 0}$ converges to $w_{\ast}$. In particular, if $u_{\ast}$ is a solution of the associated dual problem \eqref{dualconstrainedmin} and $\lambda_{0}\leq \nor{u_{\ast}}^{-1} $, then for all $t\geq 1$, the following estimate holds true:
	\begin{equation}\label{eq: basicteoiGD}
		\norme{w_{t}-w_{\ast}}\leq \frac{C}{t+\alpha-1} 
	\end{equation} 
	where \(~C=(\alpha-1)\Big(\norme{X^{\top}u_{0}}^{2}-\norme{X^{\top}u_{\ast}}^2 +2\scal{\mathbf{1}}{u_{0}-u_{\ast}}+\frac{\norme{u_{0}-u_{\ast}}^2}{\gamma}\Big)^{1/2} \).

In addition, there exists some $t^{\ast}>0$, such that for all $t\geq t^{\ast}$, the following rates hold true for the angle and the margin gap (respectively):
	\begin{equation}\label{anglegapiGD}  
		1-\frac{\scal{w_{t}}{w_{\ast}}}{\norme{w_{t}}\norme{w_{\ast}}}\leq \frac{C^{2}}{ \norme{w_{\ast}}^{2}t^2}
	\end{equation}
	\begin{equation}\label{margingapiGD} 
		M\Big(\frac{w_{\ast}}{\norme{w_{\ast}}}\Big)-M\Big(\frac{w_{t}}{\norme{w_{t}}}\Big)\leq \frac{2C\norme{Z}_{F}}{\norme{w_{\ast}}t}
	\end{equation}
\end{theorem}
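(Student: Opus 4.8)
The plan is to work entirely on the dual side, where Algorithm~\ref{algodualinertialGD} is exactly an inertial (FISTA/Nesterov--type) proximal--gradient method run on the \emph{decreasing} family of dual functions $D_t(u)=\tfrac1{\lambda_t}\mathcal L^{\ast}(\lambda_t u)+\tfrac12\norme{X^{\top}u}^2$, and then to transfer the resulting estimate back to the primal iterate $w_t=-X^{\top}u_t$. Two structural facts carry most of the weight. First, since $\lambda_t$ is decreasing we have $[-1/\lambda_t,0]^n\subseteq[-1/\lambda_{t+1},0]^n$, so the update \eqref{utdualiGD} keeps $u_t\in[-1/\lambda_t,0]^n$; on the common feasible set $D_t\equiv D_{\infty}$ and $D_{t+1}\le D_t$ pointwise with $\min D_t\downarrow\min D_{\infty}$ (Lemma~\ref{remarkdecreasingDt}); moreover the hypothesis $\lambda_0\le\norme{u_{\ast}}^{-1}$ forces $u_{\ast}\in[-1/\lambda_t,0]^n$ for every $t$, so $D_t(u_{\ast})=D_{\infty}(u_{\ast})=\min D_{\infty}=\min D_t$, i.e. $u_{\ast}$ is a common minimizer of all the $D_t$. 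Second, writing $D_{\infty}=q+h$ with $q(u)=\tfrac12\norme{X^{\top}u}^2$ a convex quadratic (so $q(u)=q(u_{\ast})+\scal{XX^{\top}u_{\ast}}{u-u_{\ast}}+\tfrac12\norme{X^{\top}(u-u_{\ast})}^2$) and $h(u)=\scal{\mathbf{1}}{u}+\iota_{(-\infty,0]^n}(u)$ convex, first--order optimality $-XX^{\top}u_{\ast}\in\partial h(u_{\ast})$ gives the ``strong convexity relative to $X^{\top}$'' bound
\[
D_{\infty}(u)-\min D_{\infty}\ \ge\ \tfrac12\norme{X^{\top}(u-u_{\ast})}^2\qquad\text{for all feasible }u.
\]
Evaluating this at $u=u_t$ and using $D_{\infty}(u_t)=D_t(u_t)$ yields the key link $\norme{w_t-w_{\ast}}^2=\norme{X^{\top}(u_t-u_{\ast})}^2\le 2\bigl(D_t(u_t)-\min D_{\infty}\bigr)$.

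It then remains to bound the dual suboptimality $D_t(u_t)-\min D_{\infty}$. Here I would run the standard accelerated Lyapunov analysis, adapted to a decreasing family of functions: a functional of the form $\mathcal E_t=2(t+\alpha-1)^2\bigl(D_t(u_t)-\min D_{\infty}\bigr)+\tfrac1{\gamma}\norme{(\alpha-1)(u_t-u_{\ast})+t(u_t-u_{t-1})}^2$ (up to the exact index offsets matching $\alpha_t=t/(t+\alpha)$) is non-increasing, with $\gamma\le\norme{XX^{\top}}_{\mathrm{op}}^{-1}$ supplying the descent inequality for the smooth part and $\alpha\ge3$ making the energy admissible; crucially, the only way the index $t$ enters the functions is through $D_{t+1}\le D_t$ and $D_{t+1}(u_{\ast})=D_t(u_{\ast})$, so the ``diagonal perturbation'' can simply be discarded. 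Evaluating $\mathcal E_0$ with $u_0=u_1\in[-1/\lambda_0,0]^n$ (so $D_0(u_0)=\tfrac12\norme{w_0}^2+\scal{\mathbf{1}}{u_0}$, $\min D_{\infty}=\tfrac12\norme{w_{\ast}}^2+\scal{\mathbf{1}}{u_{\ast}}$, whence $2\bigl(D_0(u_0)-\min D_{\infty}\bigr)\le\norme{w_0}^2-\norme{w_{\ast}}^2+2\norme{u_0-u_{\ast}}_1$) gives $2\bigl(D_t(u_t)-\min D_{\infty}\bigr)\le C^2/(t+\alpha-1)^2$ with exactly the constant $C$ of the statement, and combining with the key link above proves \eqref{eq: basicteoiGD}. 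For the unconditional convergence $w_t\to w_{\ast}$ (no assumption on $\lambda_0$), I would note that $\lambda_t\to0$ makes $u_{\ast}$ feasible from some index on, after which the same argument forces $\norme{w_t-w_{\ast}}\to0$.

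Finally the angle and margin rates are elementary consequences of \eqref{eq: basicteoiGD}, valid once $t\ge t^{\ast}$, the threshold being chosen so that $\norme{w_t}\ge\norme{w_{\ast}}/2$ (possible since $\norme{w_t}\to\norme{w_{\ast}}$). For the angle gap, the identity $1-\tfrac{\scal{w_t}{w_{\ast}}}{\norme{w_t}\norme{w_{\ast}}}=\tfrac{\norme{w_t-w_{\ast}}^2-(\norme{w_t}-\norme{w_{\ast}})^2}{2\norme{w_t}\norme{w_{\ast}}}\le\tfrac{\norme{w_t-w_{\ast}}^2}{2\norme{w_t}\norme{w_{\ast}}}\le\tfrac{\norme{w_t-w_{\ast}}^2}{\norme{w_{\ast}}^2}$, combined with \eqref{eq: basicteoiGD} and $(t+\alpha-1)\ge t$, gives \eqref{anglegapiGD}. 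For the margin gap, using $M(w_{\ast})=1$ and $M(w_{\ast}/\norme{w_{\ast}})=1/\norme{w_{\ast}}$ (Lemma~\ref{lemmaequivalencemaxmin}), the $\norme{X}_F$--Lipschitz continuity of $w\mapsto M(w)=\min_i\scal{w}{x_i}$ (since $\max_i\norme{x_i}\le\norme{X}_F$), and the elementary inequality $1=M(w_{\ast})\le\norme{w_{\ast}}\max_i\norme{x_i}\le\norme{w_{\ast}}\norme{X}_F$, one gets
\[
M\!\Big(\tfrac{w_{\ast}}{\norme{w_{\ast}}}\Big)-M\!\Big(\tfrac{w_t}{\norme{w_t}}\Big)\ \le\ \frac{\norme{w_t-w_{\ast}}\,\bigl(1+\norme{w_{\ast}}\norme{X}_F\bigr)}{\norme{w_{\ast}}\bigl(\norme{w_{\ast}}+\norme{w_t-w_{\ast}}\bigr)}\ \le\ \frac{2\norme{X}_F}{\norme{w_{\ast}}}\,\norme{w_t-w_{\ast}},
\]
and \eqref{eq: basicteoiGD} yields \eqref{margingapiGD}. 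The main obstacle is the second step: obtaining the \emph{accelerated} $O(1/t^2)$ dual estimate with an explicit constant while the underlying function changes with $t$ — i.e. pushing the classical FISTA/Nesterov Lyapunov argument through the diagonal perturbation and verifying it is monotone (because the feasible sets grow) — which I would isolate as a lemma in the spirit of \cite{calatroni2019accelerated}, the analogue for Algorithm~\ref{algodualinertialGD} of the \L ojasiewicz-based lemma used for Theorem~\ref{basicteoGD}; everything else is bookkeeping.
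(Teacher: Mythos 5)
Your proposal is correct and follows essentially the same route as the paper: an accelerated Lyapunov (energy) argument on the decreasing family of dual functions $D_t$ — exploiting $D_{t+1}\le D_t$ and $D_t(u_\ast)=D_\infty(u_\ast)$ under $\lambda_0\le\norme{u_\ast}^{-1}$ to discard the diagonal perturbation, exactly as in Proposition~\ref{ratesdualenergyinertial} — followed by the primal transfer $\tfrac12\norme{w_t-w_\ast}^2\le D_t(u_t)-D_\infty(u_\ast)$ (Lemma~\ref{lemmadualprimalsequence}) and the reduction of the angle and margin gaps to $\norme{w_t-w_\ast}$ once $\norme{w_t}\ge\norme{w_\ast}/2$. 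The only deviations are cosmetic (you re-derive the primal--dual and margin-Lipschitz bounds inline instead of invoking Lemmas~\ref{lemmadualprimalsequence} and~\ref{implicationrates}), and the Lyapunov functional, constants, and thresholds you identify match the paper's.
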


We add several remarks discussing the results in Theorems \ref{basicteoGD} and \ref{basicteoiGD} before comparing them to some recent related works and deriving their proofs.

\begin{rem}
	In  Theorem~\ref{basicteoGD} we  derived  the linear convergence of the sequence $\{w_{t}\}_{t\geq 0}$ thanks to condition \eqref{PLinequality}, as discussed above. 
	Even if the inertial version should give better convergence than the basic iteration, Theorem~\ref{basicteoiGD}  provides only a sublinear rate of convergence. 
	We believe this is due to technical, rather than fundamental reasons.  The numerical results in Section \ref{sectionnum} suggest that inertial variants can indeed provide faster convergence, but
	proving linear  rates of convergence for inertial variants is a challenging question and is an active area of research in the optimization literature, see e.g. the discussion in  \cite{garrigos2017convergence,apidopoulos2020convergence}. 
\end{rem}

\begin{rem}[Error metrics]
	Theorems \ref{basicteoGD} and \ref{basicteoiGD} provide  rates of convergence for the distance of the iterates \(w_{t}\) to the minimal norm solution, as well as the angle gap and the margin gap of the  normalized iterates \(\frac{w_{t}}{\norme{w_{t}}}\) to the max-margin solution \(w_{+}=\frac{w_{\ast}}{\norme{w_{\ast}}}\) , for Algorithms \ref{algodualprojGD} and \ref{algodualinertialGD} (respectively). As mentioned in Section \ref{section3}, since  the original max-margin problem \eqref{max_sphere} is a direction problem \cite{cortes1995support,steinwart2008support}, the margin and the angle gap are relevant quantities to measure the performance of the proposed methods, see \cite{nacson2018convergence,soudry2018implicitGD,molitor2020bias,ji2019implicit,ji2020gradient,ji2021fast}.
\end{rem}

\begin{rem}[Parameter choice]
	Both in Theorems \ref{basicteoGD} and \ref{basicteoiGD}, the requirement \(\lambda_{0} \leq \norme{u_{\ast}}^{-1}\), where $u_{\ast}\in \argmin D_{\infty}$ allows to  deduce the bounds\eqref{eq: basicteoGD} and \eqref{eq: basicteoiGD}, for all $t\geq 0$. If this condition is not verified the estimates in Theorems \ref{basicteoGD} and \ref{basicteoiGD} still hold true asymptotically due to the decreasing property of $\lambda_{t}$. In addition, one can freely choose the decay rate-to-zero of $\lambda_{t}$. In Section~\ref{sectionnum} we numerically evaluate the impact of different choices of $(\lambda_t) $ on the performance of the method.
\end{rem}

\subsection{Comparison to other convergence results for implicit regularization in classification}

We next compare the convergence results of Theorems \ref{basicteoGD} and \ref{basicteoiGD}, with existing  results in the related literature.  We begin by noting that Theorems \ref{basicteoGD} and \ref{basicteoiGD} provide  improved rates compared  to those for classical perceptron variants \cite{novikoff1963convergence} which are of order $\grandO{{1}/{\sqrt{t}}}$, see for example \cite[Theorem $4$]{ramdas2016towards}.  Margin rates 
similar to those in \eqref{margingapGD} and \eqref{margingapiGD}  have been derived for other optimization procedures applied to different losses and regularizers. For the iterates generated by gradient descent applied to exponentialy-tailed losses (such as logistic or exponential loss) a margin rate of order $\grandO{{1}/{\log(t)}}$ is derived in \cite{nacson2018convergence,soudry2018implicitGD,ji2019implicit}).  For the iterates of the  same algorithm with adaptive step size variants, the margin rates are of the order $\grandO{{\log(t)}/{\sqrt{t}}}$ (\cite[Theorem $5$]{nacson2018convergence}) or $\grandO{{1}/{\sqrt{t}}}$  \cite{clarkson2012sublinear,ramdas2016towards}. In all these cases, the rates are worse than the ones we obtain in  \eqref{margingapGD} and \eqref{margingapiGD}. The rates for the margin in Theorem \ref{basicteoiGD} for Algorithm \ref{algodualinertialGD} match the ones in \cite{ji2020gradient} (see Theorem $7$). They are slightly worse than those found in \cite[Theorem $3.1$]{ji2021fast} which are of order $\grandO{{\ln t}/{t^{2}}}$, and are obtained considering a mirror-descent method on the smoothed margin for the exponential and logistic loss \cite{lyu2019gradient,ji2020gradient,ji2021fast}.  
Finally, we compare to the results in \cite{molitor2020bias} considering  a different   implicit regularization approach, based on the use of a  homotopic subgradient method to minimize  the primal penalized hinge loss. The rates given in Theorems \ref{basicteoGD} and \ref{basicteoiGD} are considerably better than the ones  in \cite{molitor2020bias}, which are approximately of order $\grandO{t^{-\frac{1}{6}}}$ (see \cite[Corollary $1$, Lemma $2$]{molitor2020bias}).
None of the existing results provides linear rates, as the ones we derive for Algorithm \ref{algodualprojGD}.

\subsection{Stability}\label{section stability}

In Theorems \ref{basicteoGD} {and \ref{basicteoiGD}} we established the regularization properties of Algorithms \ref{algodualprojGD} {and \ref{algodualinertialGD}} in the sense of convergence to the minimal norm separating solution \eqref{min_norm} corresponding to  the true labels $Y=(y_{i})_{i\leq n}$.
In practice, labels are typically corrupted by noise and regularization methods should provide stable solutions. In this section,  we study the stability of Algorithm \ref{algodualprojGD} introducing a suitable notion of label noise (analogous results could be derived also for Algorithm \ref{algodualinertialGD} and are let for future study). 

For classical inverse problems noise is measured with respect to some norm in the data space. 
In the context of classification, a possible noise model is to consider a fraction of labels to be wrong 
\cite{angluin1988learning,kearns1992toward}.  
Since the data are binary valued, a natural way to measure the discrepancy between correct $Y=(y_{i})_{i\leq n}$ outputs and mislabeled outputs $\tilde{Y}=(\tilde{y}_{i})_{i\leq n}$ is to assume 
there exists $0\le N\le n/2$ such that 
\be\label{cnoise}
d_H (Y, \tilde Y )= N
\ee
where $d_H$ is the Hamming distance defined as,
\be
d_H (Y, \tilde Y)= \sum_{i : y_i\neq \tilde{y_i}} 1 
\ee
The Hamming distance replaces the norm in the data space to quantify the label noise.
Here, $N$ can be seen as the noise level.  The constraint $N\le n/2$ is natural  since higher values would correspond to simply renaming the classes. The case $N=0$ corresponds to noiseless data, that is where no mislabelling is present.  Note that, Assumption~\ref{cnoise} implies that there is a set of indices $S_{N}\subset{\{1,\dots,n\}}$ with cardinality $N$,
such that 
 \begin{equation}
\tilde{y}_{i}y_{i}=-1 ~, \quad  \text{ for all } i\in S_{N}.
\end{equation}

%
%

In the above setting, if $\tilde{w}(N)$ is a solution obtained using  labels with noise level $N$,
 then the goal is to derive error estimates  with respect to the true solution $w_{\ast}$ in terms of  $N$.  The error estimates should decrease in $N$ so that the correct solution is recovered as the noise decreases. In the case of  the iterative regularization procedure defined by Algorithm \ref{algodualprojGD},  this requires specifying a suitable choice for the stopping time. The following theorem provides such a choice and the corresponding error estimate. 
%
%
%

\begin{theorem}[Stability]\label{basicteostability}
	Let $w_{\ast}$ and $u_{\ast}$ be the solutions of problems \eqref{min_norm} and \eqref{dualconstrainedmin} respectively and $K=\underset{1\leq i\leq n}{\max}\{\norme{x_{i}}\}$. 
	Let  $\tilde{Y}=(\tilde{y}_{i})_{1\leq i\leq n}$ be a vector of  noisy outputs satisfying Assumption~\eqref{cnoise}.
	Finally, let $\{\tilde{u}_{t}\}_{t\geq 0}$ and $\{\tilde{w}_{t}\}_{t\geq 0}$ the sequences generated by Algorithm \ref{algodualprojGD} applied to the data $X=(x_{i})_{1\leq i\leq n}$ and $\tilde{Y}=(\tilde{y}_{i})_{1\leq i\leq n}$, with $\lambda_{0}\leq \nor{u_{\ast}}^{-1} $. Let $\rho=\frac{\gamma\mu}{1+\gamma\mu}$, with $\mu$ as defined in \eqref{eq:mu}.  Then for all $t\geq 1$, the following estimate holds true:
	\begin{equation}\label{optimal stability before}
	\norme{\tilde{w}_{t}-w_{\ast}} \leq C_{1}\sqrt{2(n+1-2N)N}t +  C_{2}\sqrt{N} +C(1-\rho)^{\frac{t}{2}}
	\end{equation}
In particular, for the  stopping time $t_{\ast}(N):= \max\left\{1 ,  \frac{2}{\ln\left(\frac{1}{1-\rho}\right)}\ln\left(\frac{C\ln\left(\frac{1}{1-\rho}\right)}{2C_{1}\sqrt{2(n+1-2N)N}}\right)\right\}$, \off{(here one could replace the constant $\frac{C\ln\left(\frac{1}{1-\rho}\right)}{2C_{1}}$) with some larger constant, in order to allow "larger" choices for the stopping time, which however would lead to a worst bound in terms of constant in $(5.14)$} the following bound holds,
\begin{equation}\label{optimal bound stability}
\begin{aligned}
\norme{\tilde{w}_{t_{\ast}(N)}-w_{\ast}} & \leq \frac{2C_{1}\sqrt{2(n+1-2N)N}}{\ln\left(\frac{1}{1-\rho}\right)}\ln\left(\frac{C\ln\left(\frac{1}{1-\rho}\right)}{C_{1}\sqrt{2(n+1-2N)N}}\right) +C_{2}\sqrt{N} +\frac{2C_{1}\sqrt{2(n+1-2N)N}}{\ln\left(\frac{1}{1-\rho}\right)} \\ 
&  = \grandO{\sqrt{N}}
\end{aligned}
\end{equation}	
where $C_{1}=2\sqrt{2}K^{2}\left(\norme{u_{0}-u_{\ast}}+\norme{u_{\ast}}\right)$, $C_{2}=2K\left(\norme{u_{0}-u_{\ast}}+\norme{u_{\ast}}\right)$, 	$C=\sqrt{\norme{X^{\top}u_{0}}^2-\norme{X^{\top}u_{\ast}}^2 +2\scal{\mathbf{1}}{u_{0}-u_{\ast}}}$.
\end{theorem}
The proof of the above result can be found in Section \ref{subsectionproofstability} of Appendix \ref{appendixa}. 
We add two remarks to discuss the above result.

\begin{rem}[Stopping time and stability]
The above result shows that the best stopping time choice arises from a trade-off between stability and convergence. More precisely,  the error estimate in \eqref{optimal stability before} is composed of  three terms.  The first two terms are related to the stability of the algorithm and  are increasing along the iterations due to the presence of the label noise. The last term is related to the convergence of the algorithm already analyzed in Theorem \ref{basicteoGD} in the absence of noise. The best stopping time is derived optimizing  the  bound \eqref{optimal stability before}. In this sense, this is an a priori choice. Deriving appropriate a posteriori choice is an interesting question left to a future study. Here,  we note that   the optimal stopping time is larger when the noise level is smaller, whereas the corresponding error decreases with respect to the noise. 
Another interesting question would be to derive corresponding lower bounds. 
\end{rem}

\begin{rem}\label{remark noise test}[Noise model]
The classification noise model considered above is simple and inspired by the  classic deterministic noise in inverse problems. It allows to take a first  step towards understanding the stability property of iterative regularization in classification.
We note that other, possibly more complex, noise models  are possible. 
For example, stochastic noise could be considered, possibly considering so called margin conditions \cite{massart2006risk}.  A more substantial development would be to consider random input data,  as often done in machine learning. This is likely to require results from empirical process theory \cite{boucheron} and possibly different statistical notions of stability already  used in machine learning \cite{bousquetstability}.

\end{rem}

\section{Numerical results}\label{sectionnum}

In this section,  we investigate numerically the properties of Algorithms \ref{algodualprojGD} and \ref{algodualinertialGD}. First, we analyze their  convergence and  stability on some synthetic datasets. Second, we study their performance on two benchmark datasets, and  compare them to  some recent related works.

\subsection{Synthetic data-set}

Following \cite{soudry2018implicitGD} and \cite{molitor2020bias}, we consider a solution vector $w_{\ast}=\left(\frac{1}{2},\frac{1}{2}\right)$ defining the maximal margin separator $f(x)=\scal{w_{\ast}}{x}$ and two pairs of support vectors $x_{1}=(\frac{1}{2},\frac{3}{2})$, $x_{2}=(\frac{3}{2},\frac{1}{2})$ labeled with $y_{1}=y_{2}=1$ and $x_{3}=(-\frac{1}{2},-\frac{3}{2})$, $x_{4}=(-\frac{1}{2},-\frac{3}{2})$ labeled with $y_3=y_4=-1$. We then generate  $80$ data-points and assign them to the  two classes, so that  the support vectors do not change, i.e. we  have a larger distance from $f(x)=\scal{w_{\ast}}{x}$ than the points $x_{1}$, $x_{2}$, $x_{3}$ and $x_{4}$, see Figure \ref{Figure1}. 

We test Algorithms \ref{algodualprojGD} and \ref{algodualinertialGD} for $T=1000$ iteration with regularization parameter \(\lambda_{t}=\frac{4}{t}\) for all $t\leq T$. In Figure \ref{Figure2} we illustrate the convergence results in terms of the margin and angle gap, and the error of the difference $w_{t}-w_{\ast}$ (i.e. $\norme{w_{t}-w_{\ast}}$), as found in Theorems \ref{basicteoGD} and \eqref{basicteoiGD}.

In this toy example we can notice that while the theoretical worst case bounds for Algorithm \ref{algodualprojGD} are better than the ones for \ref{algodualinertialGD} as expressed in Theorems \ref{basicteoGD} and \ref{basicteoiGD}, this is not necessarily reflected in Figure \ref{Figure1}. This is due to the pessimistic worst case bound found in Theorem \ref{basicteoiGD} for the inertial Algorithm \ref{algodualinertialGD}, rather than a numerical issue. Indeed this mismatch between theory and practice for inertial methods is commonly observed in similar settings and is an active area of research which is let for future study.  A second remark on this example concerns the influence of the over-relaxation parameter $\alpha$ for the convergence behavior of Algorithm \ref{algodualinertialGD}. As one can notice in Figure \ref{Figure1}, the choice of $\alpha$ can highly affect the performance of Algorithm \ref{algodualinertialGD} in relation with the stopping time.  This observation rises an interesting question about the tuning of the parameter $\alpha$ which is let for future study (see also discussion in \cite{apidopoulos2020convergence}).

\begin{figure}[H]
\begin{center}
 \includegraphics[trim=1.2cm 96mm 30mm 9.5cm, scale=0.36]{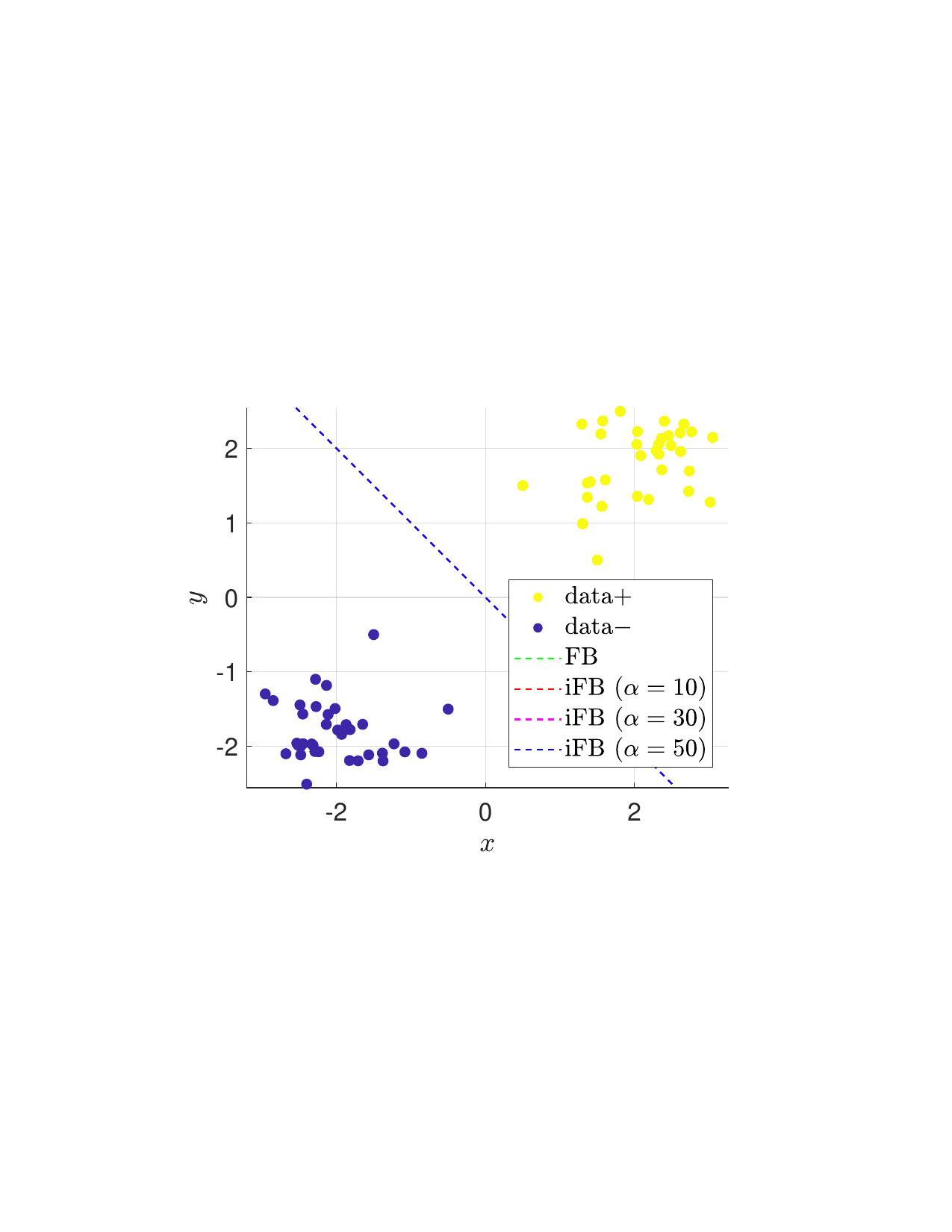}
\end{center}
\caption{Data-set consisting of $80$ labeled points with given support vector-points \(\pm(\frac{1}{2},\frac{3}{2})\) and \(\pm(\frac{3}{2},\frac{1}{2})\).
	In dashed lines the  (overlapping) max-margin separating hyperplanes formed by the last iterate of every scheme (Algorithms \ref{algodualprojGD} and \ref{algodualinertialGD} with $\alpha=10$, $30$ and $50$ respectively).}\label{Figure1}
\end{figure}

\begin{figure}
\begin{center}
\includegraphics[trim=6cm 94mm 49mm 9.5cm, scale=0.37]{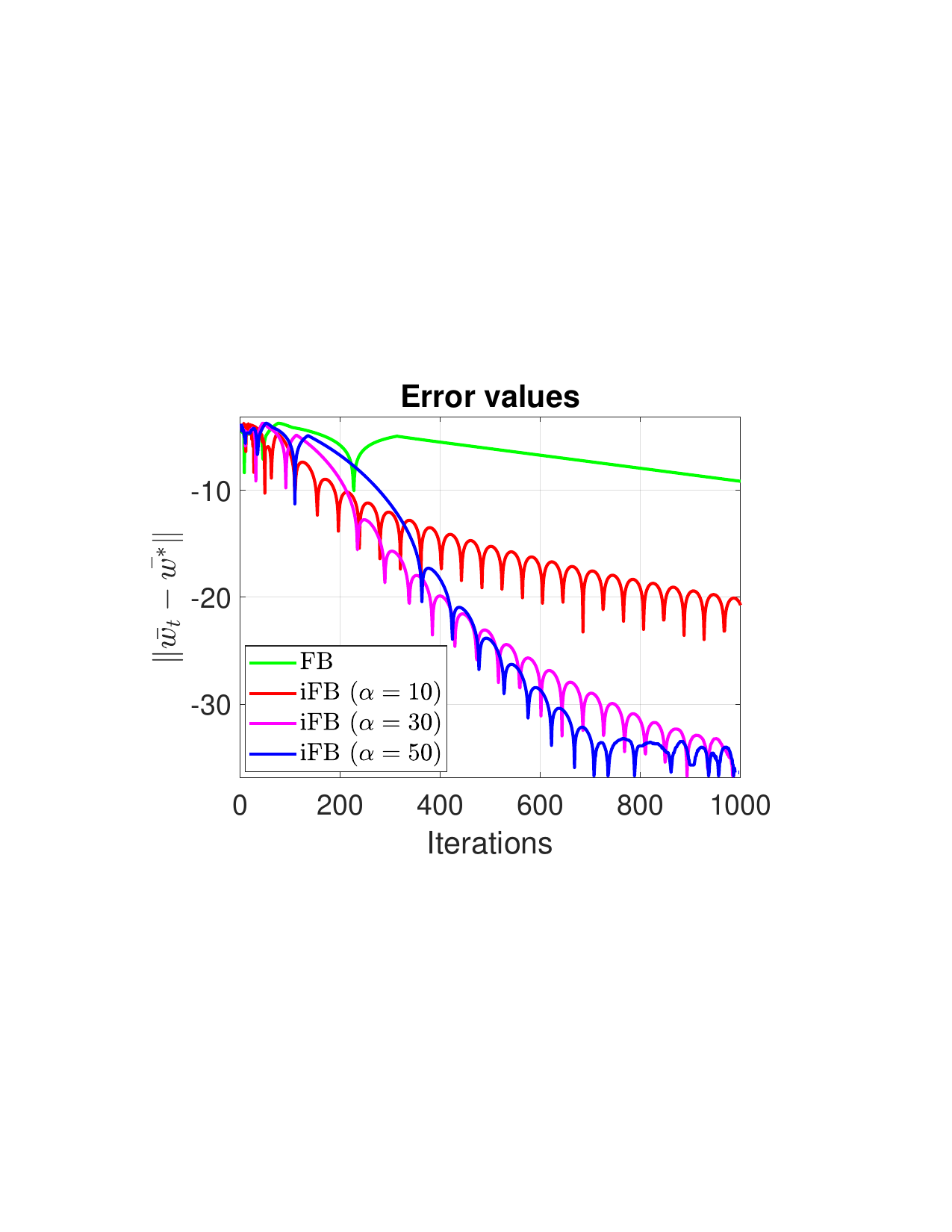}
\includegraphics[trim=2cm 94mm 45mm 9.5cm, scale=0.37]{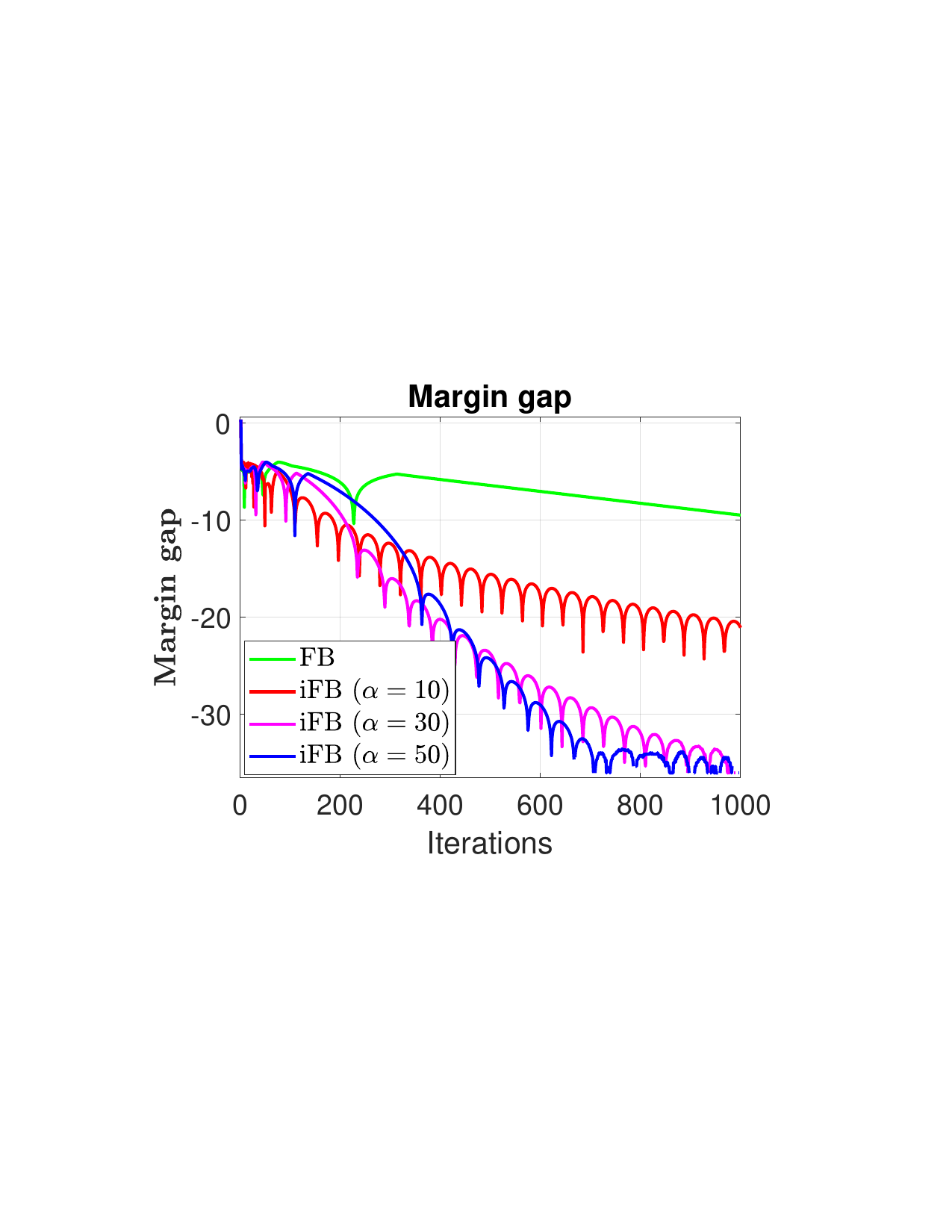}
\includegraphics[trim=2cm 94mm 60mm 9.5cm, scale=0.37
]{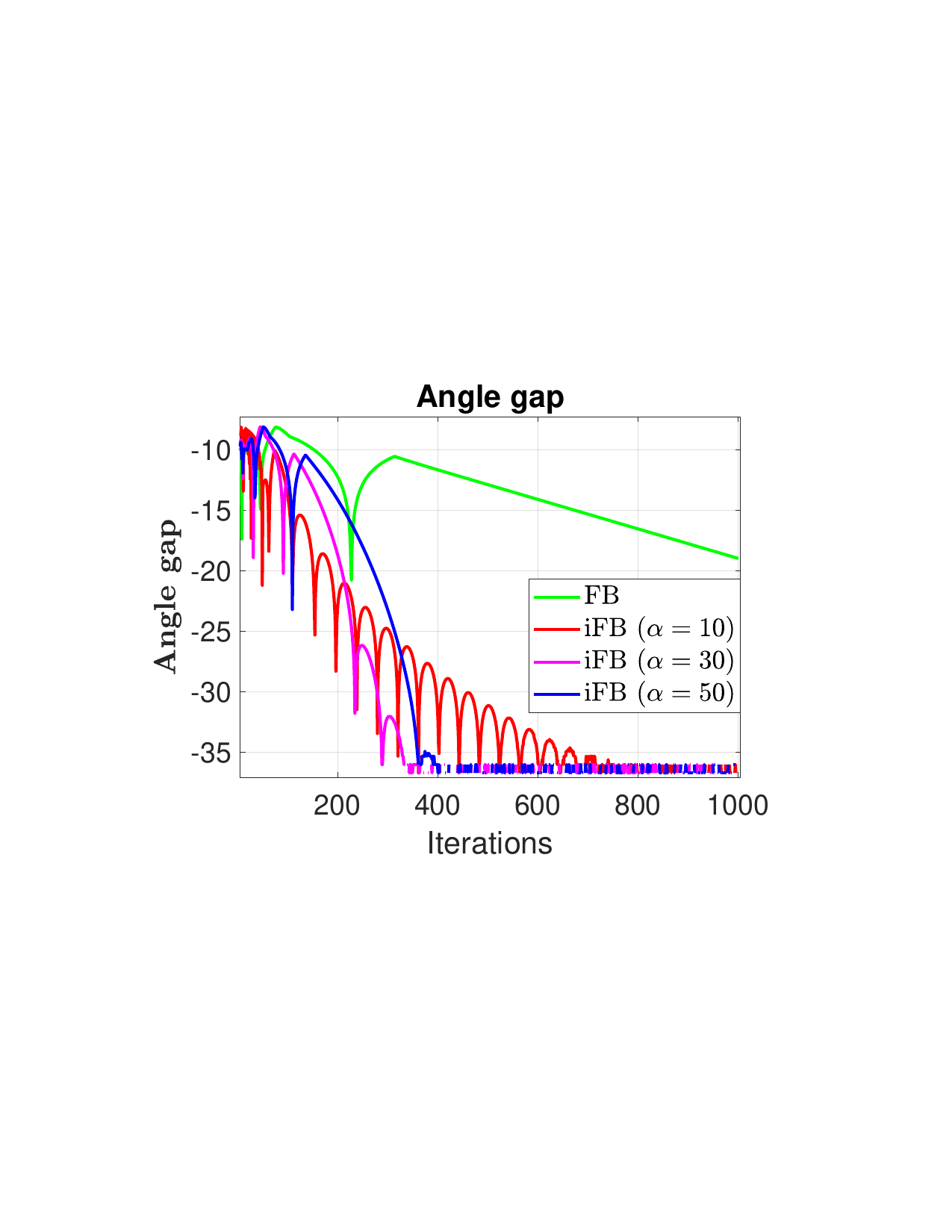}
\end{center}
\caption{Values of the normalized error gap \(\abs{{w_{t}}/{\norme{w_{t}}}-{w_{\ast}}/{\norme{w_{\ast}}}}\) (first figure), the normalized margin gap \({M(w_{\ast})}{/\norme{w_{\ast}}}-{M(w_{t})}/{\norme{w_{t}}}\)  (second figure) and the normalized angle gap \(1-\frac{\scal{w_{t}}{w_{\ast}}}{\norme{w_{t}}\norme{w_{\ast}}}\) (third figure), as a function of the iterations $t$. Here we illustrate the performance of Algorithms \ref{algodualprojGD}(green) and \ref{algodualinertialGD} with $3$ different choices for the parameter $\alpha$ in Algorithm \ref{algodualinertialGD}, $\alpha=10$ (red), $\alpha=30$ (magenta) and $\alpha=50$ (blue). As it appears in this example the choice of $\alpha$  decisively affects  the performance of algorithm \ref{algodualinertialGD}.}\label{Figure2}
\end{figure}

Second, we consider a data set of $1200$ points in $\R^{2}$, that consists of two classes distributed independently $\sim\mathcal{N}(\pm(\frac{1}{2},\frac{1}{2}),0.4)$ and we split them equally in training and test data. In this case, the training points are not linearly separable and algorithms \ref{algodualprojGD} and \ref{algodualinertialGD} are implemented with a Gaussian kernel with parameter $\sigma^{2}=0.15$, see Remark~\ref{remarkfeatures}. The total number of iterations (kernel evaluations) is set $T=2000$. In this experiment, we aim at illustrating  convergence but also the stability of the proposed methods in the presence of the {\em  noise} induced  by mislabeling an amount of the training data. In particular, we are interested in the effect of the parameter $\lambda_{t}$ on the convergence behavior, as also on the stability performance of the proposed methods \ref{algodualprojGD} and \ref{algodualinertialGD}.  In these experiments, the convergence is measured in terms of  the margin, while the stability via the test error on the test data. In Figure \ref{Figure3}, we plot the margin gap and the test error of Algorithms \ref{algodualprojGD} and \ref{algodualinertialGD} with parameter $\lambda_{t}={\lambda_{0}}/{t}$ with $\lambda_{0}\in \{0.01, 10, 100\}$, for three different levels of noise ($p=$ percentage of flipped training labels) starting from $p=0\%$ (no noise), $p=10\%$ (moderate noise) and $p=20\%$ (strong noise). In Figure \ref{Figure4}, we repeat the same experiment by choosing various orders of decay for $\lambda_{t}$, that is $\lambda_{t}\in \{{8}/{\log(t)},{8}/{\sqrt{t}},{8}/{t},{8}/{t^{2}},{8}/{2^{t}}\}$. Several comments can be made. First, we note that larger initialization $\lambda_{0}$ (e.g. $\lambda_{0}=100$ in Figure \ref{Figure3}) or slower decay rate  (e.g. $\lambda_{t}={8}/{\log(t)}$ or ${8}/{\sqrt{t}}$ in Figure \ref{Figure4}) lead to slower margin convergence, but better generalization properties especially in presence of errors (second and third rows in Figures \ref{Figure3} and \ref{Figure4}).  In addition, while Algorithm \ref{algodualprojGD} seems more robust with respect to the various changes of $\lambda_{t}$, the situation is different for the inertial variant \ref{algodualinertialGD}. Both in Figures \ref{Figure3} and \ref{Figure4}, for Algorithm \ref{algodualprojGD}, the behavior of the margin gap and the test error do not change radically unless the initialization is very large (e.g. $\lambda_{t}={100}/{t}$ in Figure \ref{Figure3}) or the decay rate is very slow (e.g. $\lambda_{t}={8}/{\log(t)}$ or ${8}/{\sqrt{t}}$ in Figure \ref{Figure4}) . On the other hand, Algorithm \ref{algodualinertialGD} seems more sensible to the different choices of the regularization parameter $\lambda_{t}$ both in terms of margin gap and test error.

 In the noiseless case (first row in Figures \ref{Figure3} and \ref{Figure4}) it seems that choosing $\lambda_{t}$ small enough (or decaying moderately fast to zero) can be a good policy. Note however that choosing too small initialization (or too fast decay to zero) does not offer any significant advantage with respect to more moderate choices of $\lambda_{t}$ (see in particular the blue, magenta and khaki lines in the first row of Figures \ref{Figure3} and \ref{Figure4} ). On the other hand, in presence of errors (second and third rows in Figures \ref{Figure3} and \ref{Figure4}), larger initialization (like $\lambda_{0}=10$ or $1$ in Figure \ref{Figure3}) and slower decay rate (as $\lambda_{t}={8}/{t}$ or ${8}/{\sqrt{t}}$ in Figure \ref{Figure4}) may offer a better trade-off between margin convergence and test error. 

\begin{figure}
	\begin{center}
		\includegraphics[trim=10.8cm 89mm 39mm 95mm, scale=0.29]{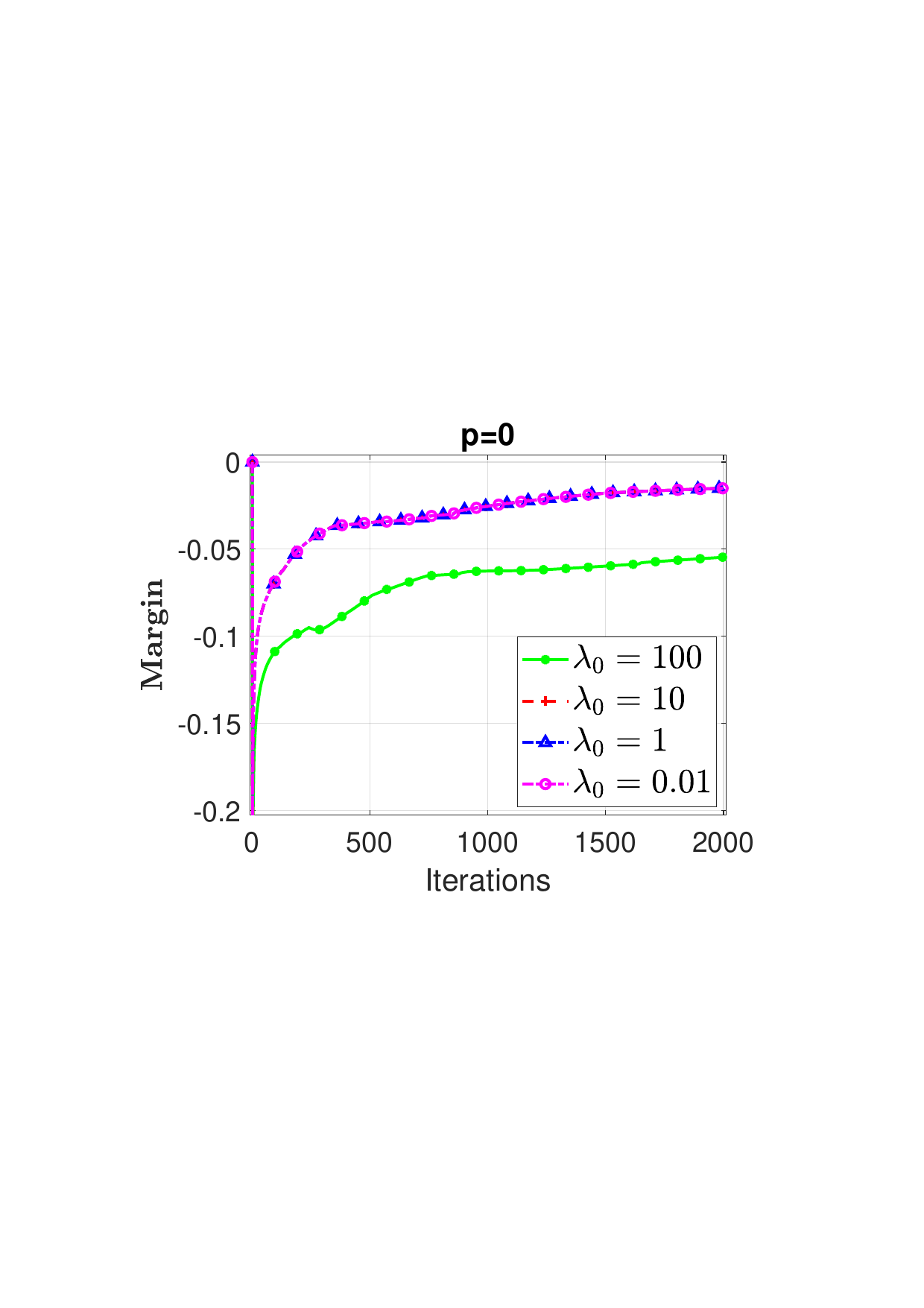}
		\includegraphics[trim=2.9cm 89mm 39mm 95mm, scale=0.29]{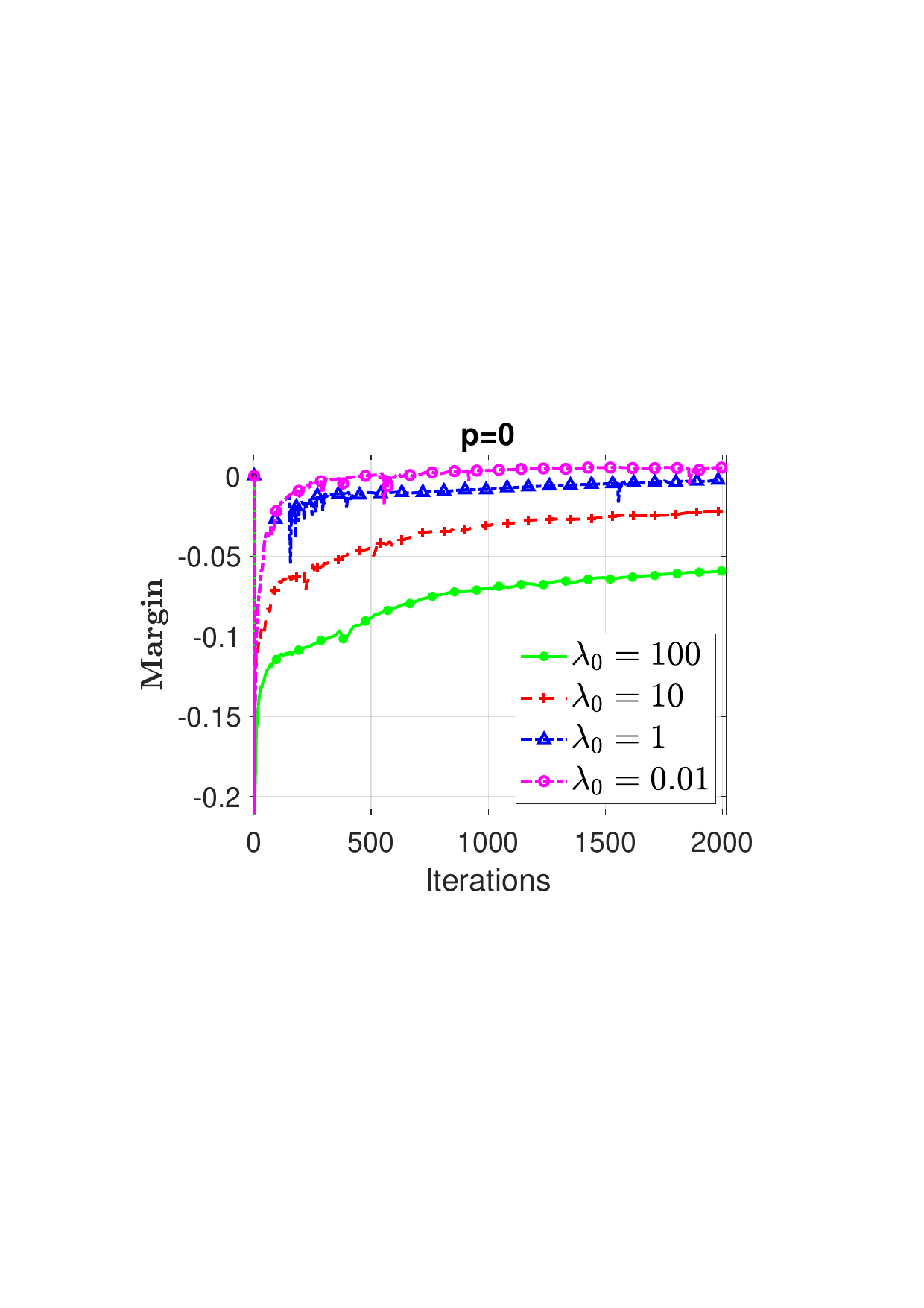} 
			\includegraphics[trim=2.9cm 89mm 39mm 95mm, scale=0.29]{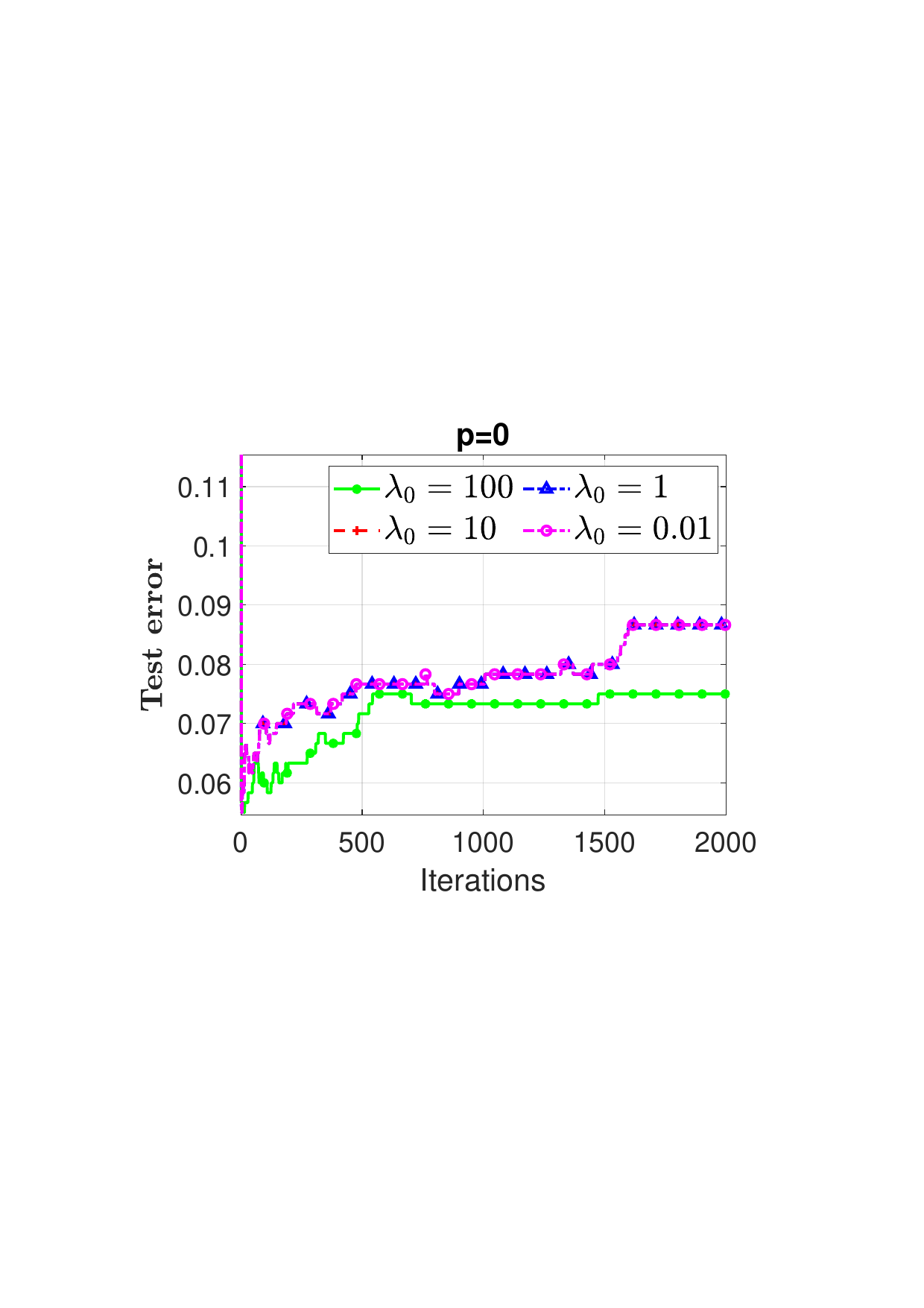}
		\includegraphics[trim=2.9cm 89mm 95mm 95mm, scale=0.29]{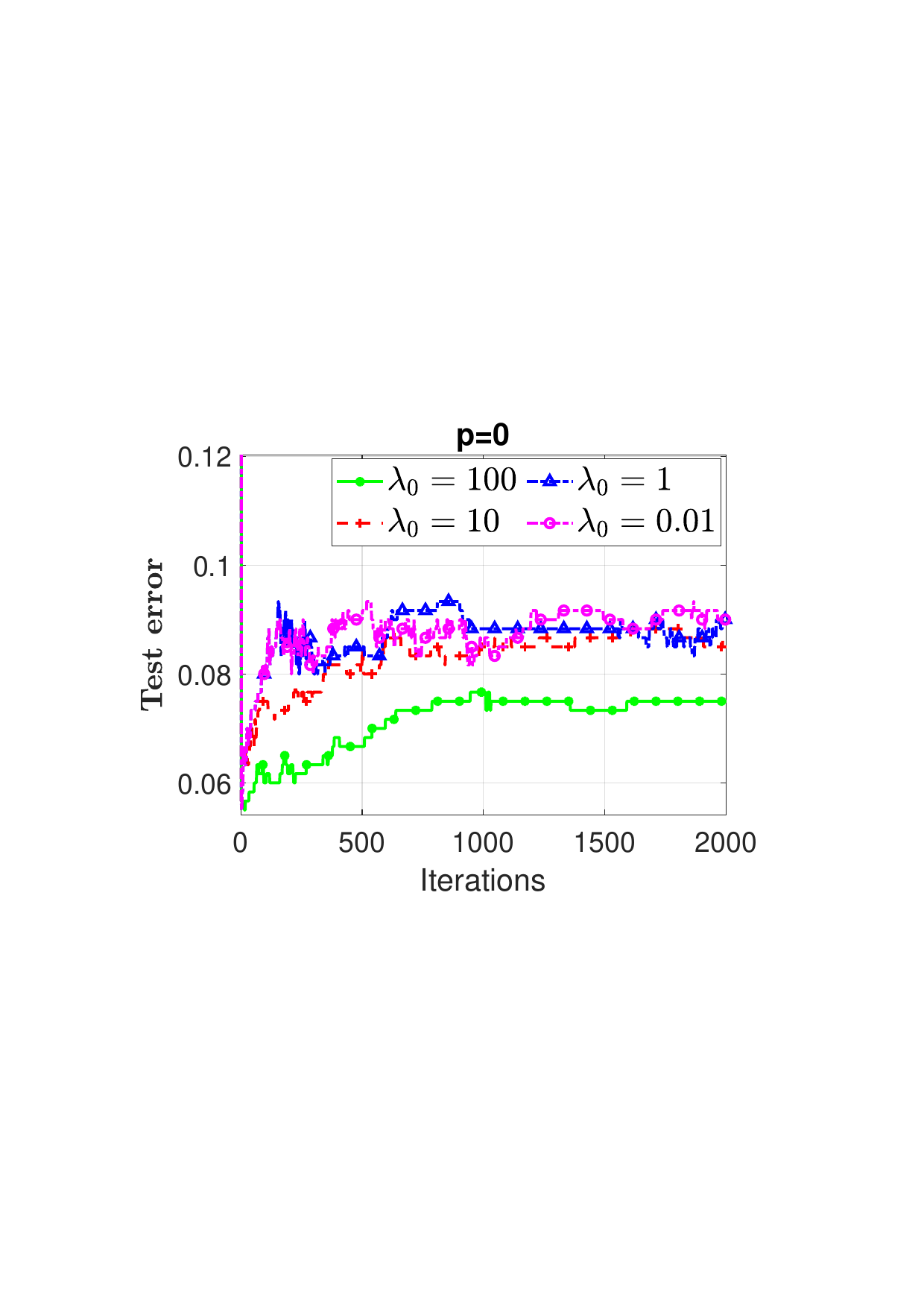} 
		\\
		\includegraphics[trim=10.8cm 89mm 39mm 94mm, scale=0.29]{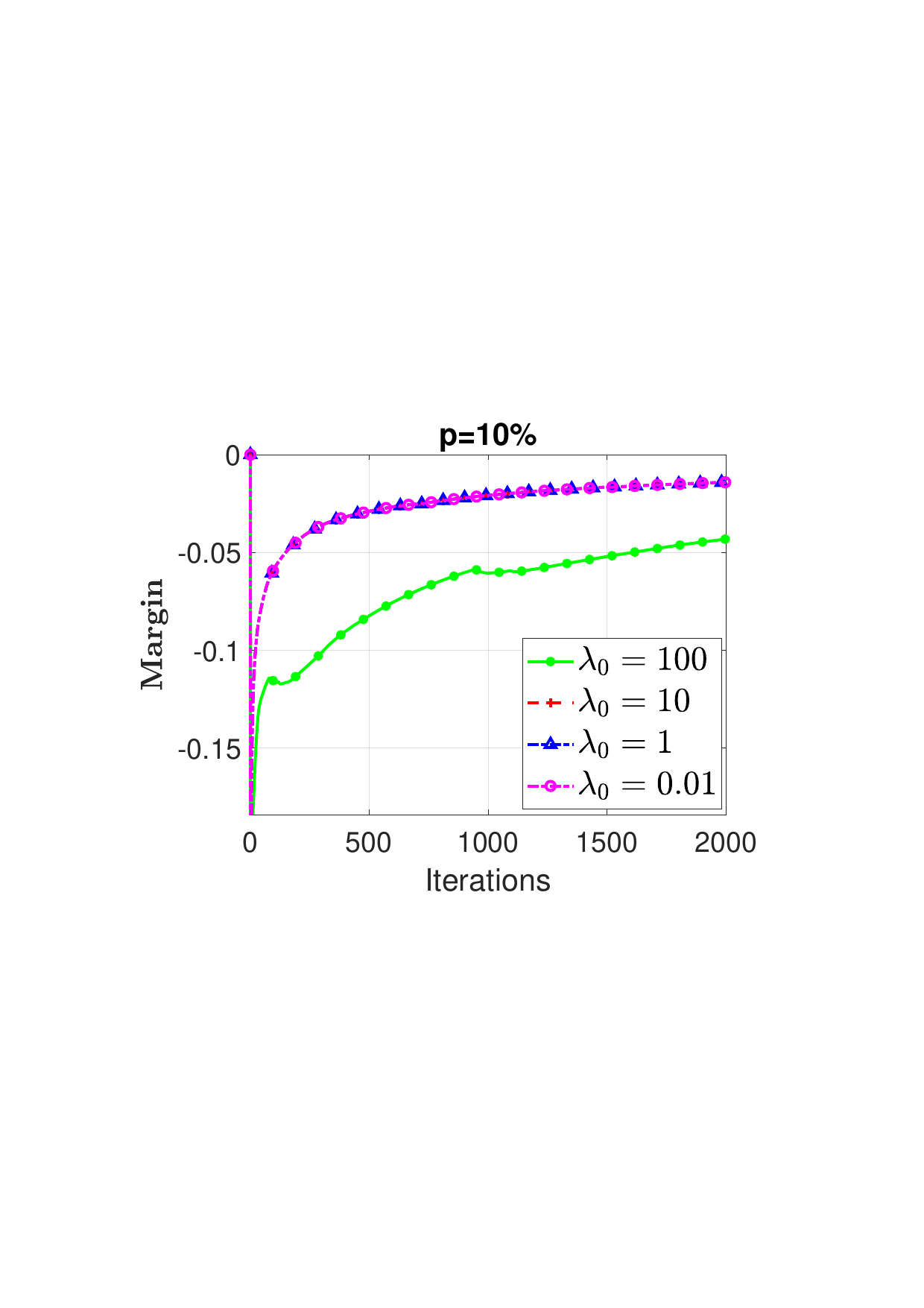}
	\includegraphics[trim=2.9cm 89mm 39mm 94mm, scale=0.29]{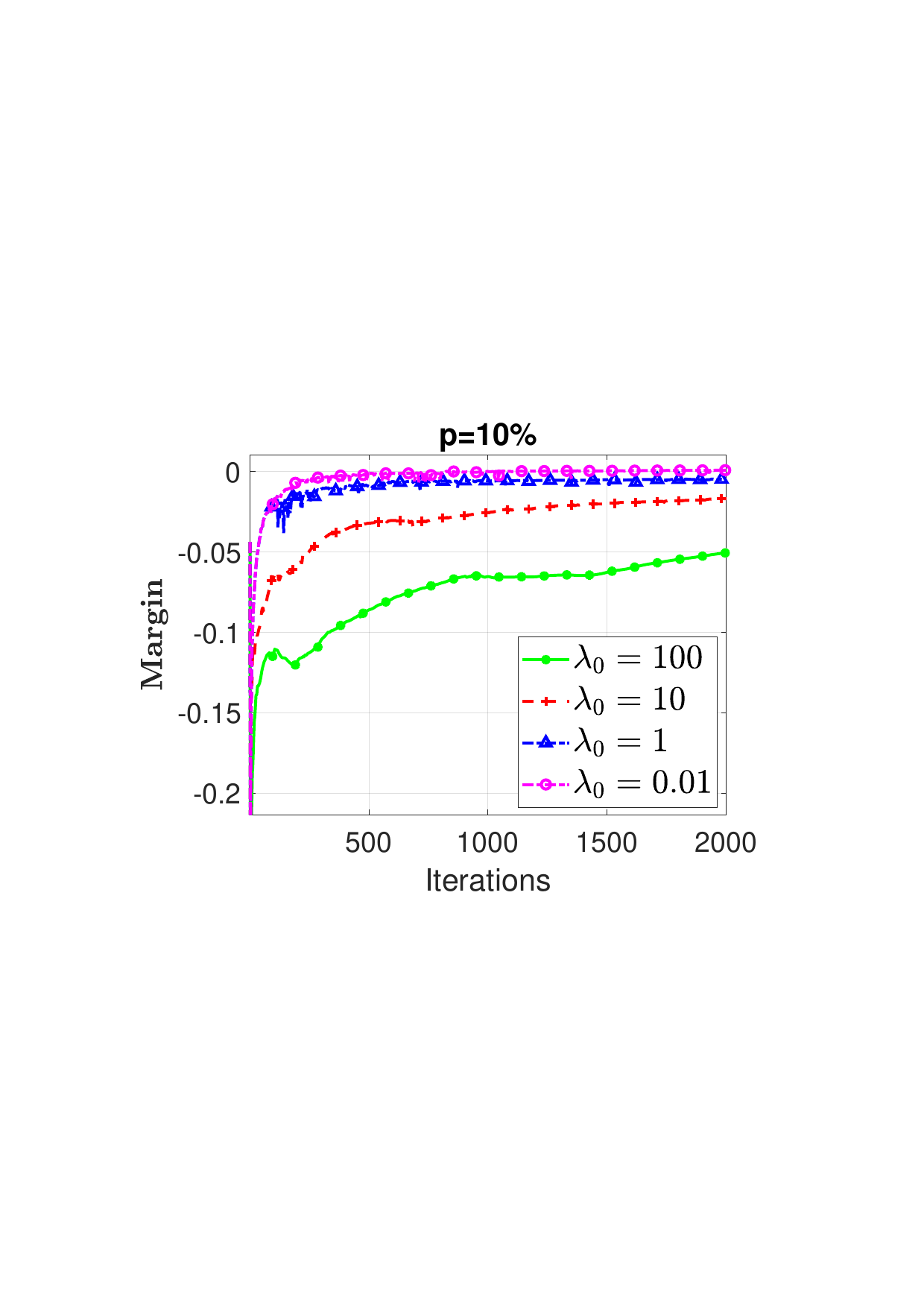} 
		\includegraphics[trim=2.9cm 89mm 39mm 94mm, scale=0.29]{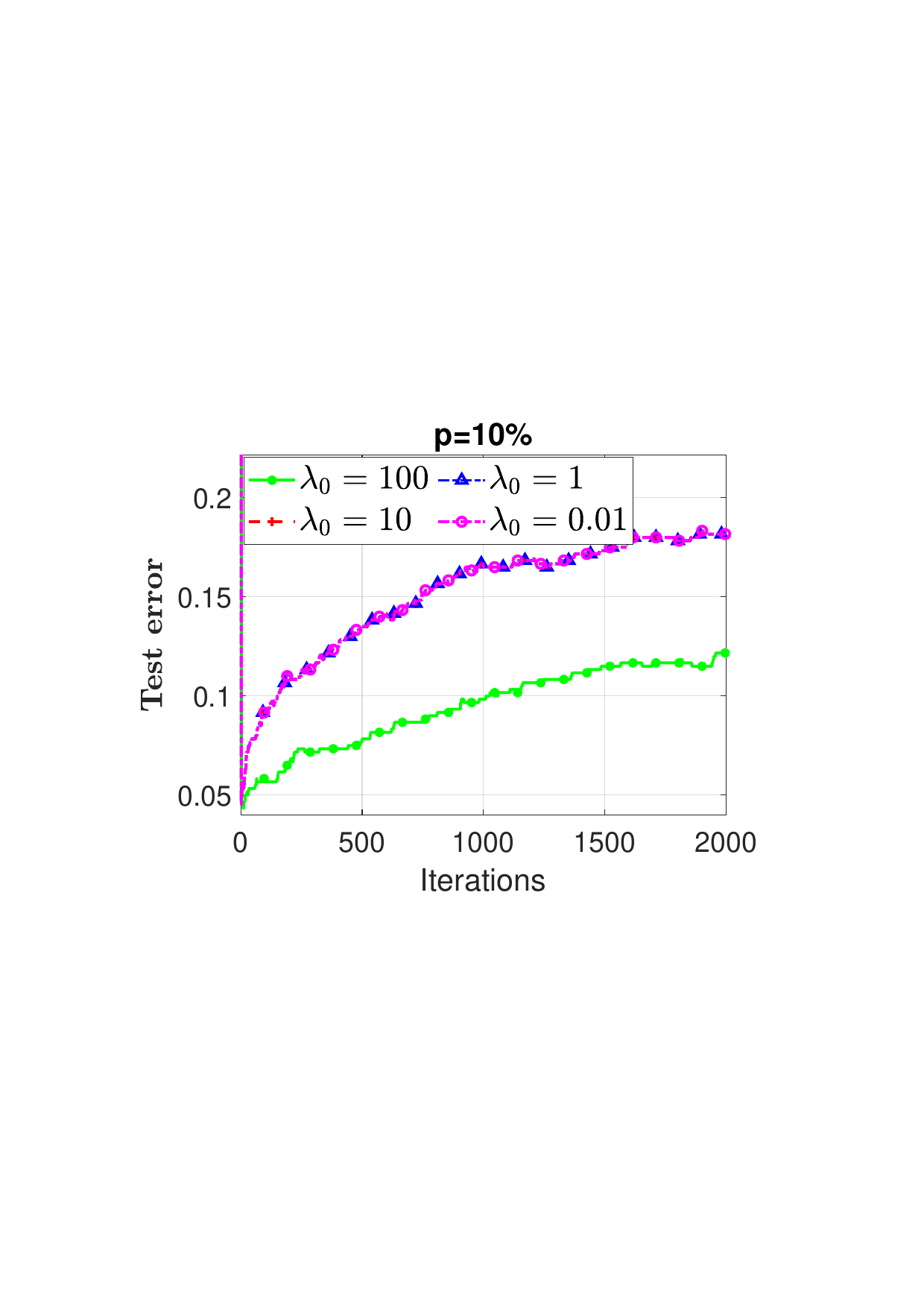}
	\includegraphics[trim=2.9cm 89mm 95mm 94mm, scale=0.29]{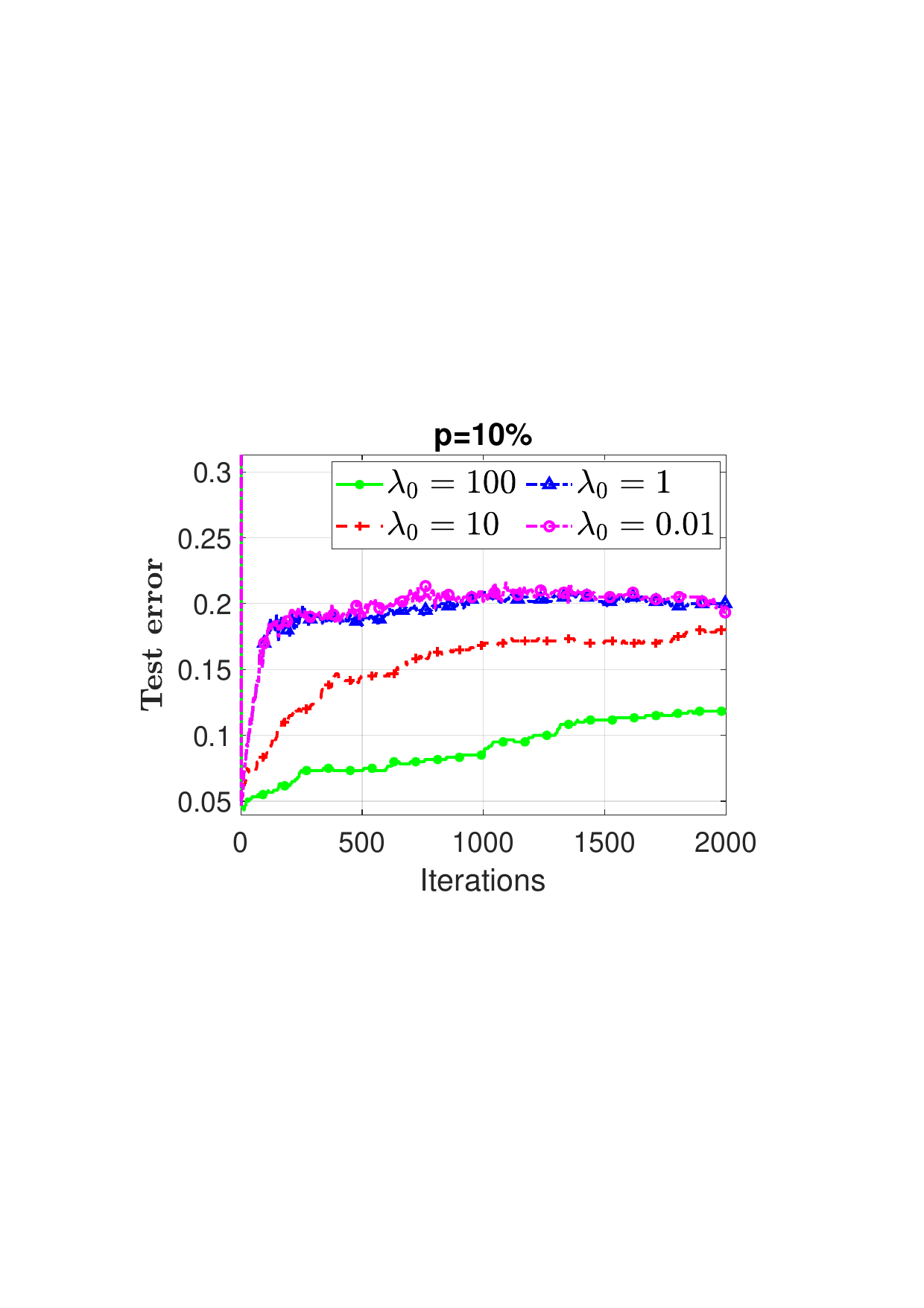} 
		\\
			\includegraphics[trim=10.8cm 99mm 39mm 94mm, scale=0.29]{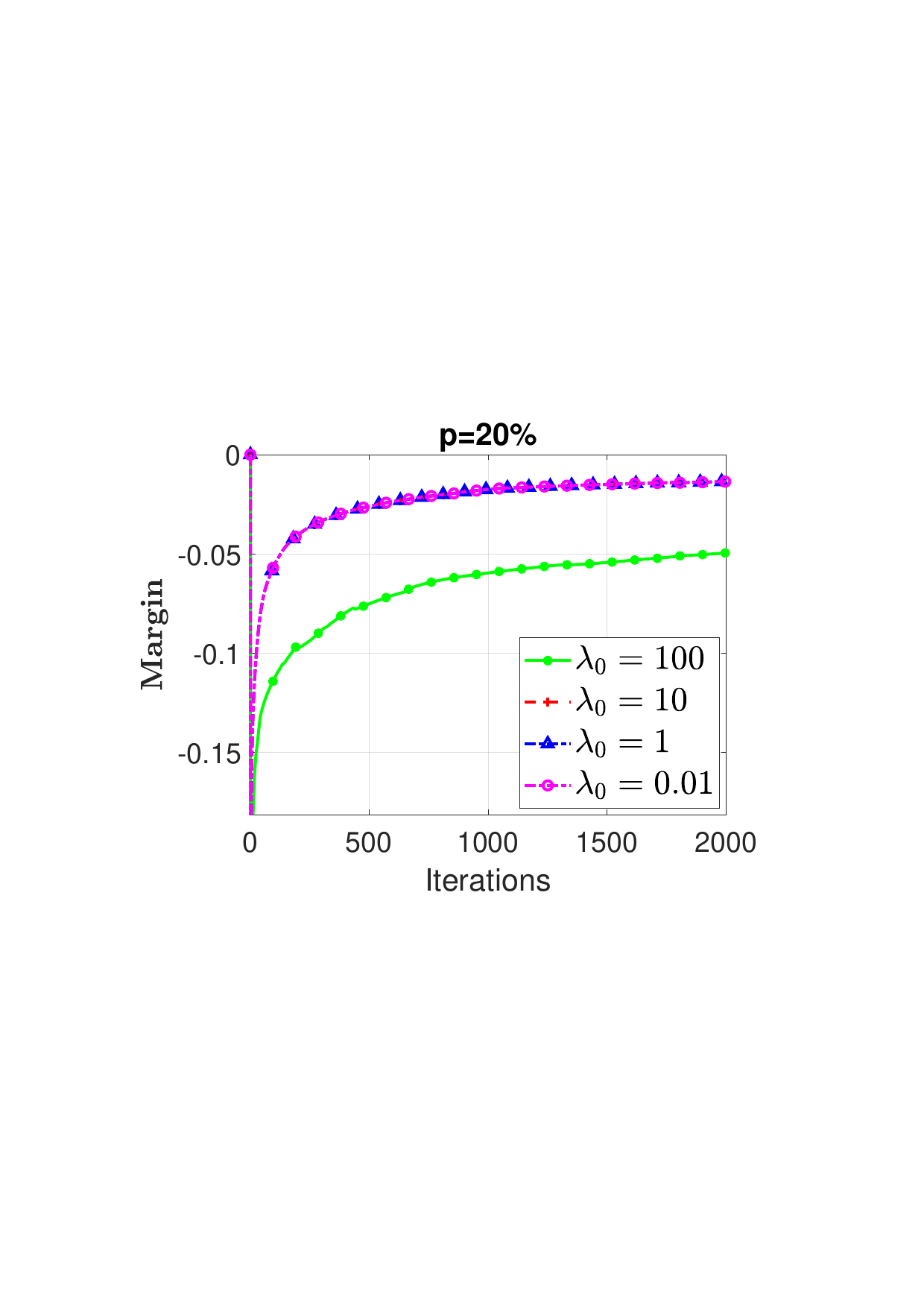}
		\includegraphics[trim=2.9cm 99mm 39mm 94mm, scale=0.29]{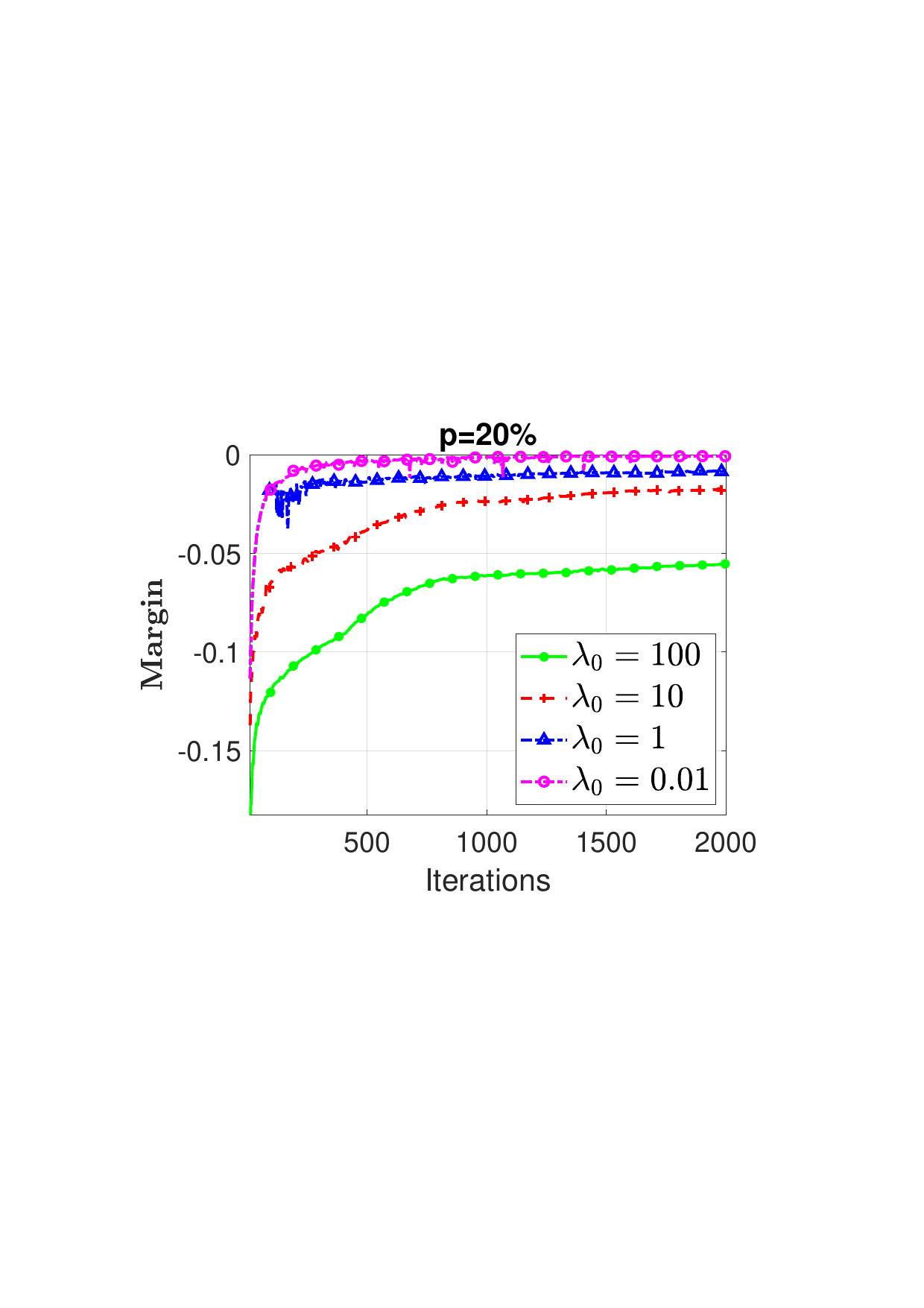} 
			\includegraphics[trim=2.9cm 99mm 39mm 94mm, scale=0.29]{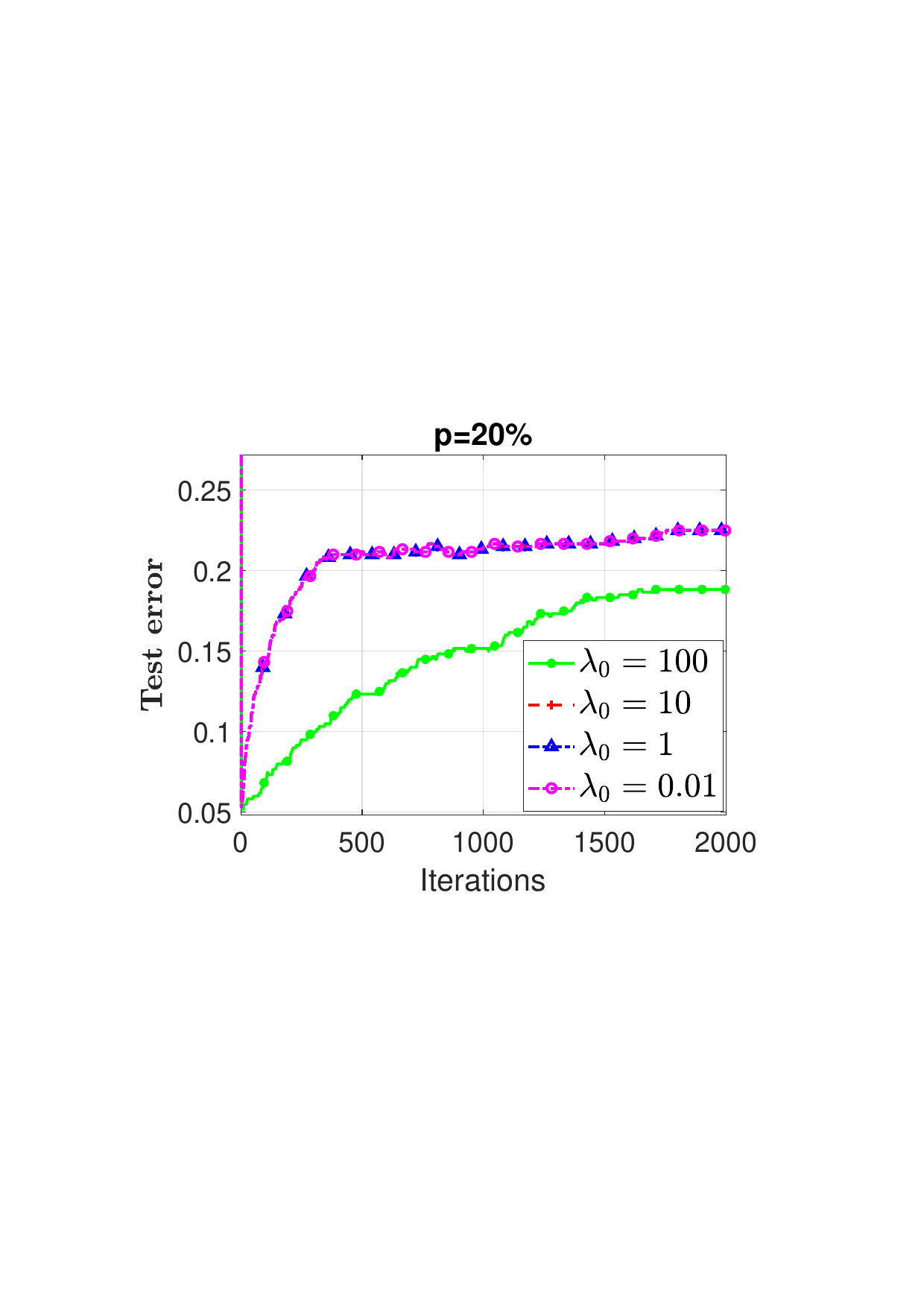}
		\includegraphics[trim=2.9cm 99mm 95mm 94mm, scale=0.29]{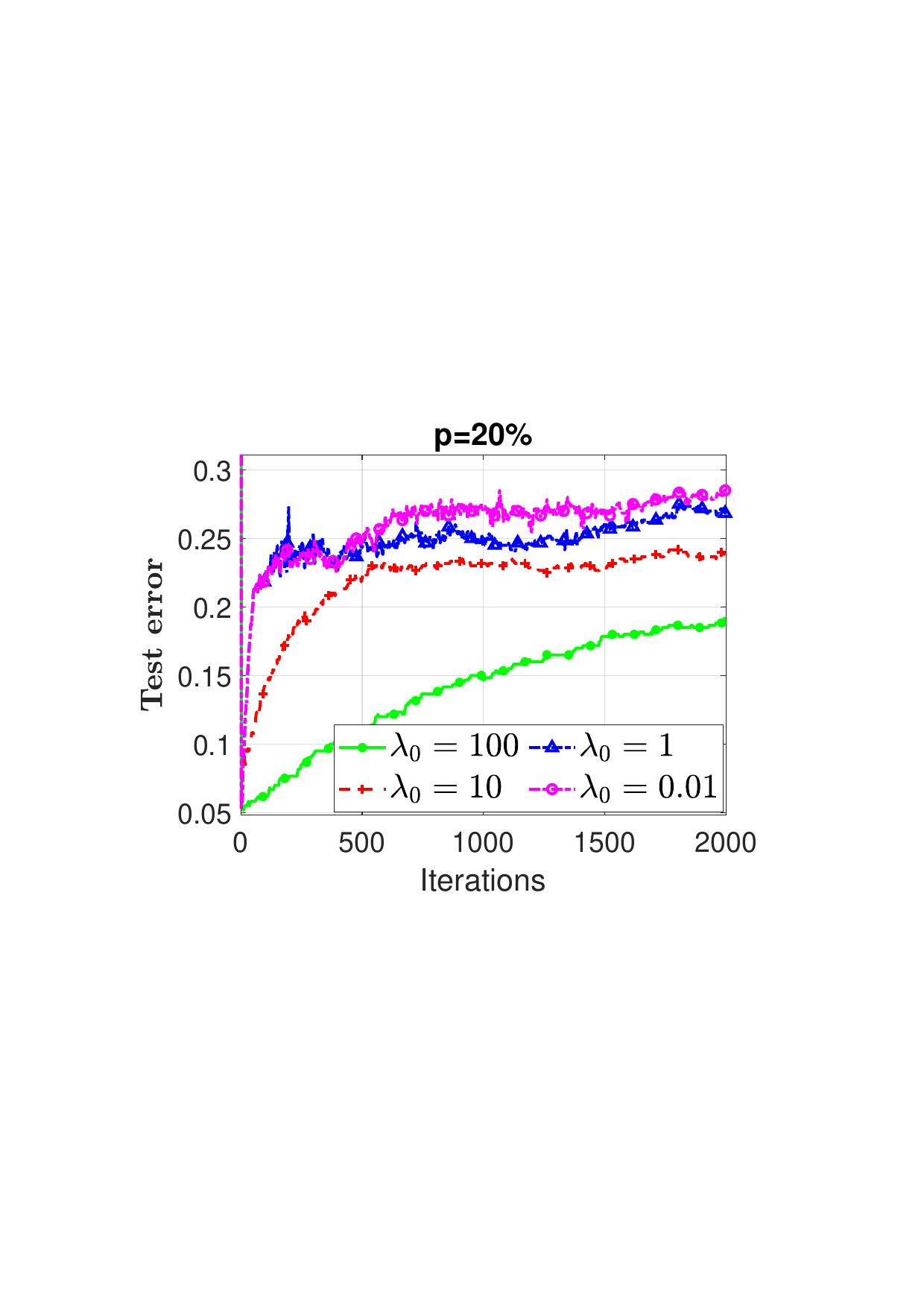} 
	\end{center}
	\caption{Margin and test error performance of Algorithms \ref{algodualprojGD} and \ref{algodualinertialGD} in noisy dataset with $\lambda_{t}={\lambda_{0}}/{t}$, for different initial values of $\lambda_{0}$ ($\lambda_{0}=100$ in green, $\lambda_{0}=10$ in red, $\lambda_{0}=1$ in blue and $\lambda_{0}=0.01$ in magenta). Each row corresponds to a different noise level ($\%$ of flipped labels) starting with $0\%$ (first row), $10\%$ (second row) and $20\%$ (third row). The first and third column illustrate the margin and test error of Algorithm \ref{algodualprojGD} respectively, while the second and the fourth column correspond to the margin (resp. test error) of Algorithm \ref{algodualinertialGD}. }\label{Figure3}
\end{figure}

\begin{figure}
\begin{center}
\includegraphics[trim=10.8cm 89mm 43mm 91mm, scale=0.29]{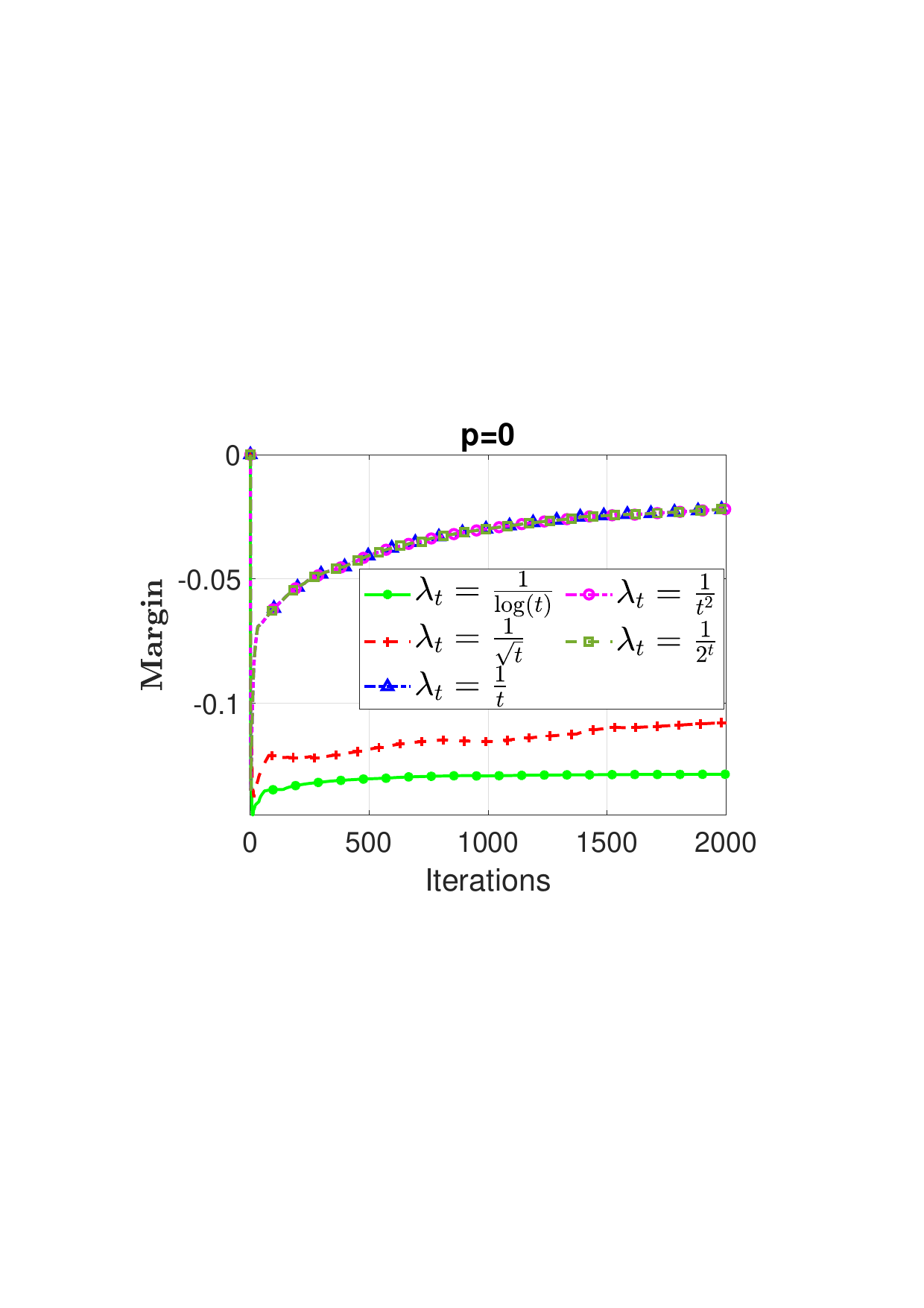}
\includegraphics[trim=2.8cm 89mm 43mm 91mm, scale=0.29]{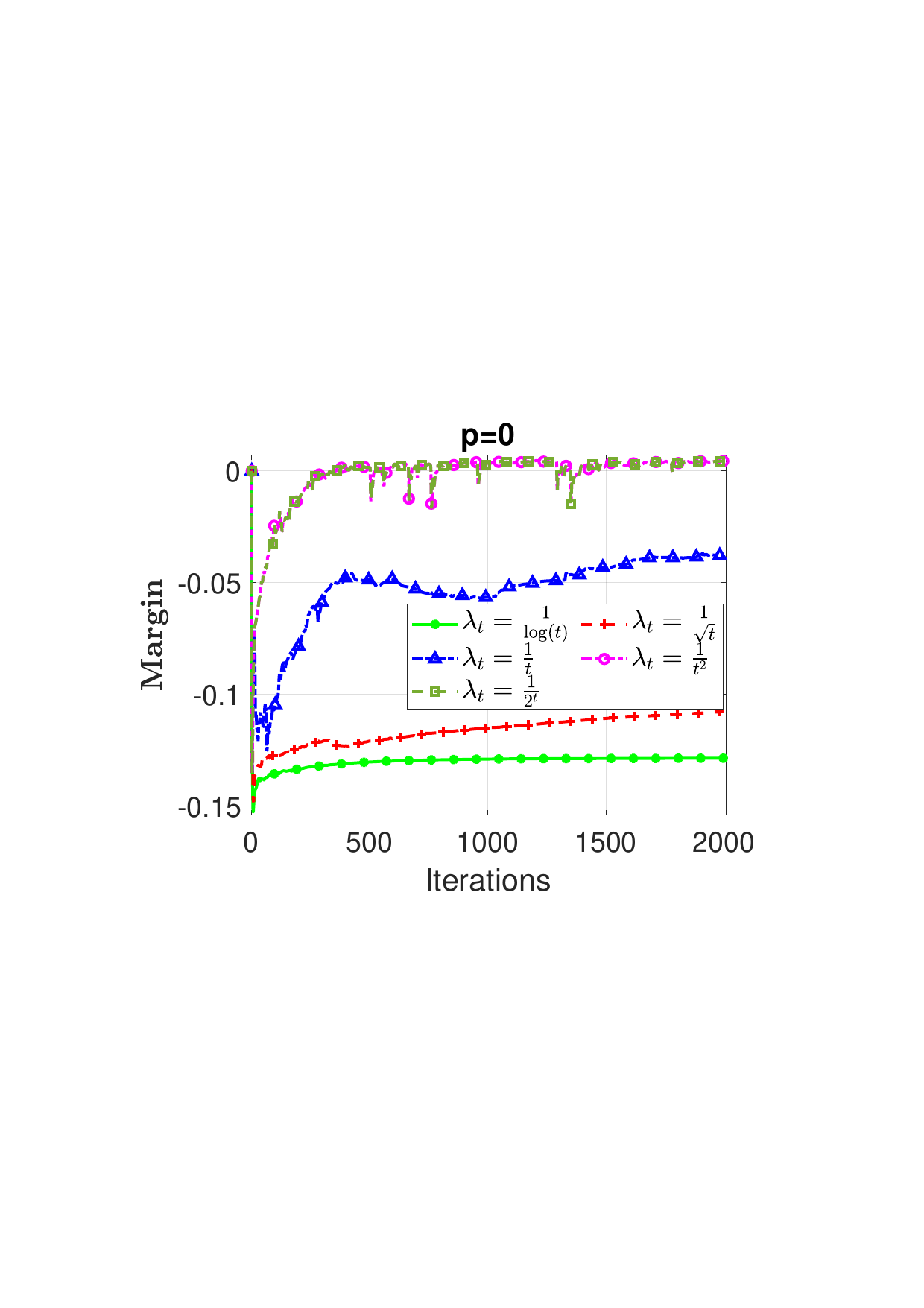} 
\includegraphics[trim=2.8cm 89mm 43mm 91mm, scale=0.29]{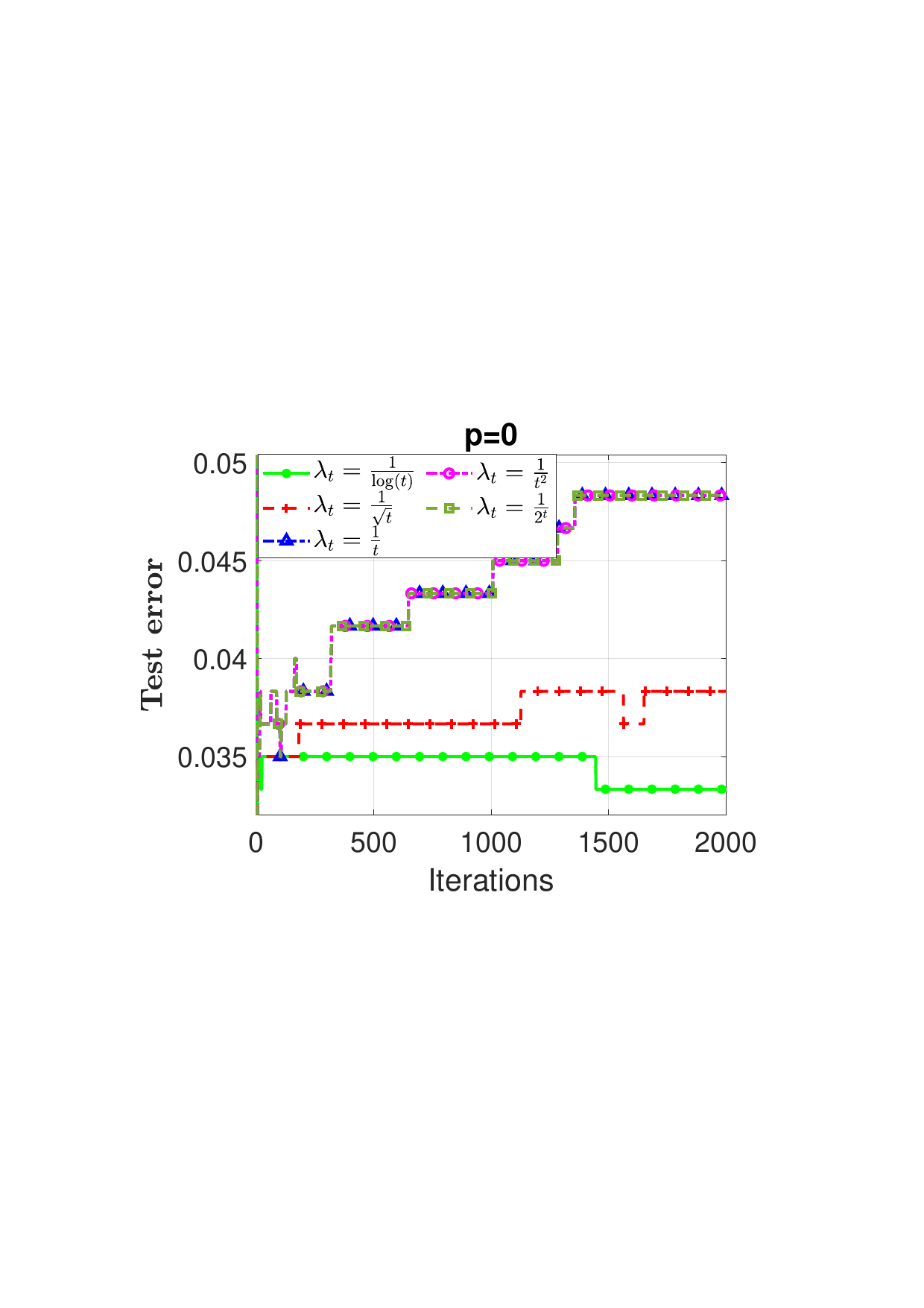}
\includegraphics[trim=2.8cm 89mm 95mm 91mm, scale=0.29]{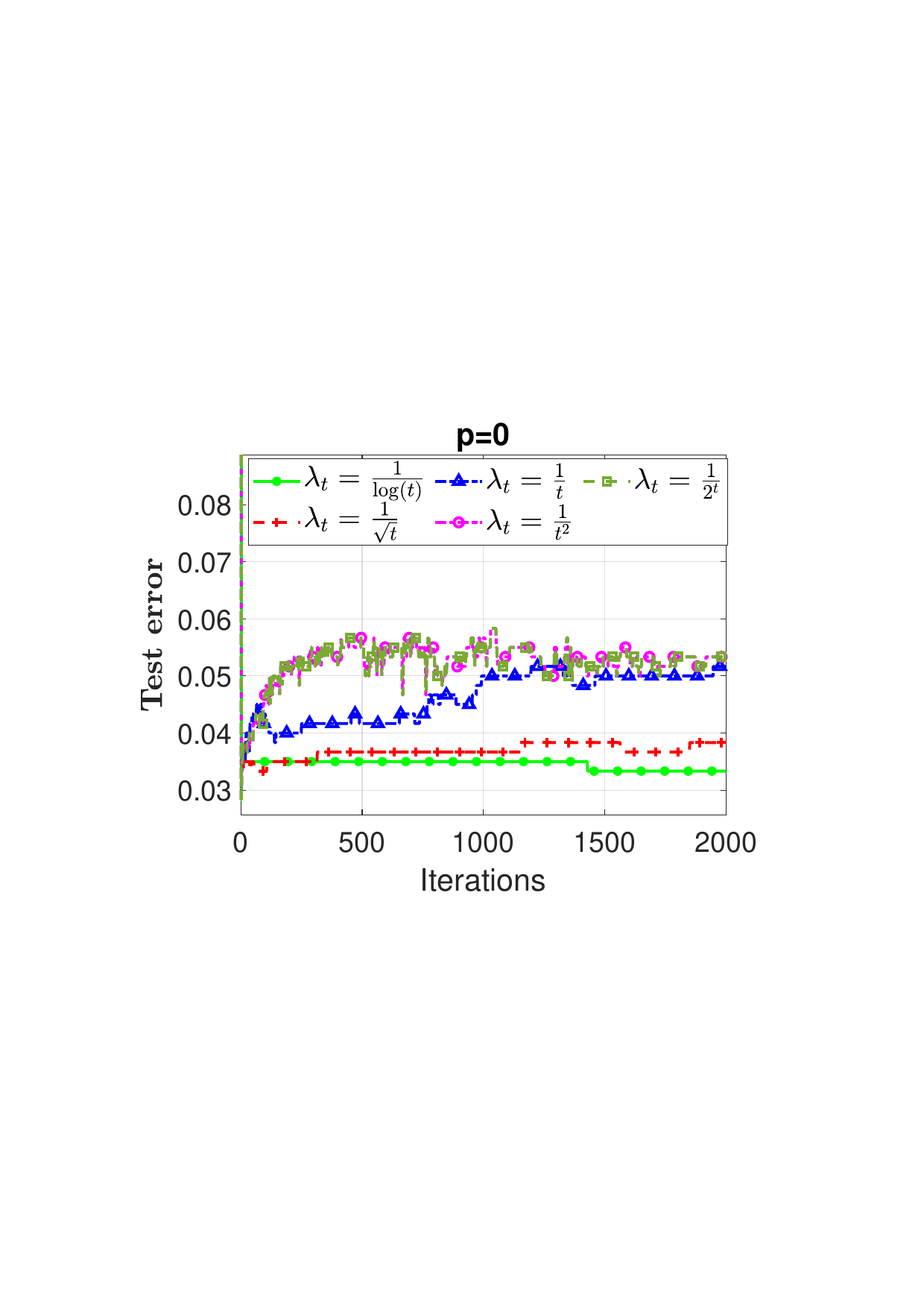} 
\\
\includegraphics[trim=10.8cm 89mm 43mm 95mm, scale=0.29]{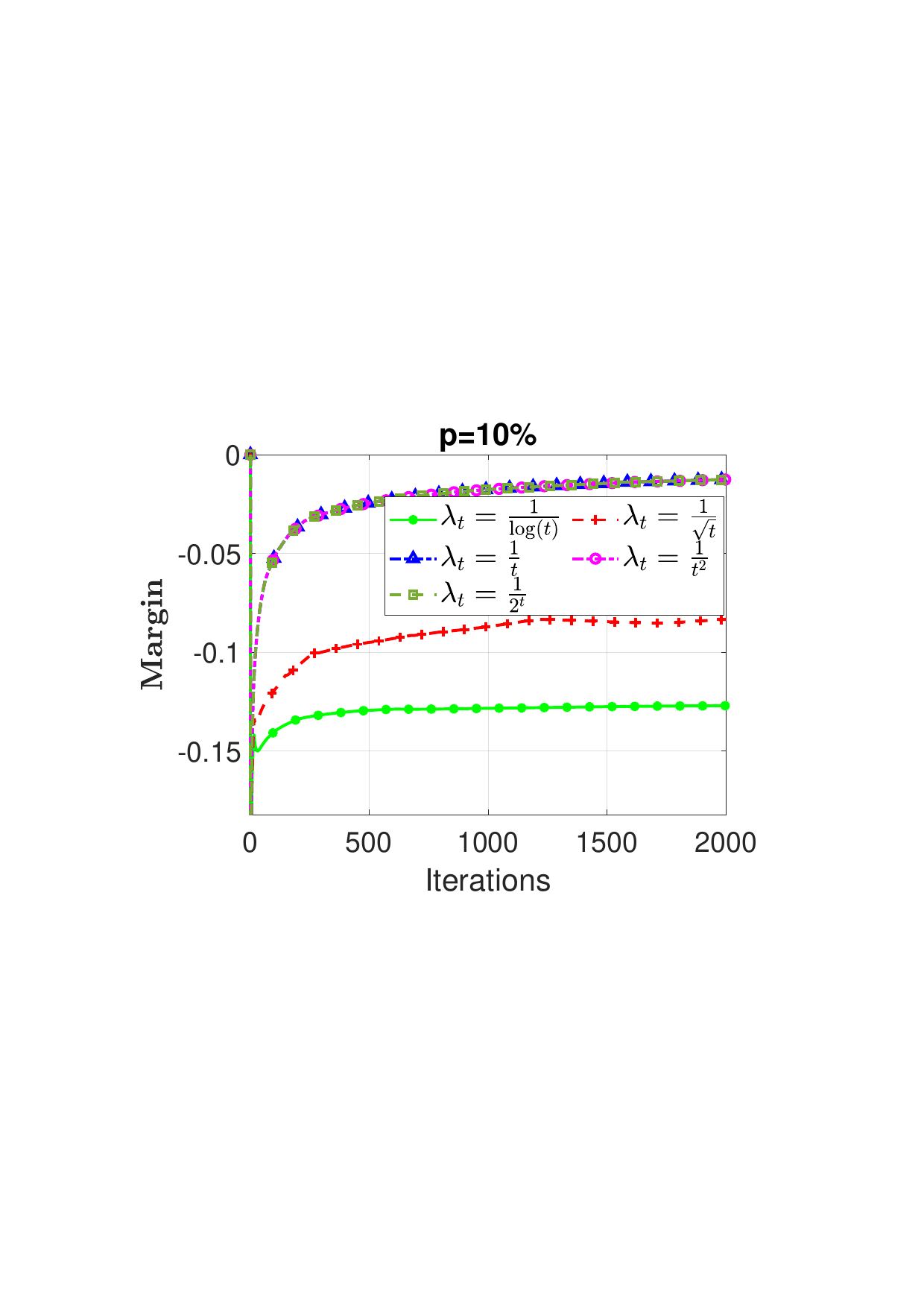}
\includegraphics[trim=2.8cm 89mm 43mm 95mm, scale=0.29]{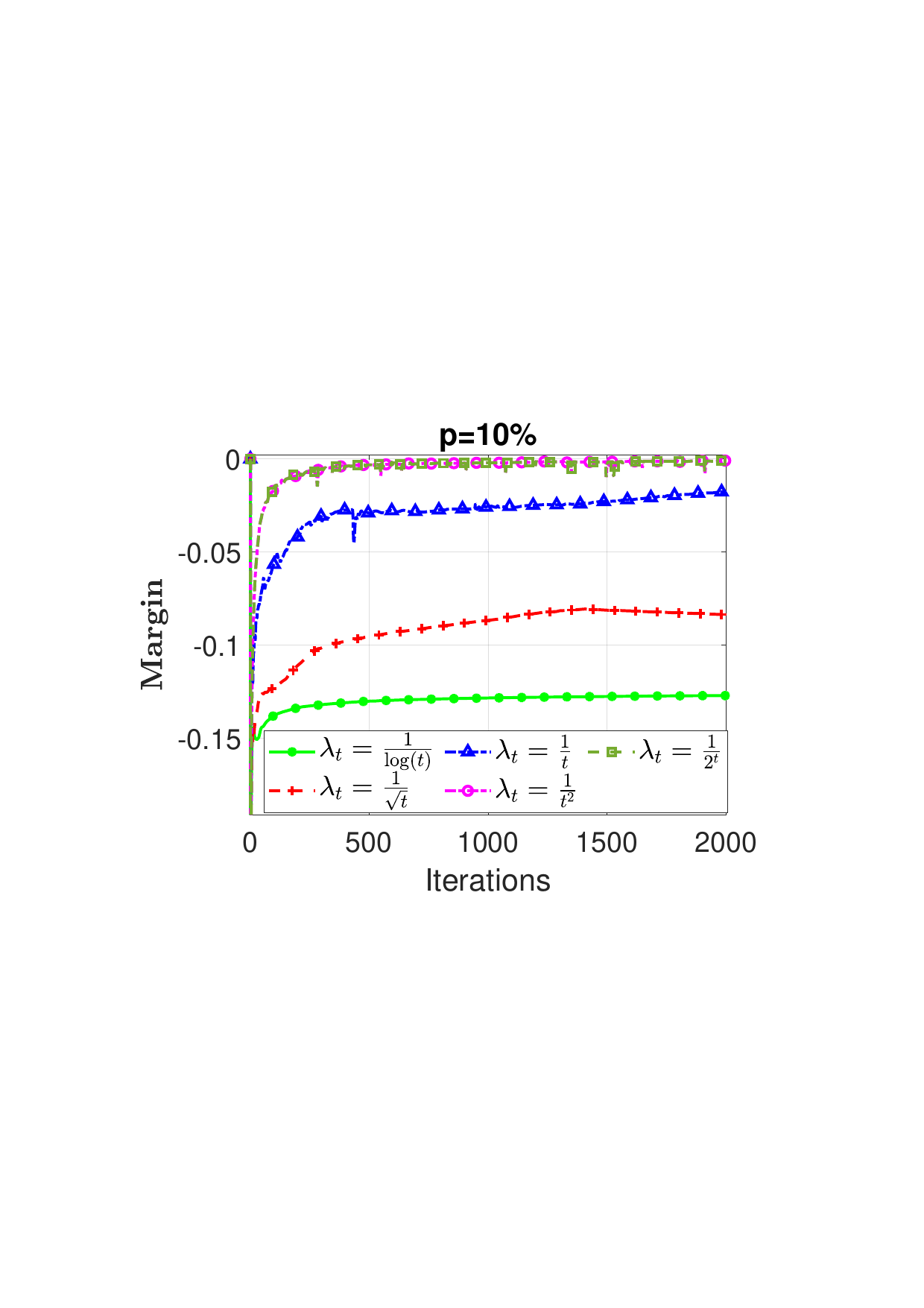} 
\includegraphics[trim=2.8cm 89mm 43mm 95mm, scale=0.29]{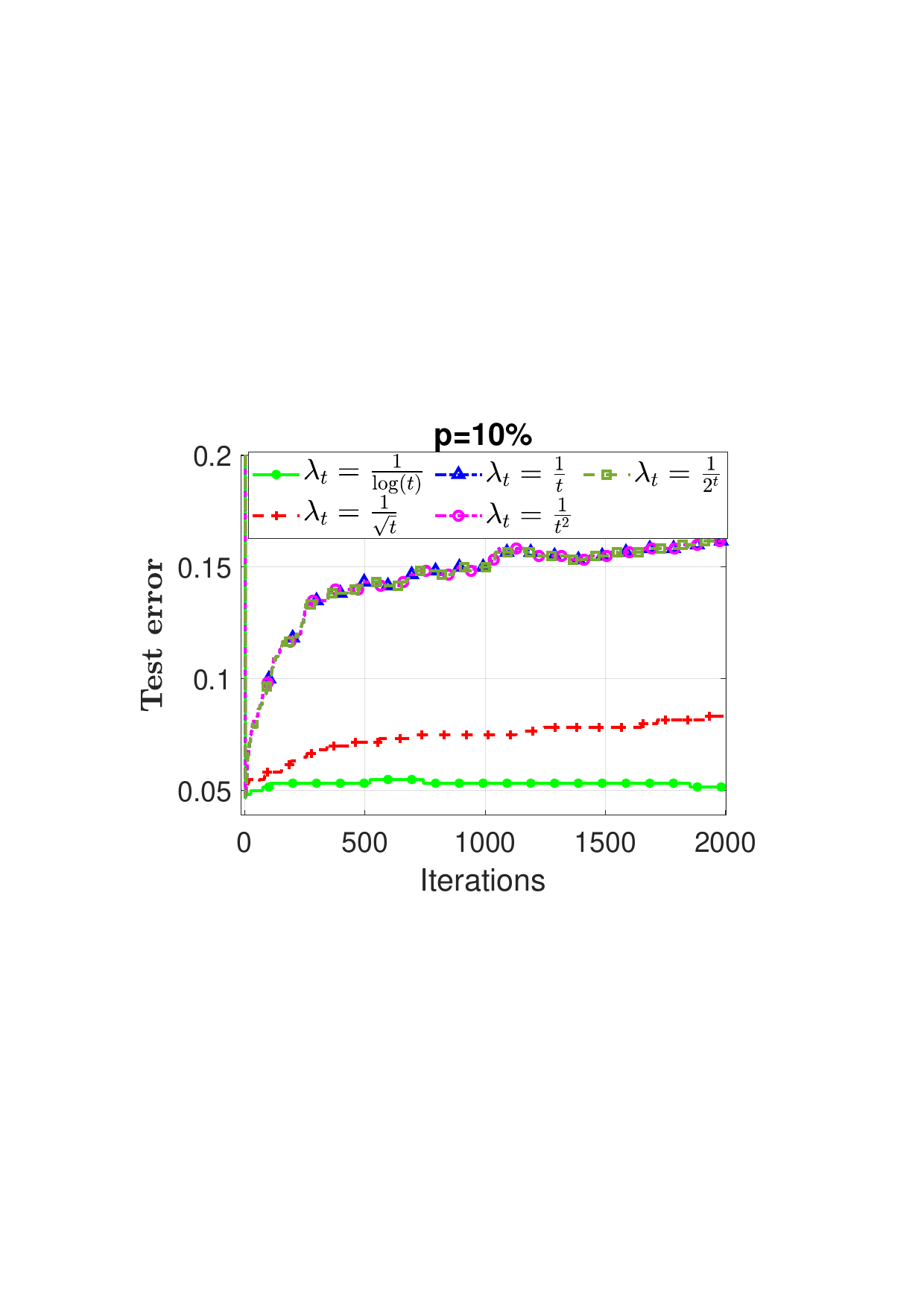}
\includegraphics[trim=2.8cm 89mm 95mm 95mm, scale=0.29]{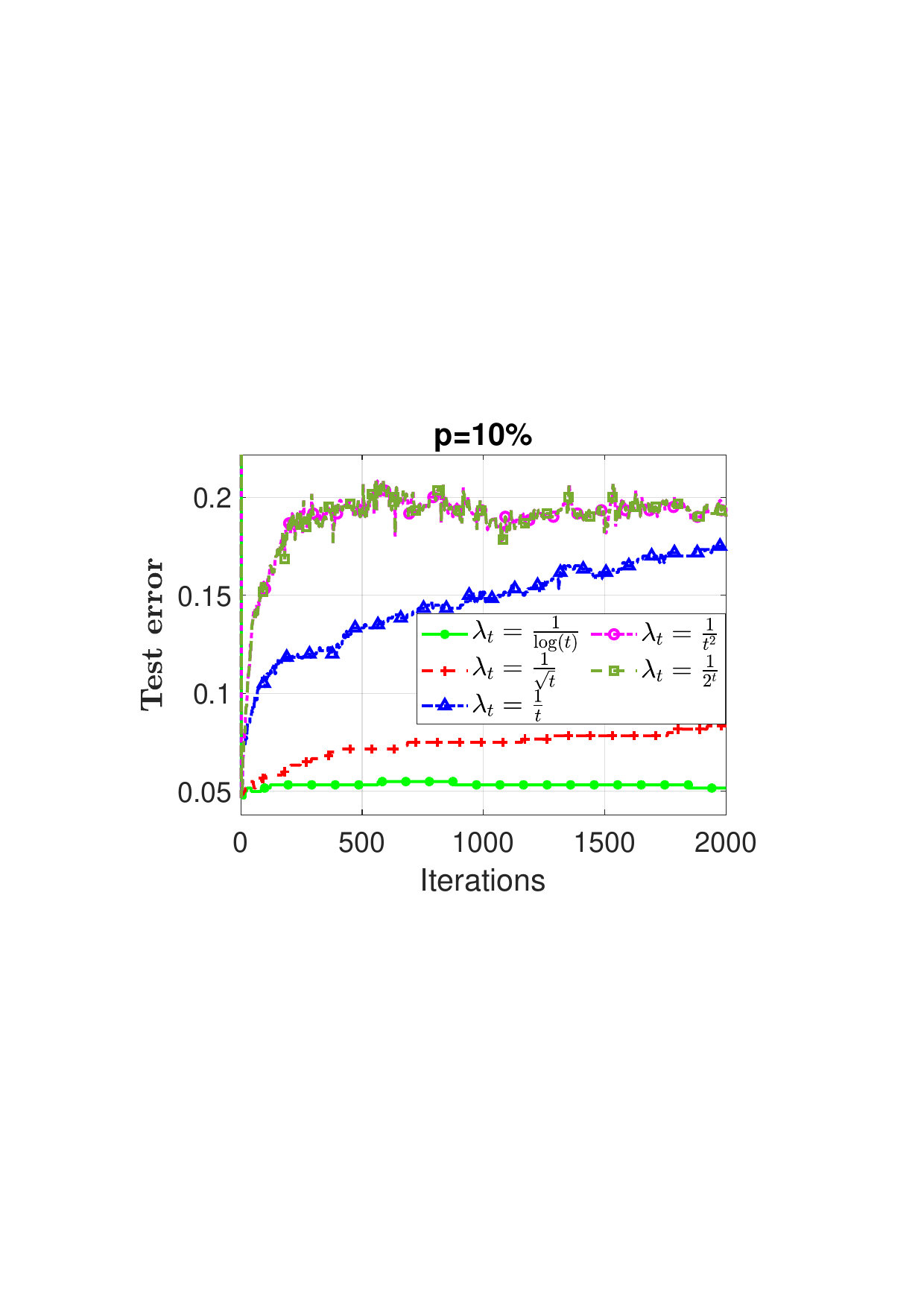} 
\\
\includegraphics[trim=10.8cm 99mm 43mm 95mm, scale=0.29]{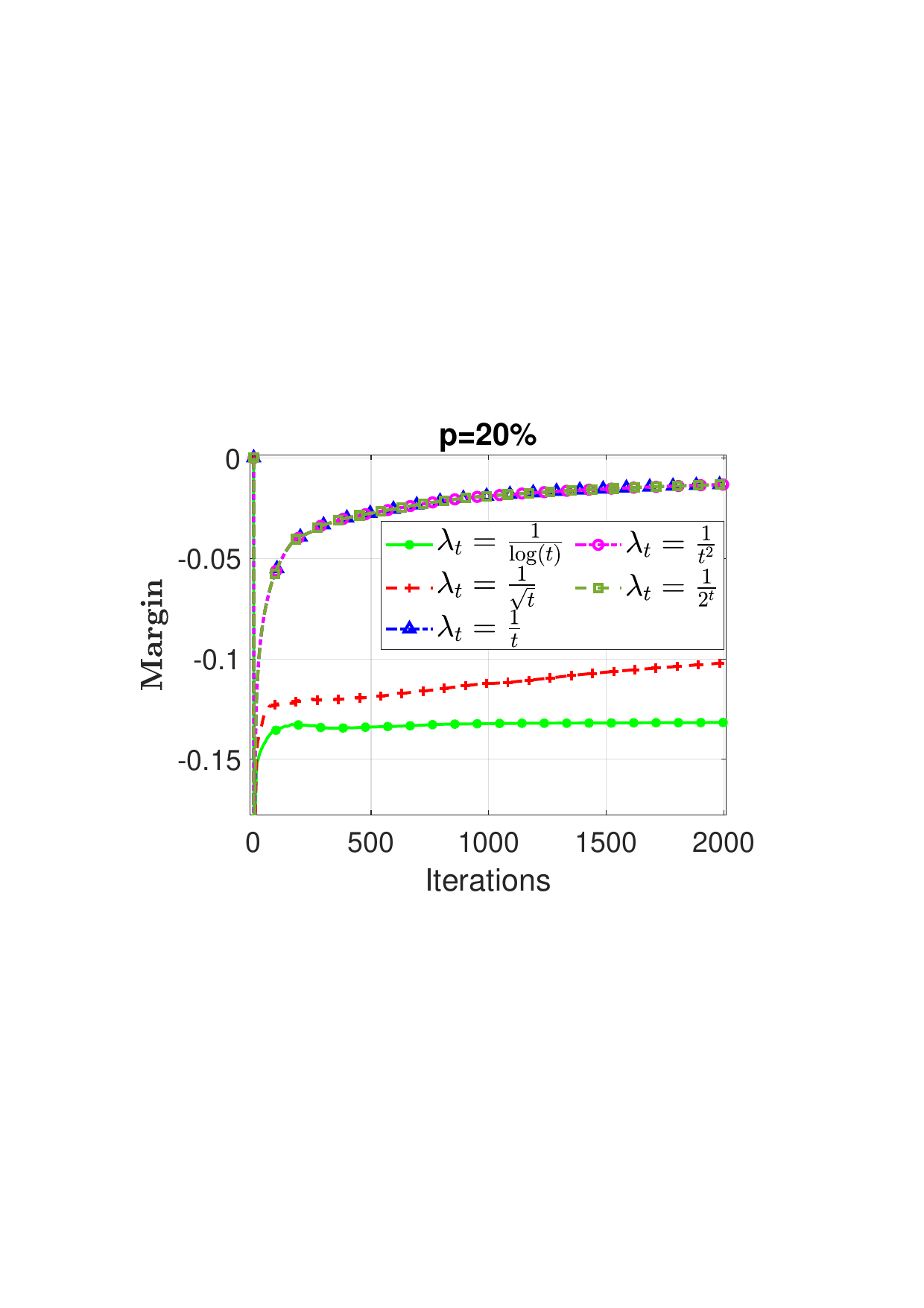}
\includegraphics[trim=2.8cm 99mm 43mm 95mm, scale=0.29]{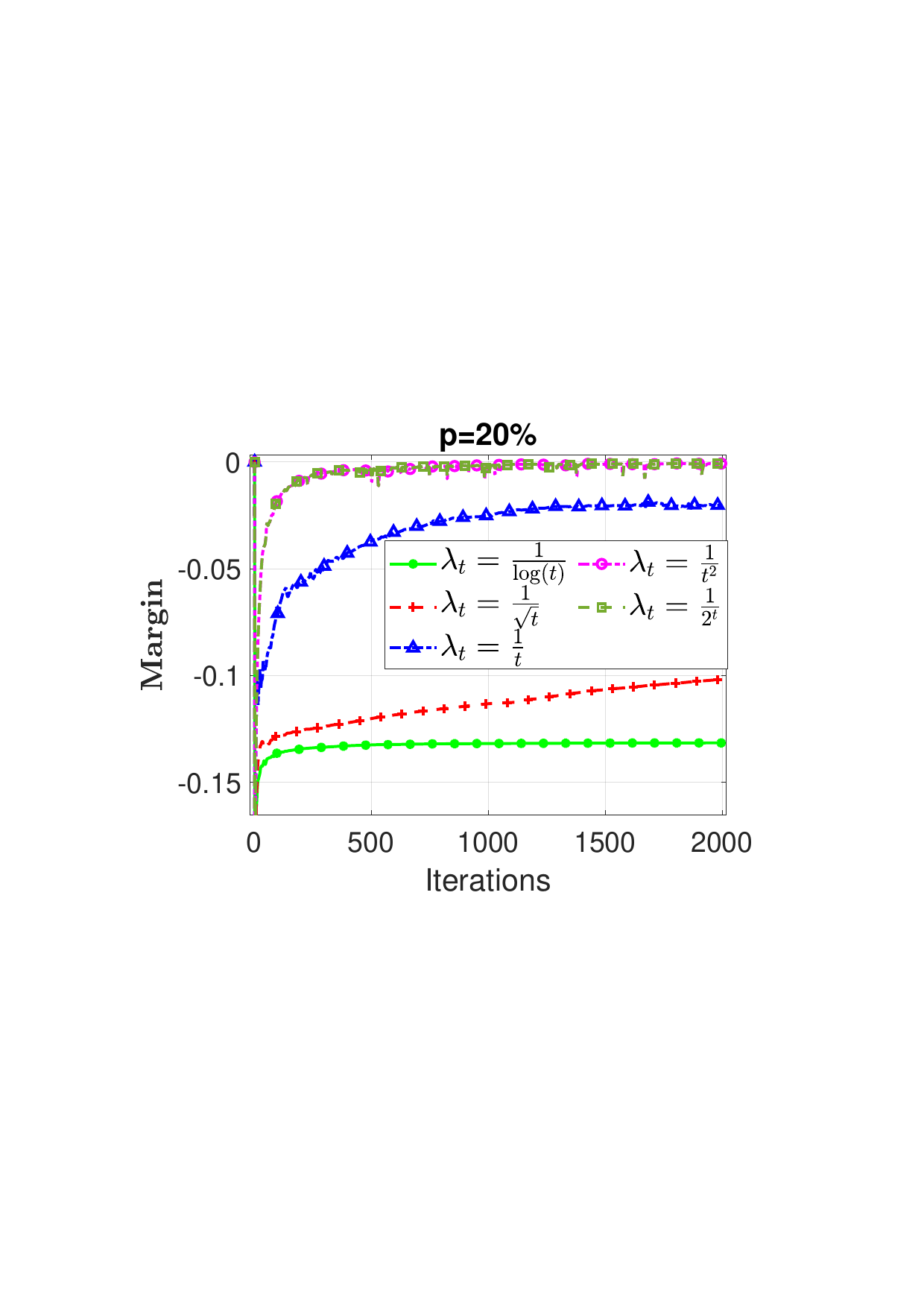} 
\includegraphics[trim=2.8cm 99mm 43mm 95mm, scale=0.29]{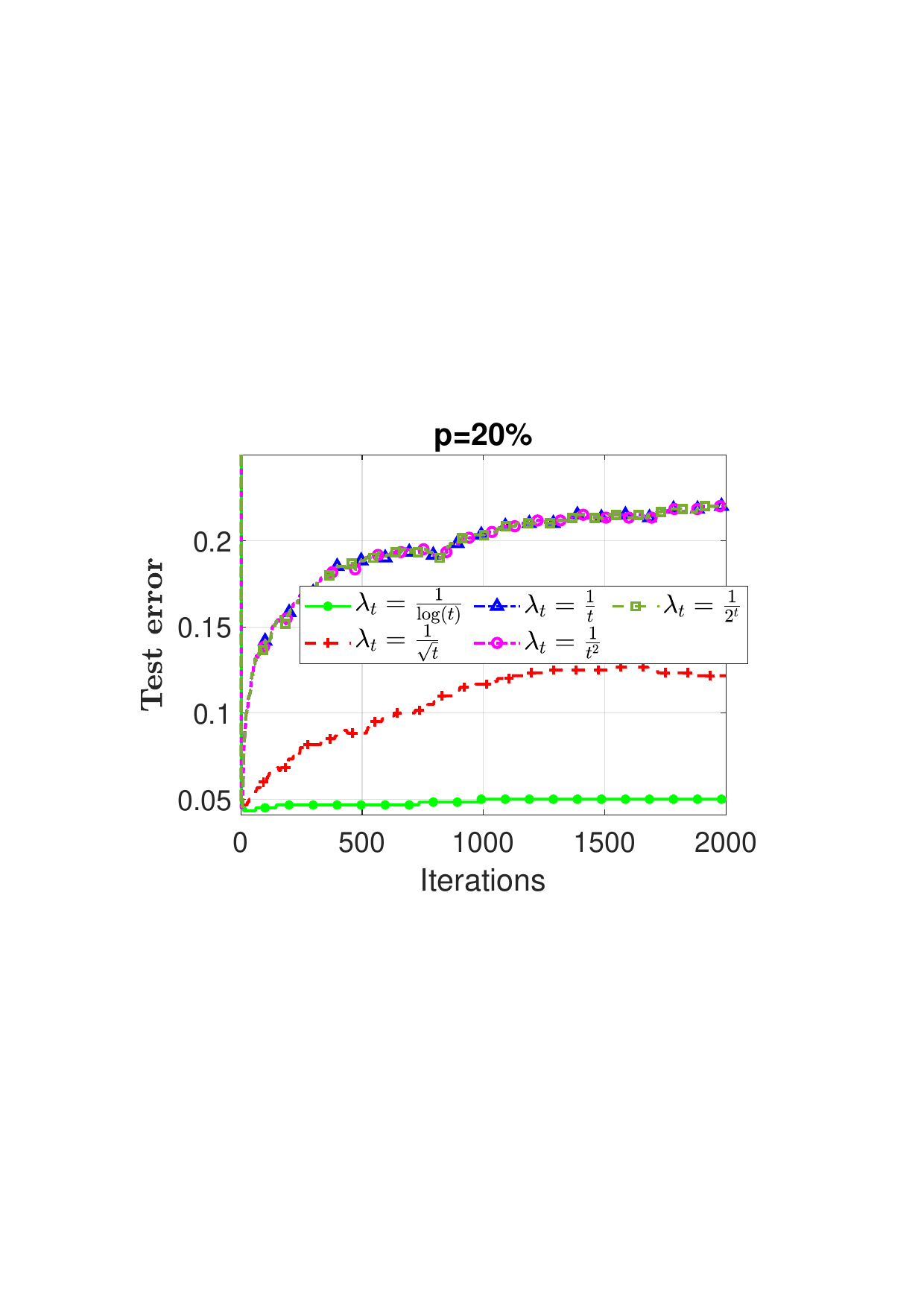}
\includegraphics[trim=2.8cm 99mm 95mm 95mm, scale=0.29]{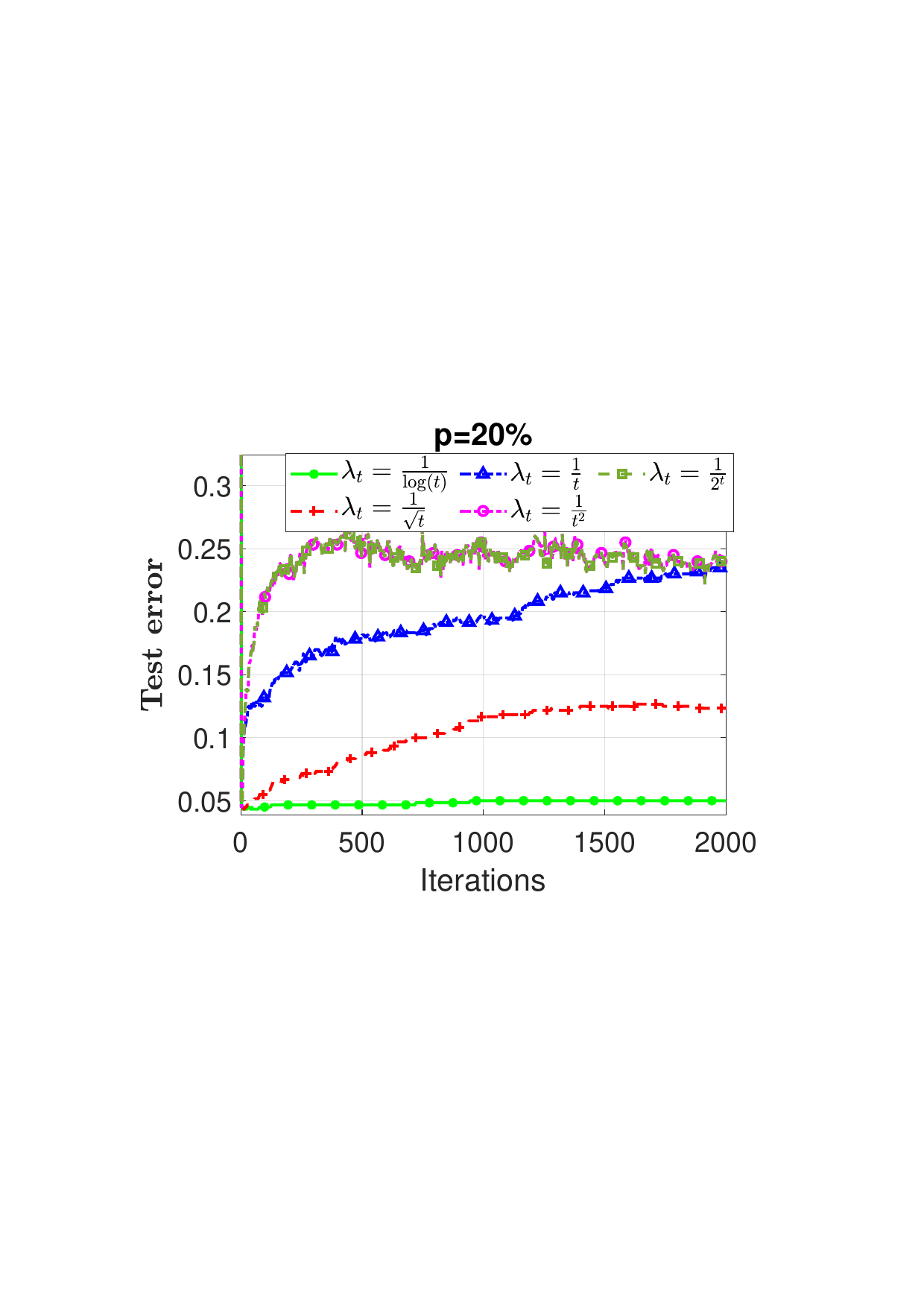} 
\end{center}
\caption{Margin and test error performance of Algorithms \ref{algodualprojGD} and \ref{algodualinertialGD} in noisy dataset for different decay rates of $\lambda_{t}$ ($\lambda_{t}=\frac{\lambda_{0}}{\log(t)}$ in green, $\lambda_{t}=\frac{\lambda_{0}}{\sqrt{t}}$  in red, $\lambda_{t}=\frac{\lambda_{0}}{t}$  in blue, $\lambda_{t}=\frac{\lambda_{0}}{t^2}$  in magenta and $\lambda_{t}=\frac{\lambda_{0}}{2^{t}}$ in khaki), where $\lambda_{0}=8$. Each row corresponds to a different noise level ($\%$ of flipped labels) starting with $0\%$ (first row), $10\%$ (second row) and $20\%$ (third row). The first and third column illustrate the margin and test error of Algorithm \ref{algodualprojGD} respectively, while the second and the fourth column correspond to the margin (resp. test error) of Algorithm \ref{algodualinertialGD}. }\label{Figure4}
\end{figure}

\off{
We test the performance of Algorithms \ref{algodualprojGD} and \ref{algodualinertialGD} ($\alpha=20$), with \cite[Algorithm $1$]{molitor2020bias}, \cite[Section 3.3]{nacson2018convergence}  \cite[Section $2$]{ji2020gradient} and \cite[Algorithm $1$]{ji2021fast}. More precisely Algorithm $1$ in \cite{molitor2020bias} consists of a diagonal nested subgradient descent on the hinge loss. In \cite{nacson2018convergence} the proposed method is a normalized gradient descent on the exponential loss, while in \cite{ji2020gradient} and \cite{ji2021fast}, the authors propose a gradient descent based method and its inertial version (respectively), driven by a (inertial) mirror descent approach on the smoothed margin of the exponential loss. In Table \ref{Table1} we briefly present the proposed methods and their theoretical convergence guarantees in terms of the margin gap(see also Remark \ref{comparisonremark}).
\begin{table}
\begin{center}
\begin{tabular}{c| c c c c c c|} 
 \hline
Algorithm & FB  & i-FB  & subGD  & GD  & GD & iGD \\ 
 
description & on dual & on dual & on primal & variable step & mirror descent & mirror descent \\
 \hline
  
 Loss & hinge & hinge & hinge & exp & exp & exp\\
  \hline
Margin & \(\grandO{(1-q)^{-\frac{t}{2}}}\) & \(\grandO{t^{-1}}\) & \(\grandO{t^{-\frac{1}{6}}}\) & \(\grandO{t^{-\frac{1}{2}}\ln (t)}\) & \(\grandO{t^{-1}}\) & \(\grandO{t^{-2}\ln (t)}\)\\
 \hline
Reference & Thm. \ref{basicteoGD} & Thm. \ref{basicteoiGD} & \cite{molitor2020bias} & \cite{nacson2018convergence} & \cite{ji2020gradient} & \cite{ji2021fast} \\
 \hline
\end{tabular}
\caption{Summary of the tested methods in Figure \ref{Figure3} with the associated margin rates.}\label{Table1}
\end{center}
\end{table}
}

\subsection{Real data-set}

Finally, we test the proposed methods on the MNIST dataset (see \cite{Lecun1998}) and on the HTRU$2$ dataset\footnote{HTRU2 is a data set which describes a sample of pulsar candidates collected during the High Time Resolution Universe Survey (South)\cite{Keith2010}} (see \cite{lyon2016fifty,rando2022ada}). In particular we compare the performance of Algorithms \ref{algodualprojGD} and \ref{algodualinertialGD}, with some of the recent proposed methods for binary classification \cite{nacson2018convergence,soudry2018implicitGD}, and  \cite{ji2020gradient,ji2021fast} in terms of margin convergence and test error. The method in \cite{nacson2018convergence,soudry2018implicitGD} is a normalized gradient descent with variable stepsize on the exponential loss and the one in \cite{ji2020gradient,ji2021fast} is based on an accelerated mirror descent approach on the smoothed margin of the exponential loss. We want to stress out that this comparison is indicative for the theoretical convergence properties of the tested methods and is not meant to be exhaustive. The results are reported in Figure \ref{Figure5}.

\begin{figure}
	\begin{center}
		\includegraphics[trim=10cm 89mm 42mm 89mm, scale=0.298]{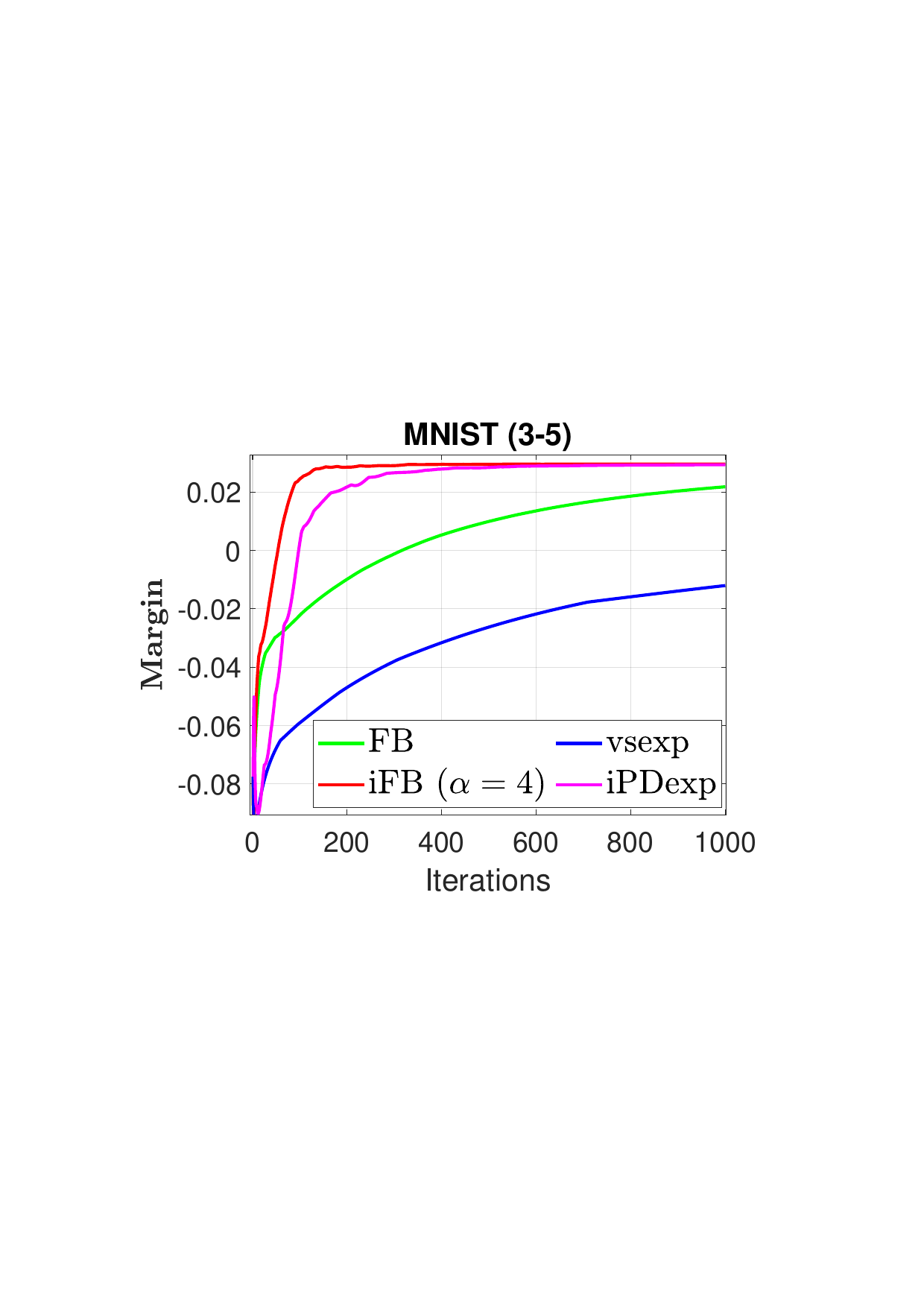}
		\includegraphics[trim=3cm 89mm 42mm 89mm, scale=0.298]{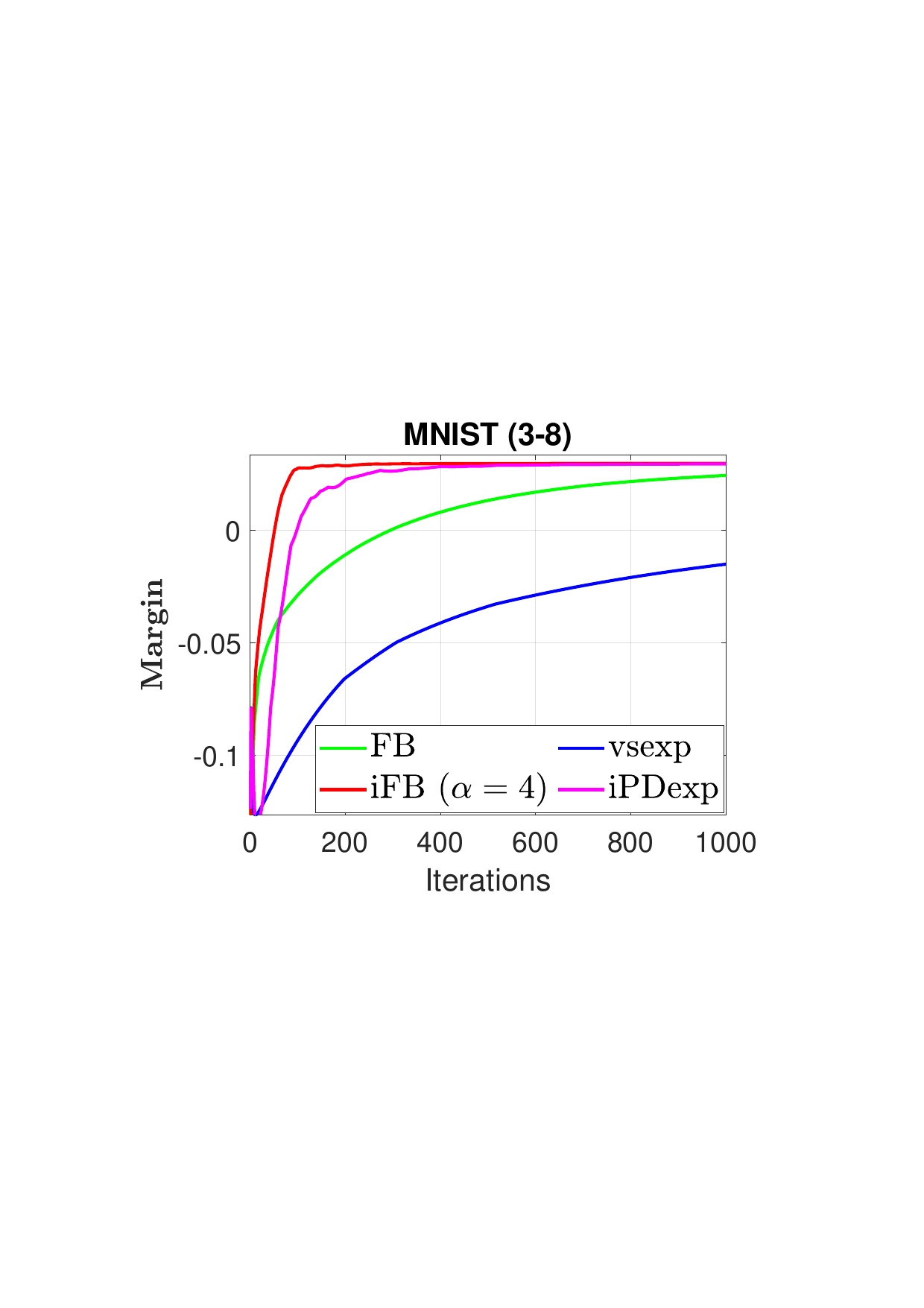}
		\includegraphics[trim=3cm 89mm 42mm 89mm, scale=0.298]{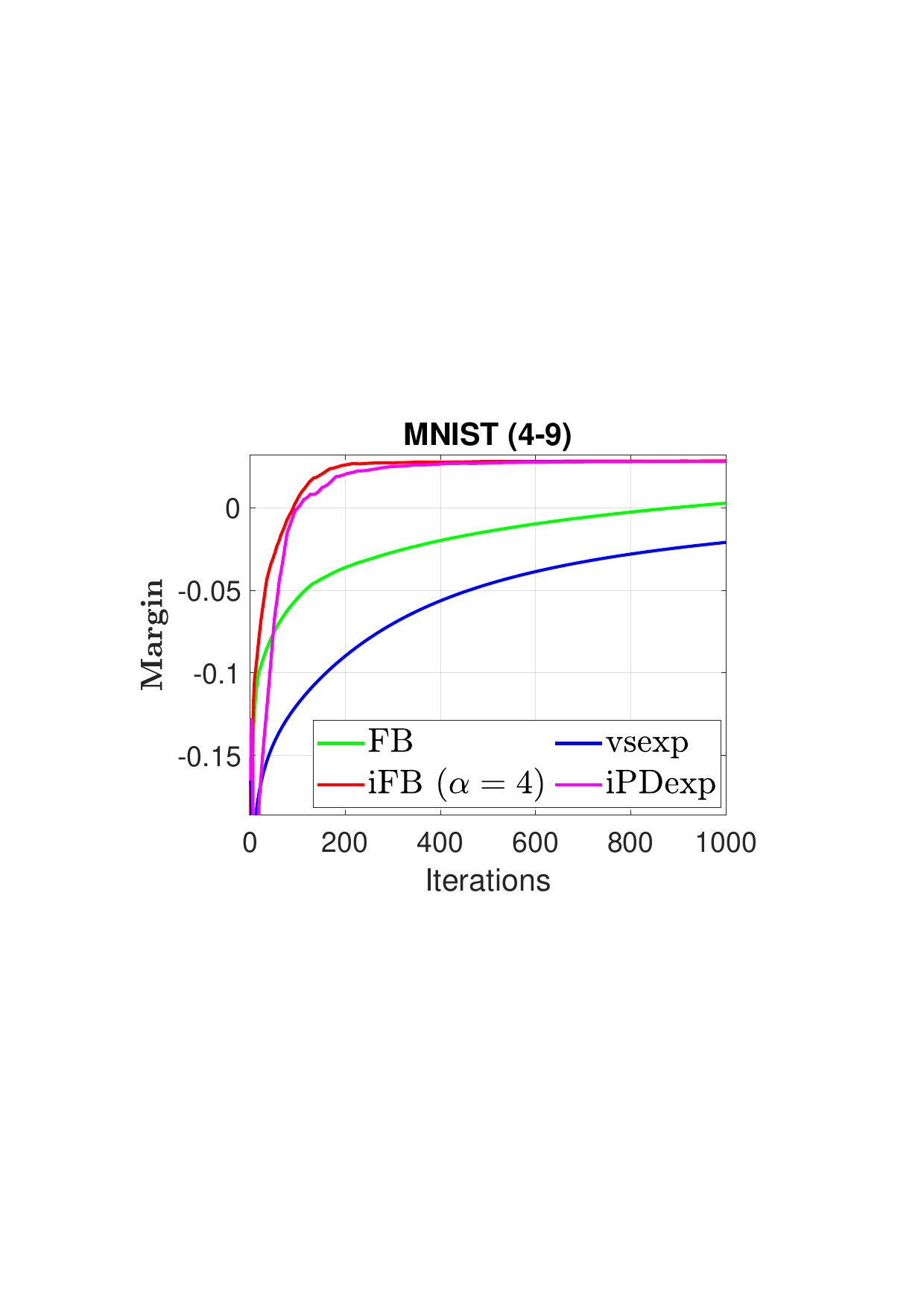} 
		\includegraphics[trim=3cm 89mm 85mm 89mm, scale=0.298]{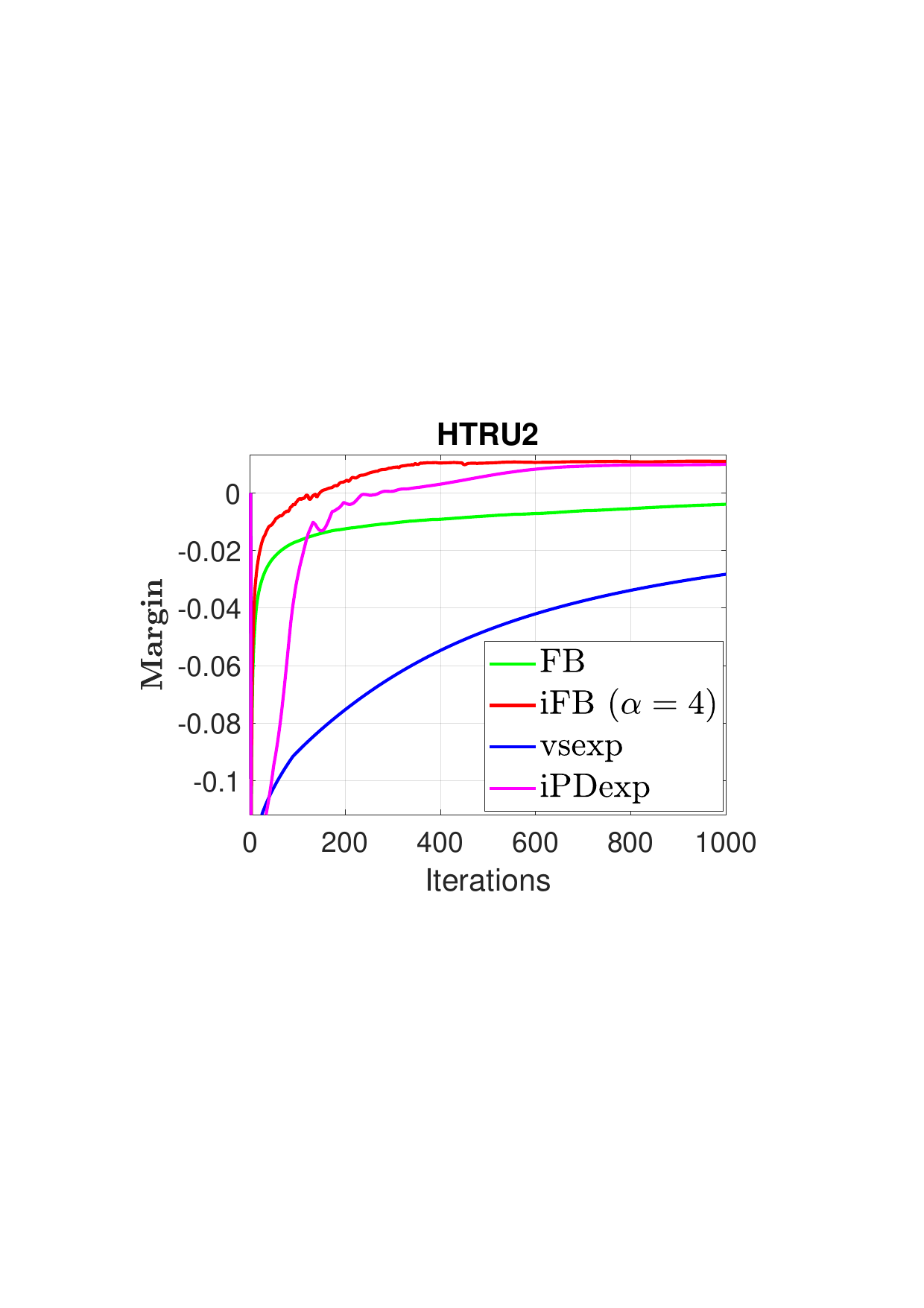} 
		\\
		\includegraphics[trim=10cm 99mm 42mm 95mm, scale=0.298]{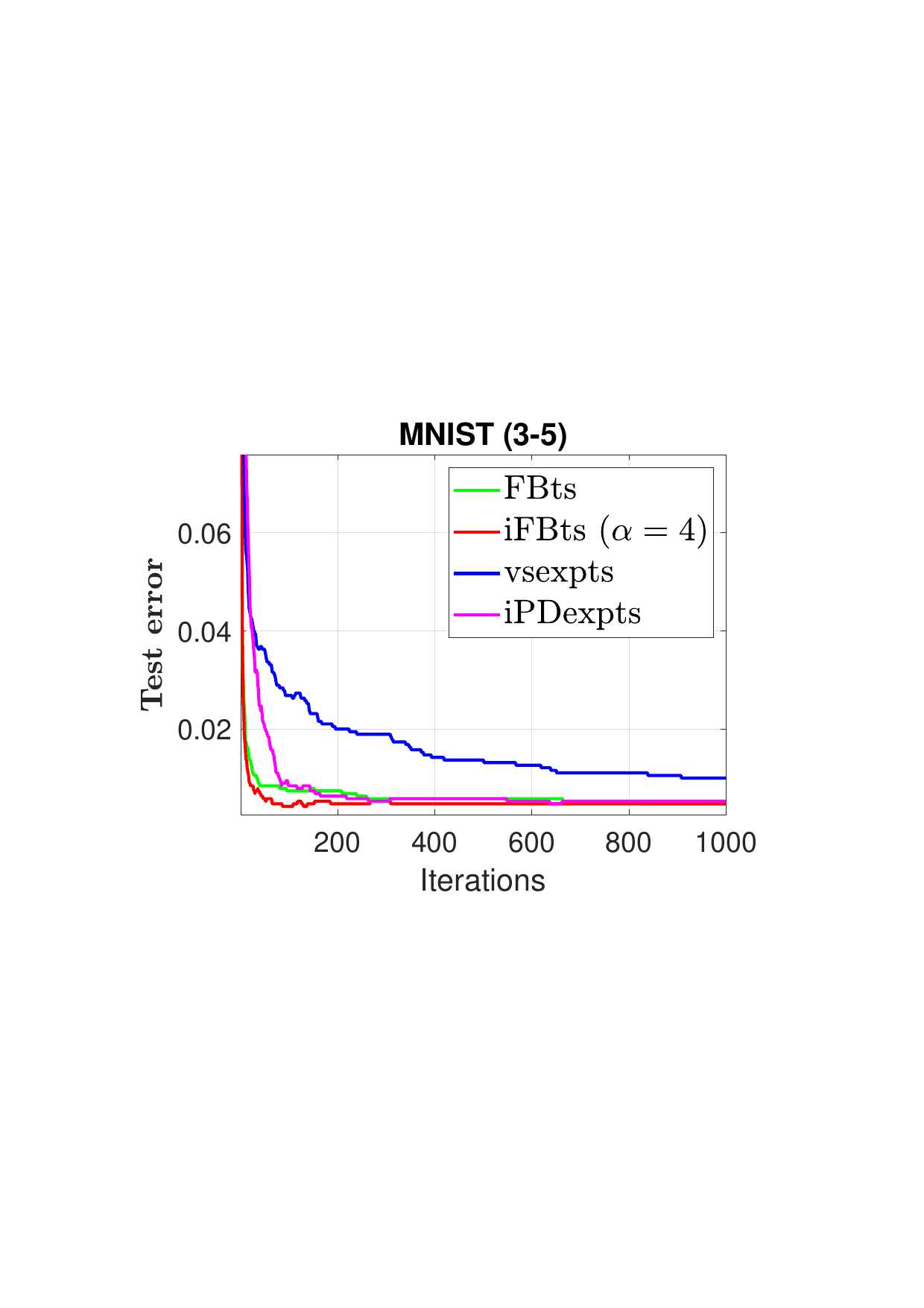}
		\includegraphics[trim=3cm 99mm 42mm 95mm, scale=0.298]{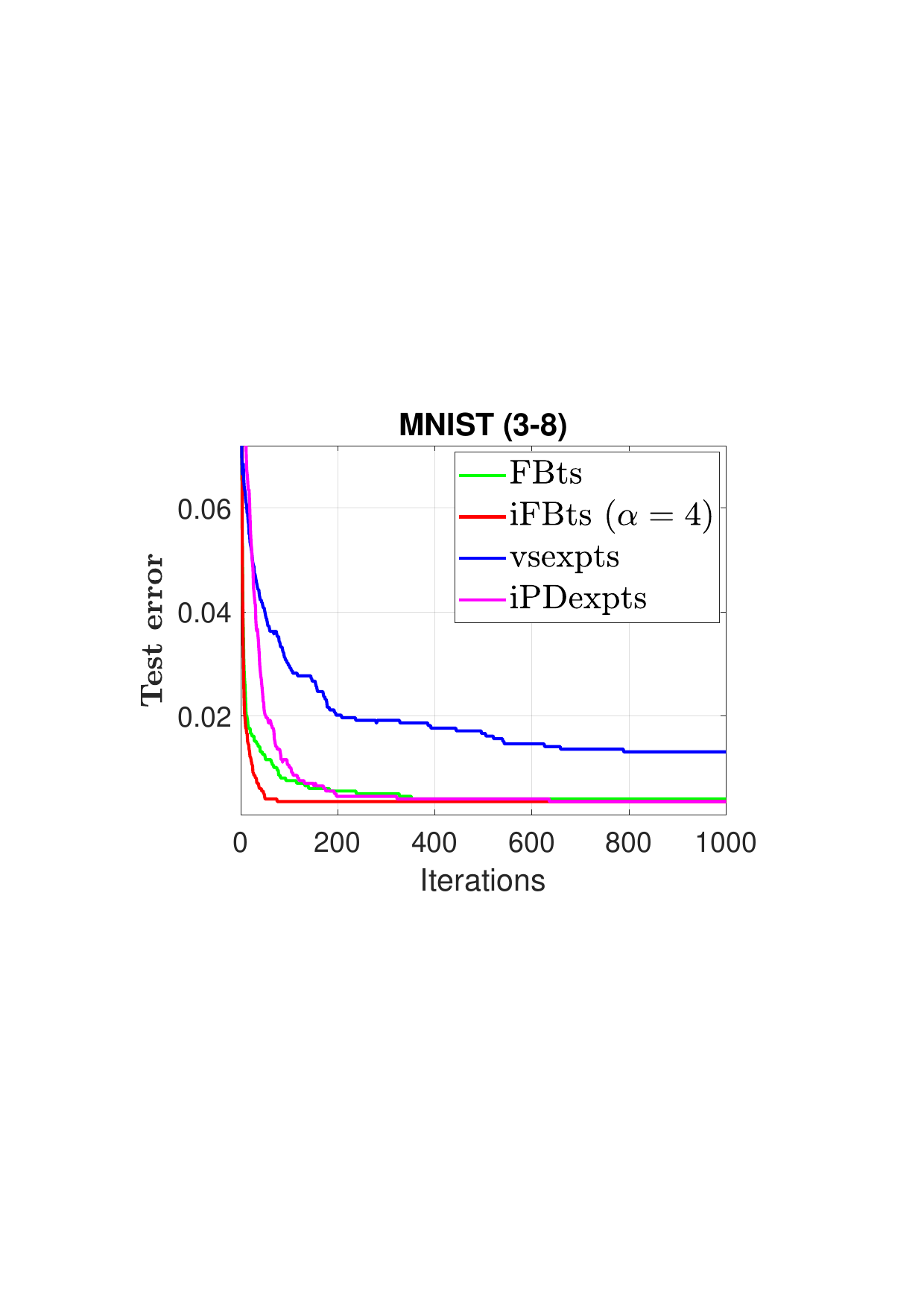}
		\includegraphics[trim=3cm 99mm 42mm 95mm, scale=0.298]{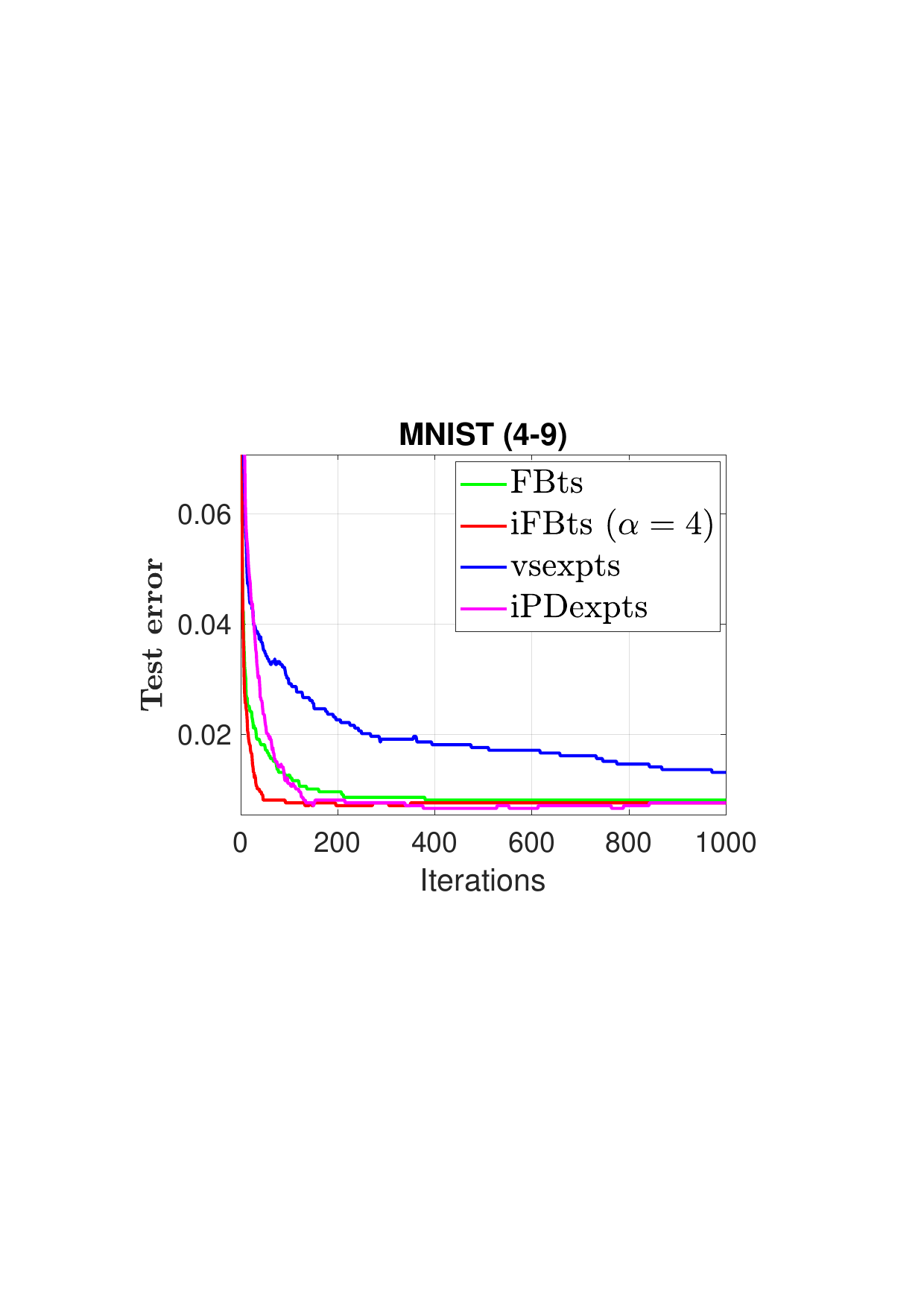} 
		\includegraphics[trim=3cm 99mm 85mm 95mm, scale=0.298]{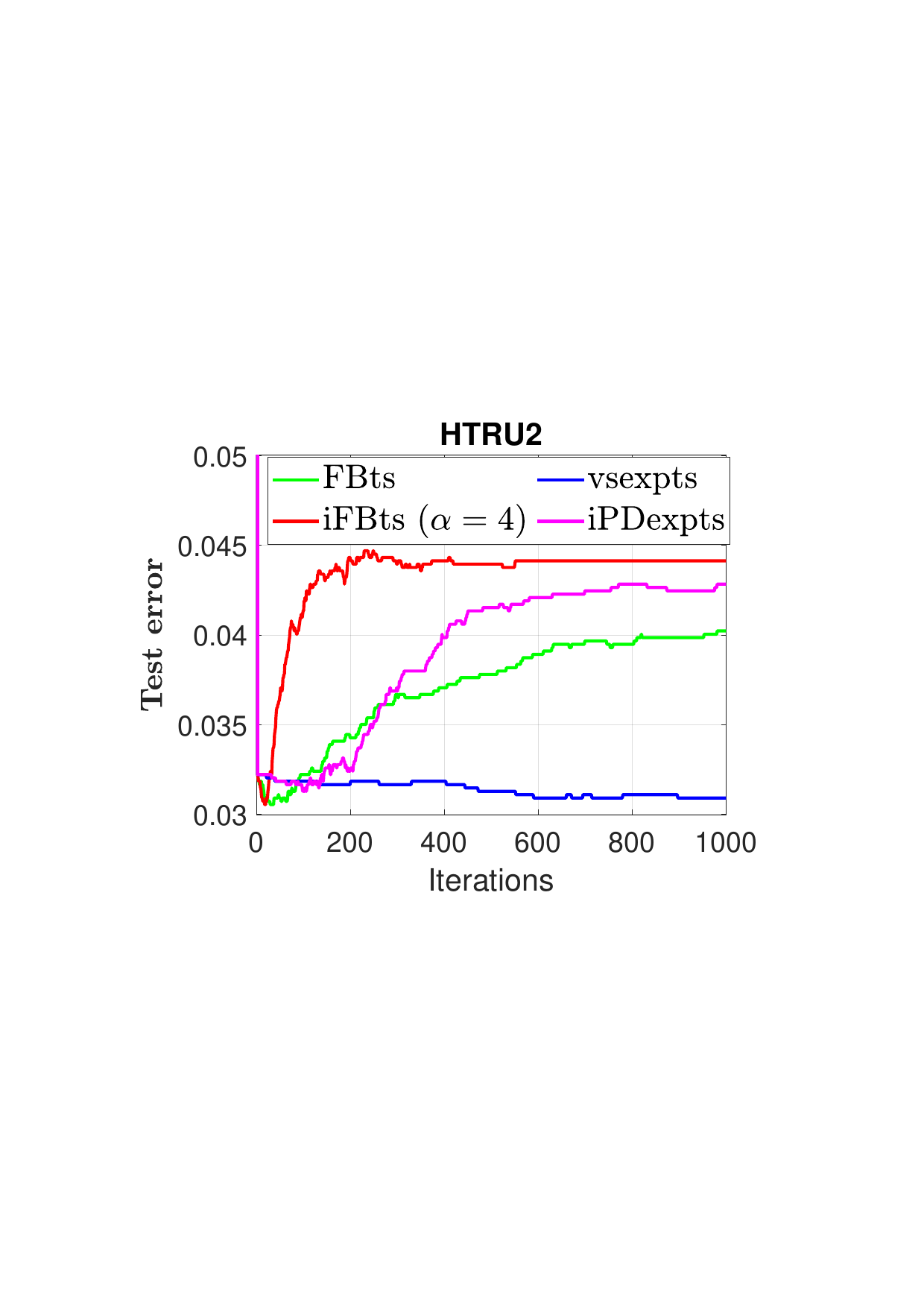} 
	\end{center}
	\caption{Margin and test error comparison on three pairs of MNIST digits (from first to third column) and the HTRU$2$ dataset (fourth column). The first row corresponds to margin gap, while the second one to the test error. In green and red are Algorithms \ref{algodualprojGD} and \ref{algodualinertialGD} respectively. In blue is the variable stepsize normalized gradient method proposed in \cite{soudry2018implicitGD,nacson2018convergence} and in magenta the inertial mirror descent approach in \cite{ji2021fast}. }\label{Figure5}
\end{figure}

All the data were standardized with zero mean and unit standard deviation and the binary labels of each dataset are set to be $-1$ and $1$. The MNIST dataset is restricted to a  two-digits comparison of $3$ vs $5$, $3$ vs $8$ and $4$ vs $9$. For the HTRU$2$ dataset the training part was chosen by randomly picking $70\%$ of the whole dataset, while the test part is composed of the remaining $30\%$ (see e.g. \cite{rando2022ada}). 
All methods are implemented with the Gaussian kernel with parameter $\sigma^2=3.4$ for MNIST digits and $\sigma^{2}=4$ for the HTRU$2$ dataset. Each update use only the kernel evaluation and thus all the schemes have the same computational complexity per iteration. The test error is measured by the standard zero-one loss.

As a general remark Algorithm \ref{algodualinertialGD} has the best performance in terms of margin convergence and test error in the MNIST datasets. For HTRU$2$, while Algorithm \ref{algodualinertialGD} still provides the fastest margin convergence, it performs slightly worse than the method in \cite{ji2021fast} in terms of test error,  whereas Algorithm \ref{algodualprojGD} and the method proposed in \cite{nacson2018convergence} seem to give better test error results.

\section{Conclusion and possible future work}

In this work,  we propose and study  iterative regularization for classification in machine learning.
Considering the hinge loss function, we derive an iterative regularization method  defined by  a dual diagonal optimization approach  and further consider its accelerated variant, see \cite{garrigos2018iterative,calatroni2019accelerated}. 
We provide convergence results, as well as convergence rates,  to the minimum norm separating solution. 
Moreover we derive stability results for a natural classification noise model. 

Several further research directions can be explored. For example it would be interesting to consider other form of regularization, extending  to classification the results for  linear inverse problems, see   \cite{molinari2021iterative} and references therein. Moreover, it would be interesting to consider 
different data model including stochastic noise and random input data as often done in statistical learning theory \cite{vapnik2013nature}. Finally it would be very interesting to consider nonlinear models following \cite{rangamani2021dynamics,pmlr-v97-nacson19a}.

\ackn{}{We acknowledge the financial support of the European
	Research Council (grant SLING 819789), the AFOSR ((European Office of Aerospace
	Research and Development)) project FA9550-18-1-7009 and FA8655-22-1-7034, the EU H2020-MSCA-RISE project NoMADS - DLV-777826, the H2020-MSCA-ITN Project Trade-OPT 2019;  L. R. acknowledges  the Center
	for Brains, Minds and Machines (CBMM), funded by NSF STC award CCF-1231216 and IIT. 
	S.V. and L.R. acnowledge the support of the  Ministry of Education, University and Research (PRIN 202244A7YL project "Gradient Flows and Non-Smooth Geometric Structures with Applications to Optimization and Machine Learning") is part of the Indam group "Gruppo Nazionale per
	l'Analisi Matematica, la Probabilit\`a e le loro applicazioni". This work has been partially supported by project MIS 5154714 of the National Recovery and Resilience Plan Greece 2.0 funded by the European Union under the NextGenerationEU Program.}

\appendix

\section{Appendix}\label{appendixa}

The Appendix is organized as follows: In paragraph \ref{subsection general lemmas} we recall some basic results concerning Tikhonov regularization and gradient descent for linear problems, as also some basic facts on the max-margin problem \ref{max_sphere}. Paragraph \ref{sectionconv} contains some auxiliary results that are needed for the proofs of main Theorems \ref{basicteoGD}, \ref{basicteoiGD} and \ref{basicteostability}.
In paragraphs \ref{subsectionFBdualhinge} and \ref{subsectioniFBdualhinge} we analyze  Algorithm \ref{algodualprojGD} and \ref{algodualinertialGD} (respectively) and provide the proofs of Theorems \ref{basicteoGD} and \ref{basicteoiGD}. Finally in paragraph \ref{subsectionproofstability} we provide the proof of Theorem \ref{basicteostability} regarding stability of Algorithm \ref{algodualprojGD}. 

\subsection{General Lemmas}\label{subsection general lemmas}
In the following two lemmas we establish the regularization properties of Tikhonov regularization and gradient descent to the minimal norm interpolating solution, for general losses.

\begin{lem}[Tikhonov]\label{tikhonovlemma}Let $V:\R^{d}\to \R_{+}$ be a continuous and convex function. For all $\lambda>0$, consider:
\be\label{losstik}
w_\la= \argmin_{w\in R^d} V(w)+\la\nor{w}^2.
\ee
\begin{enumerate}
	\item  If $\argmin V \neq \emptyset$, then 
	\[
	\lim_{\la\to 0} w_\la=w^{\dagger}:=\argmin_{}\{\norme{w} ~ : ~ w\in \argmin V\}. 
	\]
	\item If $\argmin V = \emptyset$, then  $\norme{w_{\lambda}}\underset{\lambda\to 0}{\to}+\infty$
\end{enumerate}

\end{lem}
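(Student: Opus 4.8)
The plan is to split into the two cases exactly as stated, and in both cases exploit a single elementary competitor comparison: since $w_\la$ minimizes $V(w)+\la\nor{w}^2$, for \emph{any} $w\in\R^d$ we have $V(w_\la)+\la\nor{w_\la}^2\le V(w)+\la\nor{w}^2$. First I would treat case (2), which is cleaner and also motivates case (1). Suppose $\argmin V=\emptyset$ but $\nor{w_{\la_k}}$ stays bounded along some sequence $\la_k\to 0$; then, up to a subsequence, $w_{\la_k}\to\bar w$ for some $\bar w\in\R^d$. From the competitor inequality with a fixed test point $w$, $V(w_{\la_k})\le V(w)+\la_k\nor{w}^2$, so letting $k\to\infty$ and using continuity of $V$ gives $V(\bar w)\le V(w)$ for all $w$, i.e. $\bar w\in\argmin V$, a contradiction. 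Hence $\nor{w_\la}\to+\infty$ as $\la\to0$.

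For case (1), assume $\argmin V\neq\emptyset$ and let $m=\min V$, $w^\dagger=\argmin\{\nor w: w\in\argmin V\}$ (this is well-defined: $\argmin V$ is closed and convex by continuity and convexity of $V$, nonempty by hypothesis, so the projection of $0$ onto it exists and is unique). The key two-sided bound comes again from the competitor inequality, now testing against $w^\dagger$: $V(w_\la)+\la\nor{w_\la}^2\le V(w^\dagger)+\la\nor{w^\dagger}^2=m+\la\nor{w^\dagger}^2$. Since $V(w_\la)\ge m$, this yields $\nor{w_\la}\le\nor{w^\dagger}$ for all $\la>0$, so the family $\{w_\la\}$ is bounded. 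Moreover $0\le V(w_\la)-m\le \la(\nor{w^\dagger}^2-\nor{w_\la}^2)\le\la\nor{w^\dagger}^2\to0$, so $V(w_\la)\to m$.

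Now take any sequence $\la_k\to0$ and any cluster point $\bar w$ of $w_{\la_k}$ (one exists by boundedness); passing to the relevant subsequence, $w_{\la_k}\to\bar w$, and by continuity $V(\bar w)=\lim V(w_{\la_k})=m$, so $\bar w\in\argmin V$. Also $\nor{\bar w}=\lim\nor{w_{\la_k}}\le\nor{w^\dagger}$ by the uniform bound above, and since $w^\dagger$ is the \emph{unique} minimal-norm element of $\argmin V$, this forces $\bar w=w^\dagger$. As every cluster point of the bounded family equals $w^\dagger$, we conclude $\lim_{\la\to0}w_\la=w^\dagger$.

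The argument is essentially routine; the only point needing a little care is the well-posedness and uniqueness of $w^\dagger$ — one must note that $\argmin V$, being a sublevel set of the continuous convex function $V$ at its minimum, is closed and convex, hence has a unique element of minimal norm (projection of the origin onto a nonempty closed convex set). Everything else follows from the competitor inequality plus continuity of $V$; convexity of $V$ beyond what is needed for closedness/convexity of $\argmin V$ is not actually used, but it is harmless to keep it in the hypotheses.
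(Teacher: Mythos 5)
Your proposal is correct and follows essentially the same route as the paper's proof: the competitor inequality $V(w_\la)+\la\nor{w_\la}^2\le V(w^\ast)+\la\nor{w^\ast}^2$ yields the uniform bound $\nor{w_\la}\le\nor{w^\ast}$, compactness gives a cluster point, continuity identifies it as a minimizer of $V$ of minimal norm, and uniqueness of the minimal-norm element forces convergence to $w^\dagger$; case (2) is the same contradiction/contrapositive argument in both. Your version is marginally more explicit about the well-posedness and uniqueness of $w^\dagger$ (projection of the origin onto the closed convex set $\argmin V$), which the paper leaves implicit, but this is a presentational rather than a substantive difference.
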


\bp

For the first point, let $w^{\ast}\in \argmin V$ and $J_{\la}(w)= V(w)+\la\nor{w}^2$. By the definition of   $w_\la$ and $w^{\ast}$.
\begin{equation*}
J_{\la}(w_{\la}) \leq J_{\la}(w^{\ast}) \leq V(w_{\lambda})+\lambda\norme{w^{\ast}}^2
\end{equation*}
yielding \begin{equation}\label{lscnorme}
\norme{w_{\lambda}}\leq \norme{w^{\ast}}
\end{equation} which allows to deduce that the sequence $\{w_{\la}\}_\la{>0}$ is uniformly bounded.
This implies  that (up to a subsequence) $w_{\la}$ converges to an element $\bar{w}$.
By \textit{lower semi-continuity} of $V$, we have : $$V(\bar{w})\leq \liminf V(w_{\la}) \leq \liminf J_{\lambda}(w_{\la})\leq \liminf J_{\lambda}(w^{\dagger})=V(w^{\ast})$$
which shows that $\bar{w}\in \argmin V$. In addition from the (weak) lower semi-continuity of the norm and relation \eqref{lscnorme}, we have :
$$\nor{\bar{w}}\leq \liminf\nor{w_{\lambda}}\leq \nor{w^{\ast}}$$
and since the last inequality holds for an arbitrary $w^{\ast}\in \argmin V$, we deduce that $\bar{w}=w^{\dagger}$.

The second point can be proven by contrapositive. Let us assume the existence of some $M>0$, such that $\norme{w_{\lambda}}\leq M$. By following the arguments of the proof of the first point, we deduce the existence of a limit point $w_{\lambda}\to \bar{w}\in \argmin V$, which allows to conclude. 

\ep

\begin{lem}[Gradient descent]\label{gradientdescentlemma}
	Let $X\in \R^{n\times d}$ be a linear operator, with $d>n$ , $y\in \Im(X)$ and $\mathcal{L}:\R^{n}\times\R^{n}\to \R_{+}$ be a proper, convex function with $L$-Lipschitz gradient, such that
	\begin{equation}
		(\forall (u,v)\in \R^{n}\times\R^{n}) \quad \mathcal{L}(u,v)=0 \Leftrightarrow u=v
	\end{equation}
Let also $w_{0}\in \Im(X^{\top})$, $0<\gamma\leq L\norme{X}_{op}^{2}$	and consider the gradient iteration on the first argument of $\mathcal{L}\circ X$, i.e.
\begin{equation}\label{GDiteration}
	w_{t+1}=w_{t}-\gamma X^{\top}\nabla\mathcal{L}(Xw_{t},y)
\end{equation}
	Then 
	\begin{equation}
		\lim_{t\to +\infty} w_t=w^{\dagger}:=\argmin_{}\{\norme{w} ~ : ~ Xw=y\}
	\end{equation}
\end{lem}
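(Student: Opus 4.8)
The plan is to argue in two stages: first show that the iteration converges to \emph{some} solution of $Xw=y$, then use the invariance of $\Im(X^\top)$ under the iteration to identify that limit as the minimal-norm solution $w^\dagger$. For the first stage, set $\Phi(w) = \mathcal{L}(Xw,y)$. Since $\mathcal{L}$ has $L$-Lipschitz gradient and $\nabla\Phi(w) = X^\top \nabla\mathcal{L}(Xw,y)$, the function $\Phi$ has $L\norme{X}_{op}^2$-Lipschitz gradient, so with $0<\gamma\le (L\norme{X}_{op}^2)^{-1}$ the iteration \eqref{GDiteration} is a standard gradient descent step on the convex function $\Phi$. The set of minimizers of $\Phi$ is exactly $\{w : Xw=y\}$: indeed $\Phi\ge 0$ and $\Phi(w)=0 \iff \mathcal{L}(Xw,y)=0 \iff Xw=y$, and since $y\in\Im(X)$ this set is nonempty. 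By the classical convergence theory of gradient descent on convex functions with Lipschitz gradient (e.g. \cite{bauschke2011convex}), the sequence $\{w_t\}$ converges to some minimizer $\bar w$ of $\Phi$, i.e. $X\bar w = y$.

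For the second stage, I would show by induction that $w_t\in\Im(X^\top)$ for all $t$. The base case is the hypothesis $w_0\in\Im(X^\top)$; for the inductive step, $w_{t+1} = w_t - \gamma X^\top \nabla\mathcal{L}(Xw_t,y)$ is a sum of $w_t\in\Im(X^\top)$ and $-\gamma X^\top(\cdot)\in\Im(X^\top)$, hence lies in $\Im(X^\top)$. Since $\Im(X^\top)$ is closed (finite-dimensional), the limit satisfies $\bar w\in\Im(X^\top)$. It remains to identify $\bar w$ with $w^\dagger$. Decompose $\R^d = \Im(X^\top)\oplus\ker(X)$ orthogonally. Any solution $w$ of $Xw=y$ can be written $w = \bar w + v$ with $v\in\ker(X)$ (because the difference of two solutions lies in $\ker X = \Im(X^\top)^\perp$), and then $\norme{w}^2 = \norme{\bar w}^2 + \norme{v}^2 \ge \norme{\bar w}^2$ by Pythagoras and $\bar w\perp v$. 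Hence $\bar w$ is the unique minimal-norm element of the solution set, i.e. $\bar w = w^\dagger$.

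The only genuine obstacle is invoking cleanly the convergence of gradient descent to a minimizer (not just convergence of function values): one needs that the iterates themselves converge, which holds for convex $C^{1,1}$ functions with $\gamma$ in the prescribed range. This is standard and available in \cite{bauschke2011convex}; if one preferred a self-contained argument, one could note that $\{w_t\}$ is Fej\'er-monotone with respect to $\argmin\Phi$, so it is bounded and every cluster point is a minimizer (by $\nabla\Phi(w_t)\to 0$ and closedness of $\argmin\Phi$), and then Fej\'er monotonicity promotes subsequential convergence to full convergence. Everything else — the Lipschitz constant bookkeeping, the characterization of $\argmin\Phi$, the range invariance, and the Pythagoras argument — is routine.
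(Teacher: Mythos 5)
Your proposal is correct and follows essentially the same route as the paper: invoke the standard convergence of gradient descent on the convex $C^{1,1}$ function $w\mapsto\mathcal{L}(Xw,y)$ to some minimizer, observe by induction that the iterates stay in the closed subspace $\Im(X^{\top})=(\ker X)^{\perp}$, and identify the limit with $w^{\dagger}$ using the decomposition of the solution set as $\{w^{\dagger}\}+\ker(X)$. The only (cosmetic) differences are that you justify non-emptiness of the solution set directly from $y\in\Im(X)$ rather than from $d>n$, and you conclude minimality of the norm via Pythagoras instead of citing the characterization $w^{\dagger}\in(\ker X)^{\perp}$; you also correctly read the step-size condition as $\gamma\leq (L\norme{X}_{op}^{2})^{-1}$, which is what the statement must mean.
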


\begin{proof}
First of all, since $d>n$, the set of minimizers of $\mathcal{L}(X\cdot,y)$ is non-empty (i.e. $\exists w\in \R^{d}$, such that $Xw=y$).  By the standard gradient descent analysis (see for example \cite[Paragraph $2.1.5$]{nesterov2013introductory}), it holds that $w_{t}\underset{t\to\infty}{\to}\bar{w}\in \argmin \mathcal{L}(X\cdot,y)$.

In addition, since $w_{0}\in \Im (X^{\top})$, by using \eqref{GDiteration}, and a recurrence argument, we have that t$\{w_{t}\}_{t\geq0}\in \Im (X^{\top})$. By closeness of $\Im (X^{\top})$, it follows that $\bar{w}\in \Im (X^{\top})$.
By definition of $\mathcal{L}$, we have that the set of minimizers $\argmin\mathcal{L}(X\cdot,y)=\{w ~ : ~ Xw=y\}$ is an affine space, hence it can be expressed as $\argmin\mathcal{L}(X\cdot,y) = {\proj_{\argmin\mathcal{L}(X\cdot,y)}(0) }+V$, for some suitable space $V$, where $\proj_{\argmin\mathcal{L}(X\cdot,y)}(0) $ denotes the projection of the $0$ element in $\argmin\mathcal{L}(X\cdot,y)$ and is equal to $\{w^{\dagger}\}$, by definition. 

Let us show that $V = \ker(X)$. Indeed for any $w\in \argmin\mathcal{L}(X\cdot,y)$, we have $w=w^{\dagger} +v$, $v\in V$ and
$y=Xw=Xw^{\dagger} +Xv = y +Xv$, which gives $V\subset\ker(X)$. On the other hand, if $u\in \ker(X)$ and $w^{\prime}=w^{\dagger}+u$, we have that $Xw^{\prime}=Xw^{\dagger}=y$, which allows to conclude that $\argmin\mathcal{L}(X\cdot , y)=\{w^{\dagger}\}+\ker(X)$. By definition of the minimal norm solution it holds that $w^{\dagger}\in (\ker X)^{\perp}$ (see \cite[Proposition $2.3$ and Theorem $2.5$]{engl1996regularization}). By combining the facts $\bar{w} = w^{\dagger}+ u$, $u\in \ker X$, $\bar{w}\in \Im(X^{\top})=(\ker X)^{\perp}$ and $w^{\dagger}\in (\ker X)^{\perp}$, it follows that $\bar{w}=w^{\dagger}$.
\end{proof}

The next lemma establishes the equivalence between the max-margin problem \eqref{max_sphere} and the min-norm problem \eqref{min_norm}. In fact the result holds true for general positively $1$-homogeneous features, i.e. by replacing $M(w)= \min_{i=1, \dots, n} y_i \scal{w}{x_{i}}$ in \eqref{max_sphere} and \eqref{min_norm}, by the general margin:
\begin{equation}\label{marginh}
M(w)= \min_{i=1, \dots, n} y_i h_{i}(w).
\end{equation}
\begin{lem}
[Max margin \& min norm]\label{lemmaClassificationEquivalence}
Let $h_{i}:\R^d\to \R $ be a family of positively $1$-homogeneous functions for all $i\leq n$ and consider the following optimization problems:
\begin{equation}\label{max_sphereh}
w_{+}=\argmax\{M(w)= \min_{i=1, \dots, n} y_i h_{i}(w) ~ : ~ \nor{w}=1\}
\end{equation}

\begin{equation}\label{min_normh}
w_{\ast}=\argmin\{\nor{w}~ : ~ M(w)= \min_{i=1, \dots, n} y_i h_{i}(w)\geq 1\}.
\end{equation}
Then Problem~\eqref{max_sphereh} is equivalent to Problem~\eqref{min_normh}. 
In particular, if $w_\ast$ is the solution of Problem~\eqref{min_normh} then $w_+=w_\ast/\nor{w_\ast}$ is a solution of Problem~\eqref{max_sphereh}. 
Further, if $w_+$ is the solution of Problem~\eqref{max_sphereh} then $w_\ast=\frac{w_+ }{M(w_+)}$ is a solution of Problem~\eqref{min_normh}.
\end{lem}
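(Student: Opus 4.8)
The plan is to exploit the positive $1$-homogeneity of the margin functional and to translate both problems into the single scale-invariant quantity $w \mapsto M(w)/\norme{w}$. First I would observe that, since each $y_i h_i$ is positively $1$-homogeneous, so is $M=\min_i y_i h_i$: for $t>0$, $M(tw) = \min_i y_i h_i(tw) = \min_i t\, y_i h_i(w) = t\,M(w)$. Consequently, for every $w\neq 0$ one has $M(w)/\norme{w} = M(w/\norme{w})$, so
\[
\sup_{\norme{w}=1} M(w) \;=\; \sup_{w\neq 0} \frac{M(w)}{\norme{w}} \;=:\; \kappa ,
\]
and $\kappa$ is precisely the optimal value of Problem~\eqref{max_sphereh}. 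Since the feasible set of Problem~\eqref{min_normh} is nonempty (by Assumption~\ref{assumptionseparate}), there is some $\tilde w$ with $M(\tilde w)\geq 1>0$, whence $\kappa \geq M(\tilde w)/\norme{\tilde w} > 0$; in particular any maximizer $w_+$ of~\eqref{max_sphereh} satisfies $M(w_+)=\kappa>0$.

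Next I would prove the implication \eqref{min_normh}$\Rightarrow$\eqref{max_sphereh}. Let $w_\ast$ solve~\eqref{min_normh}. I claim $M(w_\ast)=1$: feasibility gives $M(w_\ast)\geq 1$, and if $M(w_\ast)>1$ then $w_\ast/M(w_\ast)$ is still feasible (its margin is $1$ by homogeneity) but has strictly smaller norm, contradicting optimality. Hence $w_+ := w_\ast/\norme{w_\ast}$ has unit norm and $M(w_+) = 1/\norme{w_\ast}$. To see $w_+$ is optimal for~\eqref{max_sphereh}, take any $v$ with $\norme{v}=1$; if $M(v)\leq 0$ there is nothing to prove, and if $M(v)>0$ then $v/M(v)$ satisfies $M(v/M(v))=1$, hence is feasible for~\eqref{min_normh}, so $\norme{v}/M(v)=\norme{v/M(v)}\geq \norme{w_\ast}$, i.e. $M(v)\leq 1/\norme{w_\ast} = M(w_+)$. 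Thus $w_+$ maximizes $M$ on the unit sphere.

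For the converse \eqref{max_sphereh}$\Rightarrow$\eqref{min_normh}, let $w_+$ solve~\eqref{max_sphereh}; by the first paragraph $M(w_+)=\kappa>0$, so $w_\ast := w_+/M(w_+)$ is well defined, has $M(w_\ast)=1$, and is therefore feasible for~\eqref{min_normh}. For minimality, take any $v$ with $M(v)\geq 1$; then $v/\norme{v}$ lies on the unit sphere, so optimality of $w_+$ gives $M(v)/\norme{v} = M(v/\norme{v}) \leq M(w_+)$, whence $\norme{v}\geq M(v)/M(w_+)\geq 1/M(w_+) = \norme{w_\ast}$. Hence $w_\ast$ minimizes~\eqref{min_normh}, and combining the two directions also yields $M(w_\ast)=1$ and $M(w_+)=1/\norme{w_\ast}$. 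The only real subtlety — and the point I would be most careful about — is making sure every rescaled point remains feasible and that the optimal margin value $\kappa$ is strictly positive, which is exactly where the separability/nonemptiness hypothesis is used; everything else is routine bookkeeping with the identity $M(tw)=tM(w)$ for $t>0$.
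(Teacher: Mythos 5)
Your proof is correct, and it reaches the conclusion by a somewhat different route than the paper. The paper proceeds by a chain of problem reformulations: it introduces a slack variable $\gamma$ to rewrite the margin, substitutes $w=w'/\nor{w'}$, uses homogeneity to pull the norm out, rescales $\tilde\gamma=\gamma\nor{w'}$, and finally normalizes $v=w'/\tilde\gamma$ to land exactly on the min-norm problem, reading off the correspondences $w_+=w_\ast/\nor{w_\ast}$ and $w_\ast=w_+/M(w_+)$ from the accumulated changes of variable. You instead isolate the single scale-invariance identity $M(w)/\nor{w}=M(w/\nor{w})$ and give two direct optimality verifications, comparing each candidate against an arbitrary competitor; along the way you prove $M(w_\ast)=1$ explicitly (by the rescaling contradiction) and handle the degenerate case $M(v)\le 0$, both of which the paper's slack-variable formulation (which restricts to $\gamma>0$) passes over. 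The two arguments rest on the same core fact, but yours is more elementary and self-contained, while the paper's transformation chain makes the structural identification of the two problems (same solution set up to scaling) slightly more transparent. One small caveat: you invoke Assumption \ref{assumptionseparate} for nonemptiness of the feasible set, which is stated for the linear case, whereas the lemma is phrased for general positively $1$-homogeneous $h_i$; in that generality you should simply take feasibility (equivalently, existence of $w_\ast$) as a hypothesis, which is what both you and the paper implicitly do anyway.
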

\bp
First of all, from the definition of $M$, see~\eqref{marginh},  we have that for all $w\in \R^d\setminus\{0\}$
\be\label{marginslack}
M(w)=  \max_{\gamma>0} \gamma
\qquad  \text{s.t. } ~ \norme{w}=1 ~ ~ \& \quad y_i h_i(w)\ge \gamma, \qquad i=1, \dots, n.
\ee

By making the change of variable $w=\frac{w^{\prime}}{\norme{w^{\prime}}}$ and taking into consideration \eqref{marginslack}, we can rewrite the max margin problem related to~\eqref{max_sphereh} as 
$$
\max_{w'\in \R^d\setminus\{0\}, \gamma>0} \gamma \qquad  \text{s.t.} \qquad y_i h_i(w'/\nor{w'})\ge \gamma, \qquad i=1, \dots, n,
$$
and using the homogeneity property of $h_{i}$, 
\be\label{equivalence2}
\max_{w'\in \R^d\setminus\{0\}, \gamma>0} \gamma \qquad  \text{s.t.} \qquad y_i \frac{1}{\nor{w'}}h_i(w')\ge \gamma, \qquad i=1, \dots, n.
\ee
Then setting $\tilde \gamma = \gamma \nor{w'}$,  \eqref{equivalence2} can be equivalently written as
\be\label{equivalence3}
\max_{w'\in \R^d\setminus\{0\}, \tilde  \gamma>0} \frac{\tilde \gamma}{\nor{w'}} \qquad  \text{s.t.} \qquad y_i h_i(w')\ge \tilde \gamma, \qquad i=1, \dots, n.
\ee
Since the above problem is still scale invariant by letting $v=\frac{w^{\prime}}{\tilde{\gamma}}$,
\eqref{equivalence3} is equivalent to
$$
\max_{v\in \R^d\setminus\{0\}} \frac{1}{\nor{v}} \qquad  \text{s.t.} \qquad y_i h_i(v)\ge 1, \qquad i=1, \dots, n.
$$
which is equivalent to the min-norm problem related to ~\eqref{min_normh}  (in the sense that the they have the same set of solutions) . By taking into consideration all the change of variables, it follows that $w_{+}=\frac{w_{\ast}}{\norme{w_{\ast}}}$ and $w_{\ast}=\frac{w_{+}}{M(w_{+})}$, as also that $M(w_{\ast})=1$ and $M(w_{+})=\frac{1}{\norme{w_{\ast}}}$.
\ep

The following Lemma gives an expression of the dependence of the margin and the angle gap as a function of the gap of a method's iterates approaching to the hard-margin solution $w_{\ast}$.  

\begin{lem}[Lemma $2$ \cite{molitor2020bias}: Bounds for angle and margin]\label{implicationrates}
Let $\delta>0$ and $c>0$, and $w_{\ast}\neq 0$ be the min-norm solution as defined in \eqref{min_norm}. Let $\{w_{t}\}_{t\geq1}$ be a sequence such that $\norme{w_{t}}\geq \delta$. Then the following  estimates hold true :
\begin{equation} 1-\frac{\scal{w_{t}}{w_{\ast}}}{\norme{w_{t}}\norme{w_{\ast}}}\leq \frac{1}{2\delta\norme{w_{\ast}}}\norme{w_{t}-w_{\ast}}^2
\end{equation}
\begin{equation}
M\left(\frac{w_{\ast}}{\norme{w_{\ast}}}\right)-M\left(\frac{w_{t}}{\norme{w_{t}}}\right) \leq \frac{\norme{X}_{F}}{\delta\norme{w_{\ast}}}\norme{w_{t}-w_{\ast}}
\end{equation}
\end{lem}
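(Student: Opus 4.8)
The plan is to establish the two inequalities separately, each by elementary means. The only tools needed are the polarization identity, the AM--GM inequality $\norme{a}^{2}+\norme{b}^{2}\geq 2\norme{a}\norme{b}$, and the observation that the margin $M(w)=\min_{i=1,\dots,n}y_{i}\scal{w}{x_{i}}$ is a pointwise minimum of linear functionals.

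For the angle bound I would expand $\norme{w_{t}-w_{\ast}}^{2}=\norme{w_{t}}^{2}-2\scal{w_{t}}{w_{\ast}}+\norme{w_{\ast}}^{2}$ and apply AM--GM to $\norme{w_{t}}^{2}+\norme{w_{\ast}}^{2}$, which gives $\norme{w_{t}-w_{\ast}}^{2}\geq 2\big(\norme{w_{t}}\norme{w_{\ast}}-\scal{w_{t}}{w_{\ast}}\big)$. Dividing by $2\norme{w_{t}}\norme{w_{\ast}}>0$ yields $1-\tfrac{\scal{w_{t}}{w_{\ast}}}{\norme{w_{t}}\norme{w_{\ast}}}\leq\tfrac{\norme{w_{t}-w_{\ast}}^{2}}{2\norme{w_{t}}\norme{w_{\ast}}}$, and the hypothesis $\norme{w_{t}}\geq\delta$ finishes this part. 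Note this step uses only $w_{\ast}\neq 0$, not that $w_{\ast}$ is the min-norm solution.

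For the margin bound I would first record two structural facts about $M$. First, $M$ is positively $1$-homogeneous and is Lipschitz with constant $\max_{i}\norme{x_{i}}\leq\norme{X}_{F}$: if $j$ attains the minimum defining $M(u)$, then for any $v$ one has $M(u)-M(v)\leq y_{j}\scal{u-v}{x_{j}}\leq\norme{x_{j}}\norme{u-v}$. Second, by Lemma~\ref{lemmaequivalencemaxmin} one has $M(w_{\ast})=1$ and $w_{+}=w_{\ast}/\norme{w_{\ast}}$, so by homogeneity $M(w_{+})=1/\norme{w_{\ast}}$. Applying the Lipschitz estimate to the unit vectors $w_{+}$ and $w_{t}/\norme{w_{t}}$ gives $M(w_{+})-M(w_{t}/\norme{w_{t}})\leq\norme{X}_{F}\,\norme{w_{+}-w_{t}/\norme{w_{t}}}$, so it remains to bound the normalized difference by $\norme{w_{t}-w_{\ast}}$. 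For this I would reuse the angle bound already proved, since for unit vectors $\norme{w_{+}-w_{t}/\norme{w_{t}}}^{2}=2\big(1-\tfrac{\scal{w_{t}}{w_{\ast}}}{\norme{w_{t}}\norme{w_{\ast}}}\big)\leq\tfrac{\norme{w_{t}-w_{\ast}}^{2}}{\delta\norme{w_{\ast}}}$; alternatively one estimates $\norme{w_{+}-w_{t}/\norme{w_{t}}}$ directly via the triangle inequality after writing $w_{+}-w_{t}/\norme{w_{t}}=\tfrac{w_{\ast}-w_{t}}{\norme{w_{\ast}}}+w_{t}\big(\tfrac{1}{\norme{w_{\ast}}}-\tfrac{1}{\norme{w_{t}}}\big)$ and using $\big|\norme{w_{t}}-\norme{w_{\ast}}\big|\leq\norme{w_{t}-w_{\ast}}$. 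Combining either estimate with $\norme{w_{t}}\geq\delta$ then produces the stated bound.

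All the computations are routine; the one point that needs care is this last step, namely organizing the estimate of the normalized difference $\norme{w_{+}-w_{t}/\norme{w_{t}}}$ so as to land on exactly the constant $\norme{X}_{F}/(\delta\norme{w_{\ast}})$ in the statement, since the several natural ways of bounding it (through the angle bound versus through the triangle inequality) give constants differing by factors involving $\norme{w_{\ast}}$ and $\delta$; one should pick the one matching the claim, which is anyway immaterial for the convergence rate that the lemma feeds into.
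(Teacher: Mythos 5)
Your argument is correct and follows essentially the same route as the paper: the angle bound via the polarization identity (the paper drops the nonnegative term $\bigl(\norme{w_{t}}-\norme{w_{\ast}}\bigr)^{2}$, which is exactly your AM--GM step), and the margin bound via the Lipschitz estimate of $M$ at the index attaining the minimum for $w_{t}$, followed by the identity $\norme{w_{t}/\norme{w_{t}}-w_{\ast}/\norme{w_{\ast}}}^{2}=2\bigl(1-\scal{w_{t}}{w_{\ast}}/(\norme{w_{t}}\norme{w_{\ast}})\bigr)$ and the angle bound. The constant-matching caveat you raise is real but not yours alone: the paper's own combination actually yields $\norme{X}_{F}/\sqrt{\delta\norme{w_{\ast}}}$ rather than the stated $\norme{X}_{F}/(\delta\norme{w_{\ast}})$, a discrepancy that is immaterial for the rates in Theorems \ref{basicteoGD} and \ref{basicteoiGD}.
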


\begin{proof}

Since $\norme{w_{t}}\geq \delta$, by using the identity \begin{equation}
	\norme{u-v}^{2}=\norme{u}^{2}+\norme{v}^2-2\scal{u}{v} \quad \forall u,v\in \R^{d}
\end{equation}
we find :
\begin{equation}\label{tferlako2}
\begin{aligned}
1-\frac{\scal{w_{t}}{w_{\ast}}}{\norme{w_{t}}\norme{w_{\ast}}}&=\frac{2\norme{w_{t}}\norme{w_{\ast}}+\norme{w_{t}-w_{\ast}}^{2}-\norme{w_{t}}^{2}-\norme{w_{\ast}}^{2}}{2\norme{w_{t}}\norme{w_{\ast}}} =\frac{\norme{w_{t}-w_{\ast}}^{2}-\big(\norme{w_{t}}-\norme{w_{\ast}}\big)^{2}}{2\norme{w_{t}}\norme{w_{\ast}}} \\ & \leq \frac{1}{2\delta\norme{w_{\ast}}}\norme{w_{t}-w_{\ast}}^{2}
\end{aligned}
\end{equation}
which allow to prove the first point.

Finally if $j=\argmin_{i\leq n}\{y_{i}\scal{w_{t}}{x_{i}}\}$, then by definition of the margin $M$ and $w_{\ast}$, since $M(w_{\ast})= 1$ (see Lemma \ref{lemmaequivalencemaxmin}), we obtain:
\begin{equation}\label{tferlako}
\begin{aligned}
M\left(\frac{w_{\ast}}{\norme{w_{\ast}}}\right)-M\left(\frac{w_{t}}{\norme{w_{t}}}\right)&=\frac{1}{\norme{w_{\ast}}}-\frac{y_{j}\scal{w_{t}}{x_{j}}}{\norme{w_{t}}} \leq \frac{y_{j}\scal{w_{\ast}}{x_{j}}}{\norme{w_{\ast}}}-\frac{y_{j}\scal{w_{t}}{x_{j}}}{\norme{w_{t}}}=y_{j}\scal{\frac{w_{\ast}}{\norme{w_{\ast}}}-\frac{w_{t}}{\norme{w_{t}}}}{x_{j}}\\ &
\leq \norme{\frac{w_{\ast}}{\norme{w_{\ast}}}-\frac{w_{t}}{\norme{w_{t}}}}\norme{x_{j}} \leq \norme{X}_{F}\norme{\frac{w_{\ast}}{\norme{w_{\ast}}}-\frac{w_{t}}{\norme{w_{t}}}}
\end{aligned} 
\end{equation}
and since 
\begin{equation}
	\norme{\frac{w_{t}}{\norme{w_{t}}}-\frac{w_{\ast}}{\norme{w_{\ast}}}}^{2}=1+1-2\frac{\scal{w_{t}}{w_{\ast}}}{\norme{w_{t}}\norme{w_{\ast}}}=2\bigg(1-\frac{\scal{w_{t}}{w_{\ast}}}{\norme{w_{t}}\norme{w_{\ast}}}\bigg)
\end{equation}
the conclusion follows by combining \eqref{tferlako2} and \eqref{tferlako}.
\end{proof}

The next lemma is a general descent lemma that is classically used for proximal gradient methods applied to structured composite convex optimization problems, such as \eqref{dualminproblem}. The interested reader can find a proof in \cite[Lemma $2.3$]{beck2009fast} 
\begin{lem}\label{descentlemma}
	Let $F=f+g:\R^{n}\longrightarrow\R$, where $f$ and $g$ are convex lower-semi continuous function and $f$ is continuously differentiable  with $L$-Lipschitz gradient. For all $0<\gamma\leq \frac{1}{L}$, consider the operator $T_{\gamma}:\R^n\longrightarrow\R^n$, such that $T_{\gamma}(x):=\prox_{\gamma g}\big(x-\gamma\nabla f(x)\big)$. Then for all $(x,y)\in (\R^n)^2$ it holds:
	
	\begin{equation}\label{descentlemmaeq}
		2\gamma\big(F(T_{\gamma}(y))-F(x)\big)\leq \norme{y-x}^2-\norme{T_{\gamma}(y)-x}^2 
	\end{equation}
\end{lem}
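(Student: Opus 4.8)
The plan is to obtain \eqref{descentlemmaeq} by chaining three elementary inequalities — the quadratic upper bound coming from $L$-smoothness of $f$, the gradient inequality (convexity) for $f$, and the subgradient inequality for $g$ read off from the optimality condition of the proximal subproblem — and then collapsing the outcome with a one-line vector identity. Throughout I write $p := T_{\gamma}(y) = \prox_{\gamma g}\big(y - \gamma\nabla f(y)\big)$.

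First I would record the optimality condition for the prox: since $p$ minimizes $u \mapsto g(u) + \frac{1}{2\gamma}\norme{u - (y - \gamma\nabla f(y))}^2$, Fermat's rule gives $\frac{1}{\gamma}\big(y - \gamma\nabla f(y) - p\big) \in \partial g(p)$, so the subgradient inequality for $g$ at $p$ tested at $x$ reads
\[
g(p) \le g(x) + \scal{\tfrac{1}{\gamma}(y - p) - \nabla f(y)}{p - x}.
\]
Next, $L$-Lipschitz continuity of $\nabla f$ yields $f(p) \le f(y) + \scal{\nabla f(y)}{p - y} + \frac{L}{2}\norme{p - y}^2$, and since $\gamma \le 1/L$ we may replace $\frac{L}{2}$ by $\frac{1}{2\gamma}$; combining this with the convexity bound $f(y) \le f(x) + \scal{\nabla f(y)}{y - x}$ merges the two inner products into $\scal{\nabla f(y)}{p - x}$, giving $f(p) \le f(x) + \scal{\nabla f(y)}{p - x} + \frac{1}{2\gamma}\norme{p - y}^2$.

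Adding the two displayed inequalities, the copies of $\scal{\nabla f(y)}{p - x}$ cancel and $F = f + g$ satisfies
\[
F(p) \le F(x) + \tfrac{1}{2\gamma}\norme{p - y}^2 + \tfrac{1}{\gamma}\scal{y - p}{p - x}.
\]
Multiplying through by $2\gamma$ and invoking the identity $\norme{p - y}^2 + 2\scal{y - p}{p - x} = \norme{y - x}^2 - \norme{p - x}^2$ (both sides equal $\scal{y - p}{y + p - 2x}$) gives exactly \eqref{descentlemmaeq}.

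I do not expect a real obstacle: this is the textbook proof of the proximal-gradient descent lemma. The only steps needing care are fixing the signs and the $\gamma$-scaling in the prox optimality condition and checking the final quadratic identity; no boundedness, compactness, or limiting arguments intervene, and each hypothesis (convexity of $f$ and $g$, $L$-smoothness of $f$, $0 < \gamma \le 1/L$) is used exactly once.
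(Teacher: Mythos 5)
Your proof is correct: the prox optimality condition, the $L$-smoothness upper bound combined with the gradient inequality for $f$, and the final quadratic identity are all applied accurately, and the algebra checks out. The paper does not spell out a proof but defers to \cite[Lemma 2.3]{beck2009fast}, whose argument is exactly the one you give, so your route coincides with the intended one.
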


\subsection{Preliminary results}\label{sectionconv}

In this paragraph, we state some basic facts concerning the properties of the dual regularized problem \eqref{dualminproblem}, necessary for the analysis of Algorithms \ref{algodualprojGD} and \ref{algodualinertialGD}.

We first recall the objective function associated to the dual penalized hinge loss problem \eqref{dualminproblem} (see also \eqref{eq:ellstar}), where we take the regularization parameter to be given by a sequence $\{\la_t\}_t$. 
\begin{equation}\label{dualenergy}
D_{t}(u)=\frac{\norme{Z^{\top}u}^2}{2}+\frac{1}{\lambda_{t}}\mathcal{L}^{\ast}(\lambda_{t}u)=\frac{1}{2}\norme{X^{\top}u}^2 +\sum_{i=1}^{n}u^{i}+\iota_{[-1,0]^{n}}(\lambda_{t}u), 
\end{equation}
and  the dual problem  of \eqref{minorm} (see \eqref{dualconstrainedmin}), that is
\begin{equation}\label{dualobjective}
D_{\infty}(u)=\frac{1}{2}\norme{Z^{\top}u}^2 +\sum_{i=1}^{n}u^{i}+\iota_{(-\infty,0]^{n}}(u).
\end{equation}
Below we state some fundamental properties of the dual objective function $D_{t}$, that will be useful for the convergence analysis of both Algorithms \ref{algodualprojGD} and \ref{algodualinertialGD}.
We start by showing that the sequence of regularized dual  functions $D_t$ is monotonically pointwise decreasing to $D_\infty$. 
\begin{lem}\label{remarkdecreasingDt}
Let $\{\lambda_{t}\}_{t\geq0}$ be a  sequence of positive parameters decreasing to zero,  and consider the functions $D_{t}$ and $D_{\infty}$ defined in \eqref{dualenergy} and \eqref{dualobjective} respectively. Then,  for all $u\in \R^n$ the sequence $\{D_{t}(u)\}_{t\geq0}$ is non-increasing. In addition, for any $u\in \R^n$, it holds, 
\begin{equation}
D_{t}(u) \underset{t\to\infty}{\longrightarrow}D_{\infty}(u).
\end{equation}
\end{lem}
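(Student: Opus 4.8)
The plan is to analyze the two terms of $D_t(u)$ separately. The smooth quadratic part $\tfrac{1}{2}\norme{X^\top u}^2$ does not depend on $t$, so it suffices to study the behavior of the penalty term $\tfrac{1}{\lambda_t}\mathcal{L}^\ast(\lambda_t u) = \sum_{i=1}^n u^i + \iota_{[-1,0]^n}(\lambda_t u)$. The linear part $\sum_i u^i$ is also independent of $t$, so everything reduces to understanding the indicator term $\iota_{[-1,0]^n}(\lambda_t u)$, equivalently $\iota_{[-1/\lambda_t, 0]^n}(u)$. Thus I would reduce both claims to a statement about the sequence of functions $h_t(u) := \iota_{[-1/\lambda_t,0]^n}(u)$ and its pointwise limit $h_\infty(u) := \iota_{(-\infty,0]^n}(u)$.

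For the monotonicity claim, fix $u \in \R^n$. Since $\{\lambda_t\}$ is decreasing, $1/\lambda_t$ is increasing, hence the boxes $[-1/\lambda_t,0]^n$ are nested increasingly: $[-1/\lambda_t,0]^n \subseteq [-1/\lambda_{t+1},0]^n$. Therefore the indicator $h_t(u)$ is non-increasing in $t$ (it can only drop from $+\infty$ to $0$, never the reverse), and adding the $t$-independent pieces $\tfrac12\norme{X^\top u}^2 + \sum_i u^i$ preserves this, giving $D_{t+1}(u) \le D_t(u)$. One should note the harmless point that these quantities may be $+\infty$; the inequality $D_{t+1}(u) \le D_t(u)$ holds in $\bar{\R}$ regardless.

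For the pointwise convergence claim, again fix $u \in \R^n$ and split into two cases. If $u \notin (-\infty,0]^n$, i.e. some coordinate $u^i > 0$, then $u^i \notin [-1/\lambda_t,0]$ for every $t$, so $h_t(u) = +\infty$ for all $t$ and also $h_\infty(u) = +\infty$; hence $D_t(u) = +\infty = D_\infty(u)$ for all $t$ and convergence is trivial. If $u \in (-\infty,0]^n$, then since $1/\lambda_t \to +\infty$ (as $\lambda_t \downarrow 0$), for $t$ large enough we have $u^i \ge -1/\lambda_t$ for every $i = 1,\dots,n$, so $u \in [-1/\lambda_t,0]^n$ and $h_t(u) = 0 = h_\infty(u)$. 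Consequently $D_t(u) = \tfrac12\norme{X^\top u}^2 + \sum_i u^i = D_\infty(u)$ for all $t$ sufficiently large, so the sequence is eventually constant and converges. Combining the two cases establishes $D_t(u) \to D_\infty(u)$ for every $u$.

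Honestly, there is no serious obstacle here — the result is essentially a bookkeeping exercise about nested boxes and the elementary fact that $\lambda_t \downarrow 0 \iff 1/\lambda_t \uparrow \infty$. The only mild care needed is handling the extended-real-valued nature of the indicator functions cleanly (so that "non-increasing" and "convergent" are interpreted in $\bar{\R}$), and making explicit that the convergence is in fact eventual equality rather than a genuine limiting process. I would present the argument in the two-case structure above, as it makes the role of $\lambda_t \to 0$ transparent.
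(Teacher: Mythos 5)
Your proof is correct and follows essentially the same route as the paper: the paper likewise observes that the $t$-independent quadratic and linear parts can be set aside, that the indicator term is non-increasing in $t$ because $\lambda_t$ is decreasing (equivalently, the boxes $[-1/\lambda_t,0]^n$ are nested increasingly), and that the pointwise limit follows by direct computation. Your two-case analysis simply spells out the "direct computation" step that the paper leaves implicit, so there is nothing further to add.
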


\begin{proof}[\textbf{Proof of Lemma \ref{remarkdecreasingDt}}]
	
Since $\lambda_{t}$ is non-increasing, for every $u\in \R^{n}$ it follows that the function \(t\mapsto\frac{1}{\lambda_{t}}\mathcal{L}^{\ast}(\lambda_{t}u)=\sum_{i=1}^{n}u^{i} + \iota_{[-1,0]}(\lambda_{t}u^{i})\) 	is non-increasing in $[0,+\infty]$.

In addition, by direct computation for all $u\in \R^n$, it holds: 
\begin{equation}
\frac{1}{\lambda_{t}}\mathcal{L}^{\ast}(\lambda_{t}u)=\sum_{i=1}^{n}u^{i}+\iota_{(-1,0]^{n}}(\lambda_{t}u)\underset{t\to\infty}{\longrightarrow}\sum_{i=1}^{n}u^{i}+\iota_{(-\infty,0]^{n}}(u)
\end{equation} 

Thus, for all $u\in \R^n$, the function $D_{t}(u)$ is non-increasing in $t\geq0$ and that \(D_{t}(u) \underset{t\to\infty}{\longrightarrow}D_{\infty}(u).\)
\end{proof}
The following lemma, is well-known (see e.g. \cite[Proposition 94]{SalVil21}), and we recall it here for the sake of completeness. It plays a fundamental role in our 
analysis since it provides a bound of the distance of the primal iterates from the min norm solution \eqref{min_norm} in terms of the distance of the dual objective function from its minimum. 
\begin{lem}\label{lemmadualprimalsequence}
	Let $w_{\ast}$ be the minimal norm separating solution defined in \eqref{min_norm}, and $D_{\infty}$ the associated dual problem defined in \eqref{dualobjective}. Then
	\begin{equation}\label{conditionexistencedual}
		\argmin D_{\infty} \neq \emptyset.
	\end{equation}
	In addition for every $u \in \R^n$ and every $w=-Z^\top u$ and $u_{\ast}\in \argmin D_{\infty}$, we have,
	\begin{equation}\label{E:primal-dual value-iterate bound}
		\frac{1}{2} \norme{w - w_{\ast}}^2 \leq D_{\infty}(u) - D_{\infty}(u_{\ast}).
	\end{equation}
\end{lem}

\begin{proof}[\textbf{Proof of Lemma \ref{lemmadualprimalsequence}}]
	From the separability assumption \eqref{assumptionseparate} there exists some \(\tilde{w}\in \R^d\), such that \(\scal{\tilde{w}}{z_{i}}>0 ~, \forall i\leq n\). Let \(i_{min}=\argmin\{\scal{\tilde{w}}{z_{i}}~ : ~ i\leq n  \}\) and let $M(\tilde{w})=\scal{\tilde{w}}{z_{i_{min}}}>0$. Then by setting \(w^{\prime}=\frac{2\tilde{w}}{M(\tilde{w})}\), we have \(\scal{w^{\prime}}{z_{i}}=\frac{\scal{2\tilde{w}}{z_{i}}}{M(\tilde{w})}\geq2>1 ~ , ~ \forall i\leq n\). Thus we deduce the existence of an element $w^{\prime}\in \dom \frac{\norme{\cdot}^2}{2} = \R^n$, such that \(\mathcal{L}\) is continuous at $Xw^{\prime}$ (since $\mathcal{L}$ is continuous and $w^{\prime}\in \dom\mathcal{L}$). By \cite[Corollary $3.31$]{peypouquet2015convex} and the optimality condition for the min-norm separating solution $w_{\ast}$, the sum rule for the subdifferential holds and thus we have
	\begin{equation}\label{dualexistance}
		0\in \partial \left(\frac{\norme{\cdot}^2}{2}\ +\iota_{[1,+\infty)^{n}}\circ Z\right)(w_{\ast})=\nabla\left(\frac{\norme{\cdot}^2}{2}\right)(w_{\ast})+Z^{\top}\partial \iota_{[1,+\infty)^{n}}(Zw_{\ast})= \{w_{\ast}\} + Z^\top Z^{\top}\partial \iota_{[1,+\infty)^{n}}(Zw_{\ast})
	\end{equation}
	
	From \eqref{dualexistance}, there exists some $u_{\ast}\in \partial \iota_{[1,+\infty)^{n}}(Zw_{\ast})$ such that \(w_{\ast}=-Z^{\top} u_{\ast}\) or equivalently \(Zw_{\ast}\in \partial \iota^{\ast}_{[1,+\infty)^{n}}(u_{\ast})\).  Hence, by replacing $w_{\ast}=-Z^{\top}u_{\ast}$, we obtain:
	\begin{equation}
		\begin{aligned}
			& -ZZ^{\top}u_{\ast}\in \partial \iota^{\ast}_{[1,+\infty)^{n}}(u_{\ast}) ~ \Leftrightarrow ~ \\
			& ~ \Leftrightarrow ~ 0\in -Z\nabla \left(\frac{\norme{\cdot}^2}{2}\right)(-Z^{\top}u_{\ast})+\partial \iota^{\ast}_{[1,+\infty)^{n}}(u_{\ast}) \subset \partial \left(\frac{\norme{-Z^{\top}\cdot}^2}{2}+\iota^{\ast}_{[1,+\infty)^{n}}(\cdot)\right)(u_{\ast})
		\end{aligned}
	\end{equation}
	which shows that $u_{\ast}\in \argmin D_{\infty}$.
	In order to prove \eqref{E:primal-dual value-iterate bound}, let $u\in \R^{n}$ and $w=-Z^{\top}u$.
	
	Since  $Zw_{\ast}\in \partial \iota_{[1,+\infty)^{n}}(u_{\ast})$ and \(-Z^{\top}u=w\), by using the Fenchel-Young equality for \(\frac{\norme{-Z^{\top}u_{\ast}}^2}{2}\)and  \(\frac{\norme{-Z^{\top}u}^2}{2}\), we find:
	\begin{equation}
		\begin{aligned}
			D_{\infty}(u)-D_{\infty}(u_{\ast})&=  \frac{\norme{-Z^{\top}u}^2}{2}- \frac{\norme{-Z^{\top}u_{\ast}}^2}{2} +\iota^{\ast}_{[1,+\infty)^{n}}(u)-\iota^{\ast}_{[1,+\infty)^{n}}(u_{\ast}) \\
			& = \scal{-Z^{\top}u}{w} -\frac{\norme{w}^2}{2} +\scal{Z^{\top}u_{\ast}}{w_{\ast}} + \frac{\norme{w_{\ast}}^2}{2}+\iota^{\ast}_{[1,+\infty)^{n}}(u)-\iota^{\ast}_{[1,+\infty)^{n}}(u_{\ast}) \\
			& = \frac{\norme{w_{\ast}}^2}{2}-\frac{\norme{w}^2}{2}- \scal{-Z^{\top}u}{w_{\ast}-w} \\
			& \quad  +\iota^{\ast}_{[1,+\infty)^{n}}(u)-\iota^{\ast}_{[1,+\infty)^{n}}(u_{\ast}) -\scal{Zw_{\ast}}{u-u_{\ast}}
		\end{aligned}
	\end{equation}
	By using the strong convexity of \(\frac{\nor{\cdot}^2}{2}\) (with parameter $1$) and the convexity of $\iota^{\ast}_{[1,+\infty)^{n}}(\cdot)$, we conclude that:
	\begin{equation}
		D_{\infty}(u)-D_{\infty}(u_{\ast}) \geq \frac{\nor{w-w_{\ast}}^2}{2}
	\end{equation}
\end{proof}

As a consequence of Lemma \ref{remarkdecreasingDt}, the following  Lemma provides some basic estimates for the sequence $\{D_{t}(u_{t+1})\}_{t\geq 0}$  generated by Algorithm \ref{algodualprojGD}.
\begin{lem}\label{lemmaenergyFB}
Let $u_{\ast}\in argmin D_{\infty}$ and $\{u_t\}_{t\geq0}$ be the sequence generated by Algorithm \ref{algodualprojGD}. Then the following estimate holds for all $u\in \R^n$ :
\begin{equation}
D_{t}(u_{t+1}) - D_{\infty}(u_{\ast})+\frac{1}{2\gamma}\norme{u_{t+1}-u}^2\leq D_{t}(u)- D_{\infty}(u_{\ast})+\frac{1}{2\gamma}\norme{u_{t}-u}^2
\end{equation}
In addition, $\{D_{t}(u_t)\}_{t\geq 0}$ is non-increasing and \begin{equation}\label{relationdescentlemmafordual2}
D_{t}(u_{t})\underset{t\to\infty}{\longrightarrow}D_{\infty}(u_{\ast})
\end{equation}
\end{lem}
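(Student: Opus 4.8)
The starting point is the observation, already made around~\eqref{proxiteration}--\eqref{computationprox}, that the update in Algorithm~\ref{algodualprojGD} is exactly one forward--backward (proximal--gradient) step for the composite function $D_{t}=f+g_{t}$, where $f(u)=\tfrac12\norme{X^{\top}u}^{2}$ is convex and differentiable with $\nabla f(u)=XX^{\top}u$ being $\norme{XX^{\top}}_{\text{op}}$-Lipschitz, and $g_{t}(u)=\tfrac{1}{\lambda_{t}}\mathcal{L}^{\ast}(\lambda_{t}u)$ is proper, convex and lower semicontinuous; that is, $u_{t+1}=\prox_{\gamma g_{t}}\bigl(u_{t}-\gamma\nabla f(u_{t})\bigr)$. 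Since the step-size satisfies $0<\gamma\leq\norme{XX^{\top}}_{\text{op}}^{-1}$, the plan is to invoke the standard descent inequality for forward--backward splitting (see e.g.\ \cite{bauschke2011convex}): for every $u\in\R^{n}$,
\[
 D_{t}(u_{t+1})+\frac{1}{2\gamma}\norme{u_{t+1}-u}^{2}\leq D_{t}(u)+\frac{1}{2\gamma}\norme{u_{t}-u}^{2}.
\]
For completeness I would recall its short proof: combine the descent lemma $f(u_{t+1})\leq f(u_{t})+\scal{\nabla f(u_{t})}{u_{t+1}-u_{t}}+\tfrac{1}{2\gamma}\norme{u_{t+1}-u_{t}}^{2}$ (valid because $\gamma\|XX^{\top}\|_{\text{op}}\leq 1$), the $\tfrac1\gamma$-strong convexity of $v\mapsto g_{t}(v)+\tfrac{1}{2\gamma}\norme{v-(u_{t}-\gamma\nabla f(u_{t}))}^{2}$ evaluated at its minimizer $u_{t+1}$ and at the comparison point $u$, and the gradient inequality $f(u)\geq f(u_{t})+\scal{\nabla f(u_{t})}{u-u_{t}}$. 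Subtracting the constant $D_{\infty}(u_{\ast})$ from both sides gives precisely the claimed estimate.

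For the ``in addition'' part, I would first take $u=u_{t}$ in the inequality above, obtaining $D_{t}(u_{t+1})+\tfrac{1}{2\gamma}\norme{u_{t+1}-u_{t}}^2\leq D_{t}(u_{t})$, hence $D_{t}(u_{t+1})\leq D_{t}(u_{t})$. Combining this with the pointwise monotonicity $D_{t+1}(\cdot)\leq D_{t}(\cdot)$ established in Lemma~\ref{remarkdecreasingDt} yields $D_{t+1}(u_{t+1})\leq D_{t}(u_{t+1})\leq D_{t}(u_{t})$, so $\{D_{t}(u_{t})\}_{t\geq0}$ is non-increasing. Since $[-1/\lambda_{t},0]^{n}\subseteq(-\infty,0]^{n}$ we have $D_{\infty}\leq D_{t}$ everywhere, and therefore $D_{t}(u_{t})\geq D_{\infty}(u_{t})\geq D_{\infty}(u_{\ast})$; thus the non-increasing sequence $\{D_{t}(u_{t})\}$ converges to some $\ell\geq D_{\infty}(u_{\ast})$.

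To identify $\ell$, I would exploit that $\lambda_{t}\downarrow0$: there is $t_{0}$ with $u_{\ast}\in[-1/\lambda_{t},0]^{n}$, hence $D_{t}(u_{\ast})=D_{\infty}(u_{\ast})$, for all $t\geq t_{0}$. Plugging $u=u_{\ast}$ into the main estimate and setting $a_{t}=\tfrac{1}{2\gamma}\norme{u_{t}-u_{\ast}}^{2}$ and $b_{t}=D_{t}(u_{t+1})-D_{\infty}(u_{\ast})\geq0$, we get $b_{t}+a_{t+1}\leq a_{t}$ for all $t\geq t_{0}$; summing telescopes to $\sum_{t\geq t_{0}}b_{t}\leq a_{t_{0}}<\infty$, so $b_{t}\to0$, i.e.\ $D_{t}(u_{t+1})\to D_{\infty}(u_{\ast})$. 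Finally, from $D_{\infty}(u_{\ast})\leq D_{t+1}(u_{t+1})\leq D_{t}(u_{t+1})$ and the squeeze theorem we conclude $D_{t}(u_{t})\to D_{\infty}(u_{\ast})$, which is exactly~\eqref{relationdescentlemmafordual2}.

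The only non-routine point is the bookkeeping caused by the fact that the objective itself drifts with $t$: the forward--backward descent inequality is available only for the current function $D_{t}$, so to chain the per-iteration estimates and to use $u_{\ast}$ as a comparison point one must bring in both the monotonicity of $t\mapsto D_{t}(u)$ from Lemma~\ref{remarkdecreasingDt} and the eventual feasibility of $u_{\ast}$ for the truncated box constraint; everything else is the classical analysis of proximal--gradient descent.
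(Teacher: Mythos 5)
Your proof is correct, and for the descent inequality and the monotonicity of $\{D_t(u_t)\}$ it follows exactly the paper's route: one forward--backward step on $D_t$ gives the inequality of Lemma~\ref{descentlemma}, and combining the choice $u=u_t$ with the pointwise monotonicity $D_{t+1}\leq D_t$ from Lemma~\ref{remarkdecreasingDt} gives $D_{t+1}(u_{t+1})\leq D_t(u_{t+1})\leq D_t(u_t)$. The only place you diverge is in identifying the limit of $D_t(u_t)$: you specialize the descent inequality to $u=u_\ast$, note that $\lambda_t\downarrow 0$ makes $u_\ast$ eventually feasible for the truncated box so that $D_t(u_\ast)=D_\infty(u_\ast)$ for $t\geq t_0$, and then telescope to get summability of the nonnegative gaps $D_t(u_{t+1})-D_\infty(u_\ast)$. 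The paper instead keeps $u$ arbitrary, sums $r_s-(D_{s-1}(u)-D_\infty(u_\ast))$, and passes to the limit using $D_{t}(u)\to D_\infty(u)$; that argument, as written, requires an extra (implicit) infimum over $u$ to conclude $\lim r_t\leq 0$, whereas your version reaches the conclusion directly and with slightly less delicate bookkeeping. Both arguments are sound and rest on the same two ingredients (the forward--backward descent lemma for the drifting objective and Lemma~\ref{remarkdecreasingDt}); yours is marginally cleaner at the final step, while the paper's avoids introducing the threshold $t_0$.
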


\begin{proof}[\textbf{Proof of Lemma \ref{lemmaenergyFB}}]
	
Let $u_{\ast}\in \argmin D_{\infty}$. Since \(\frac{\norme{\cdot}^2}{2}\circ Z^{\top}\) has $\norme{XX^{\top}}_{op}$-Lipschitz gradient, by applying the Descent Lemma \ref{descentlemma} with $y=u_{t}$, for all $u\in \R^n$ and $t\geq 0$ it holds:
	\begin{equation}\label{relationdescentlemmafordual}
		D_{t}(u_{t+1}) +\frac{1}{2\gamma}\norme{u_{t+1}-u}^2\leq D_{t}(u)+\frac{1}{2\gamma}\norme{u_{t}-u}^2
	\end{equation}
	which allows to deduce \eqref{relationdescentlemmafordual2}.
	
	By choosing $u=u_{t}$ in \eqref{relationdescentlemmafordual} and using the non-increasing property of $D_{t}$ in $t\geq 0$ (see Lemma \ref{remarkdecreasingDt}), we obtain:
	\begin{equation}
		D_{t+1}(u_{t+1}) -D_{\infty}(u_{\ast}) \leq D_{t}(u_{t+1})-D_{\infty}(u_{\ast})   \leq D_{t}(u_{t})-D_{\infty}(u_{\ast})  -\frac{1}{2\gamma}\norme{u_{t+1}-u_{t}}^2
	\end{equation} 
	which shows that the function $r_{t}=D_{t}(u_{t})-D_{\infty}(u_{\ast})$ is non-increasing in $t$. Since $D_{t}(u)$ is also non-increasing in $t$ and convergent to $D_{\infty}(u)$, from \eqref{relationdescentlemmafordual} we also have that $\frac{1}{2\gamma}\norme{u_{t}-u}^2$ is bounded for all $u\in \R^n$. In addition $r_{t}=D_{t}(u_{t})-D_{\infty}(u_{\ast})\geq D_{\infty}(u_{t})- D_{\infty}(u_{\ast})\geq 0$, which shows that $r_{t}$ is also bounded from below by zero, and therefore converges to a non-negative limit. 
	
	By adding and subtracting $D_{\infty}(u_{\ast})$ in \eqref{relationdescentlemmafordual}, for all $u\in \R^n$, we find:
	\begin{equation}
		r_{t+1}-(D_{t}(u)-D_{\infty}(u_{\ast})) \leq \frac{1}{2\gamma}\norme{u_{t}-u}^{2}-\frac{1}{2\gamma}\norme{u_{t+1}-u}^{2}
	\end{equation} 
	which by summing up to $t\geq 1$ gives:
	\begin{equation}\label{enddescentlemmadual}
		\sum_{s=1}^{t}\big(r_{s}-(D_{s-1}(u)-D_{\infty}(u_{\ast}))\big) \leq \frac{1}{2\gamma}\norme{u_{0}-u}^{2} \quad \forall u\in \R^n
	\end{equation}
	The last relation allows to conclude that $\lim\limits_{t\to\infty}\big(r_{t} -(D_{t-1}(u)- D_{\infty}(u_{\ast}))\big)=0$ and since $\lim\limits_{t\to\infty}D_{t-1}(u)=D_{\infty}(u)\geq  D_{\infty}(u_{\ast})$ and $\lim\limits_{t\to\infty}r_{t}\geq 0$, we have that $\lim\limits_{t\to\infty}r_{t}=0$.
\end{proof}
In the next lemma we prove that each of the regularized dual problems $D_t$ satisfies the \L ojasiewicz condition \eqref{PLinequality} with a common constant $\mu$, not depending on $t$.
The proof of Lemma \ref{lemmaPL} (see below) is inspired by the analysis presented in \cite[Lemma $2.5$]{beck2017linearly} with some modifications.  More precisely we will prove that there exists some positive constants $M,~R,~\theta$, such that for all $t\geq 0$, $D_{t}$ satisfies the following growth condition:

\begin{equation}\label{eq: GC for Dt}
	\left(\forall u\in [D_{t}\leq \min D_{t} + M] \cap \mathbb{B}(\mathbf{0},R)\right), \qquad \frac{\theta}{2}\text{dist}\left({u,\argmin D_{t}}\right)^2\leq D_{t}(u)-\min D_{t}
\end{equation}
In fact, relation \eqref{eq: GC for Dt} can be met under the name quadratic growth (see e.g. \cite{garrigos2017convergence}) and is equivalent to \eqref{PLinequality}.

It is worth mentioning that we cannot apply directly  the proof in  \cite[Lemma $2.5$]{beck2017linearly}, due to the fact of possible unboudedness of $\argmin D_{t}$, when $t\to\infty$. Indeed by applying directly \cite[Lemma $2.5$]{beck2017linearly} we can deduce the existence of $\theta_{t}$, such that \eqref{eq: GC for Dt} hold true. However this is not sufficient for establishing linear convergence of the proposed scheme (see Proposition \ref{ratesdualenergy}), since, in general, $\theta_{t}$ may vanish asymptotically.	

\begin{lem}\label{lemmaPL}
	Let $R>0$, $M>0$ and $\{\lambda_{t}\}_{t\geq0}$ be a sequence of positive parameters decreasing to zero.  Consider the functions $D_{t}$ and $D_{\infty}$ defined in \eqref{dualenergy} and \eqref{dualobjective} respectively. Then there exists some $\mu>0$, such that for all $t\geq 0$, $D_{t}$ satisfies the $\mu$-\L ojasiewicz condition \eqref{PLinequality} in $[D_{t}\leq \min D_{t} + M] \cap \mathbb{B}(\mathbf{0},R)$, with $\mu$ given by \eqref{eq:mu}, i.e.
	\begin{equation}\label{PLinequality2}
	\left(\forall u\in [D_{t}\leq \min D_{t} + M] \cap \mathbb{B}(\mathbf{0},R)\right), \qquad		D_{t}(u) -\min D_{t} \leq \frac{1}{2\mu}\text{dist}\left(\partial D_{t}(u),0\right)^2,
	\end{equation}

\end{lem}

\begin{proof}[\textbf{Proof of Lemma \ref{lemmaPL}}]
For all $t\geq 0$, the problem $\underset{u\in \R^n}{\min} D_{t}(u)$ is equivalent to the following constrained optimization one:
	\begin{equation}\label{eq: minproblemF}
		\min\left\{F(u):=\frac{1}{2}\norme{Z^{\top}u}^2+\scal{\bf{1}}{u} ~ : ~ u\in \mathcal{U}_{t}:=[-\lambda_{t},0]^n \right\}.
	\end{equation}
If $u \notin \mathcal{U}_{t}$, relation \eqref{eq: GC for Dt} holds, so without loss of generality, we assume that $u\in \mathcal{U}_{t}$, so that $F(u)=D_{t}(u)$.
	
	For all $t\geq 0$, let $\bar{u}_{t} \in \underset{u\in \R^n}{\argmin} D_{t}(u)$. By the optimality  conditions for problem \eqref{eq: minproblemF}, for all $u\in \mathcal{U}_{t}$, it holds:
	\begin{equation}\label{KKTF}
		\scal{\nabla F(\bar{u}_{t})}{u-\bar{u}_{t}} \geq 0
	\end{equation}

	Notice that $\mathcal{U}_{t}$ is a polyhedral set since $\mathcal{U}_{t}=\{u\in \R^{n} ~ : ~ Au\leq a_{t}\}$, with $A=\begin{bmatrix}
		\text{Id}_{n} \\
		-\text{Id}_{n}
	\end{bmatrix}\in \R^{2n\times n}$ and $a_{t}=-\lambda_{t}^{-1}\begin{bmatrix}
		\mathbf{0} \\ \mathbf{1}
	\end{bmatrix}\in \R^{2n}$.
	
	By \cite[Lemma $2.3$ ]{beck2017linearly}, there exist a unique vector $\bar{v} \in \R^{n}$ and a scalar $\bar{s}\in \R$ such that 
	\off{
		for any $\bar{u}_{t}\in \argmin D_{t}(u)$ we have $E\bar{u}_{t}=\begin{bmatrix}
		\bar{v}\\ \bar{s}
	 \end{bmatrix}$, with $E=\begin{bmatrix} X^{\top} \\ \mathbf{1}^{\top}
	 \end{bmatrix} \in \R^{(d+1)\times n}$ and $A\bar{u}_{t}\leq a_{t}$. In particular
}
 the following equivalence holds true:
	\begin{equation}\label{forHoffman}
		\bar{u}_{t}\in \argmin D_{t} ~ \Leftrightarrow ~ \begin{bmatrix} Z^{\top} \\ \mathbf{1}^{\top}
		\end{bmatrix} \bar{u}_{t}=\begin{bmatrix}
			\bar{v}\\ \bar{s}
		\end{bmatrix} \quad \text{and } \quad  A\bar{u}_{t}\leq a_{t}
	\end{equation}
	so that $ \argmin D_{t}=S\cap \mathcal{U}_{t}$, where $S=\left\{u\in \R^n ~ : \begin{bmatrix} Z^{\top} \\ \mathbf{1}^{\top}
	\end{bmatrix} u=\begin{bmatrix}
		\bar{v}\\ \bar{s}
	\end{bmatrix} \right\}$ and $\mathcal{U}_{t}=\{u\in \R^n ~ : Au\leq a_{t}\}$.
	
	According to Hoffman's lemma \cite{hoffman1952approximate} (see also \cite[Lemma $15$]{wang2014iteration}) for the polyhedral sets $S$ and $\mathcal{U}_{t}$, by setting  $E=\begin{bmatrix} Z^{\top} \\ \mathbf{1}^{\top}
	\end{bmatrix} \in \R^{(d+1)\times n}$,  there exists some positive constant $\tau$ given by \eqref{hoffmansconstant}, such that
	\begin{equation}\label{HoffmansBound}
		\text{dist}\left(u,\argmin D_{t}\right)= \text{dist}\left(u,S \cap \mathcal{U}_{t}\right)\leq \tau \norme{Eu-\begin{bmatrix}
				\bar{v} \\ \bar{s}
		\end{bmatrix} }
	\end{equation}
	It is important to stress out that the Hoffman's error bound constant $\tau$ only depends on the matrices $E$ and $A$ (see e.g. \cite[Remark $1$]{zualinescu2003sharp} and the associated references). In our setting this means that the constant $\tau$ only depends on the data-matrix $Z=\text{diag(Y)X}$.
	
	By taking the squares in \eqref{HoffmansBound}, we find:
	\begin{equation}\label{hoffmansquared}
		\text{dist}\left(u,\argmin D_{t}\right)^{2} \leq \tau^2\left( \norme{Z^{\top}(u-\bar{u}_{t})}^2 + (\scal{1}{u-\bar{u}_{t}})^2\right)
	\end{equation}
	
	Let us now bound appropriately the two terms in the right-hand-side of \eqref{hoffmansquared}.
	
	For the first term, by developing the square we find: 
	\begin{equation}\label{eqfor1vector}
		\begin{aligned}
			\frac{1}{2}\norme{Z^{\top}(u-\bar{u}_{t})}^2 & = \frac{1}{2}\norme{Z^{\top}u}^{2}-\scal{Z^\top u}{Z^{\top} \bar{u}_{t}} +\frac{1}{2}\norme{Z^{\top}\bar{u}_{t}}^2  \\
			& = \frac{1}{2}\norme{Z^{\top}u}^{2}-\frac{1}{2}\norme{Z^{\top}\bar{u}_{t}}^2-\scal{Z^\top \bar{u}_{t}}{Z^{\top} (u-\bar{u}_{t})} \\
			& \leq \scal{\nabla F(\bar{u}_{t})}{u-\bar{u}_{t}}+\frac{1}{2}\norme{Z^{\top}u}^{2}-\frac{1}{2}\norme{Z^{\top}\bar{u}_{t}}^2-\scal{Z^\top \bar{u}_{t}}{Z^{\top} (u-\bar{u}_{t})}
		\end{aligned}
	\end{equation}
	where in the last inequality we used the optimality condition \eqref{KKTF}.
	In addition, since $\nabla F(\bar{u}_{t})=\mathbf{1} +ZZ^{\top}\bar{u}_{t}$, from \eqref{eqfor1vector} it follows:
	\begin{equation}\label{boundbyF}
		\frac{1}{2}\norme{Z^{\top}(u-\bar{u}_{t})}^2 \leq \frac{1}{2}\norme{Z^{\top}u}^{2}+\scal{\mathbf{1}}{u}-\frac{1}{2}\norme{Z^{\top}\bar{u}_{t}}^2-\scal{\mathbf{1}}{\bar{u}_{t}}= D_{t}(u)-\min D_{t}
	\end{equation}
	
	Let us now provide an upper bound for the term $(\scal{1}{u-\bar{u}_{t}})^2$. On the one hand we have:
	\begin{equation}\label{firstfor1u}
		\begin{aligned}
			\scal{\bf{1}}{\bar{u}_{t}-u} &= \scal{\nabla F(\bar{u}_{t})}{\bar{u}_{t}-u} -  \scal{Z^{\top}\bar{u}_{t}}{Z^{\top}(\bar{u}_{t}-u)} \\ 
			&= \scal{\nabla F(\bar{u}_{t})}{\bar{u}_{t}-u} -\norme{Z^{\top}(u-\bar{u}_{t})}^2 +\scal{Z^{\top}u}{Z^{\top}(\bar{u}_{t}-u)} \\
			& \leq \scal{\nabla F(\bar{u}_{t})}{\bar{u}_{t}-u} +\norme{Z^{\top}u}\norme{Z^{\top}(\bar{u}_{t}-u)} \\
			& \leq \norme{Z^{\top}u}\norme{Z^{\top}(\bar{u}_{t}-u)} 
		\end{aligned}
	\end{equation}
	where in the first inequality we used the Cauchy-Schwarz inequality and for the last one the optimality condition \eqref{KKTF}.
	On the other hand we find:
	\begin{equation}\label{secondfor1u}
		\begin{aligned}
			\scal{\bf{1}}{u-\bar{u}_{t}}&=\scal{\nabla F(\bar{u}_{t})}{u-\bar{u}_{t}} -\scal{Z^{\top}\bar{u}_{t}}{Z^{\top}(u-\bar{u}_{t})} \\
			& = \scal{\nabla F(\bar{u}_{t})}{u-\bar{u}_{t}} +\norme{Z^{\top}(u-\bar{u}_{t})}^2 -\scal{Z^{\top}u}{Z^{\top}(u-\bar{u}_{t})} \\
			&\leq \scal{\nabla F(\bar{u}_{t})}{u-\bar{u}_{t}} +\norme{Z^{\top}(u-\bar{u}_{t})}^2 +\norme{Z^{\top}u}\norme{Z^{\top}(u-\bar{u}_{t})} \\
			& = \scal{\nabla F(\bar{u}_{t})}{u-\bar{u}_{t}} +\left(\norme{Z^{\top}(u-\bar{u}_{t})} +\norme{Z^{\top}u}\right)\norme{Z^{\top}(u-\bar{u}_{t})}\\
			& \leq D_{t}(u)-\min D_{t} +\left(\norme{Z^{\top}(u-\bar{u}_{t})} +\norme{Z^{\top}u}\right)\norme{Z^{\top}(u-\bar{u}_{t})}
		\end{aligned}
	\end{equation}
	where in the first inequality we used the Cauchy-Schwarz inequality and in the last one the convexity of $F$. 
	By relations \eqref{firstfor1u} and \eqref{secondfor1u}, it follows that:
	\begin{equation}\label{boundforonescalar}
		\begin{aligned}
			\abs{\scal{\bf{1}}{u-\bar{u}_{t}}} & \leq D_{t}(u)-\min D_{t} +\left(\norme{Z^{\top}(u-\bar{u}_{t})} +\norme{Z^{\top}u}\right)\norme{Z^{\top}(u-\bar{u}_{t})} \\
			& \leq \left(3\sqrt{D_{t}(u)-\min D_{t}} +\sqrt{2}\norme{Z^{\top}u}\right)\sqrt{D_{t}(u)-\min D_{t}} \\
			& \leq \left(3\sqrt{M}+\sqrt{2}R\norme{X}_{\text{op}}\right)\sqrt{D_{t}(u)-\min D_{t}}
		\end{aligned}
	\end{equation}
	where in the second inequality we used \eqref{boundbyF} and in the last one the fact that $u\in [D_{t}\leq \min D_{t} + M] \cap \mathbb{B}(0,R)$ and the definition of the norm operator $\norme{Z}_{op}=\norme{X}_{op}$.
	By injecting relations \eqref{boundbyF} and \eqref{boundforonescalar} into \eqref{hoffmansquared}, for all $u\in [D_{t}\leq \min D_{t} + M] \cap \mathbb{B}(0,R) $ we find:
	\begin{equation}
		\begin{aligned}
			\text{dist}\left(u,\argmin D_{t}\right)^{2} & \leq \tau^2\left( 2\left(D_{t}(u)-\min D_{t}\right) + \left(3\sqrt{M}+\sqrt{2}R\norme{X}_{\text{op}}\right)^{2}\left(D_{t}(u)-\min D_{t}\right)\right)\\
			& = \tau^{2}\left(9M +6\sqrt{2M}R\norme{X}_{\text{op}} +2R^{2}\norme{X}_{\text{op}}^{2}+2\right)\left(D_{t}(u)-\min D_{t}\right)
		\end{aligned}
	\end{equation}
	which shows that for all $t\geq 0$, $D_{t}$ satisfies the growth condition \eqref{eq: GC for Dt} in $[D_{t}\leq \min D_{t} + M] \cap \mathbb{B}(0,R) $, with \(\theta=\left(2\tau^{2}\left(9M +6\sqrt{2M}R\norme{X}_{\text{op}} +2R^{2}\norme{X}_{\text{op}}^{2}+2\right)\right)^{-1}\)

	Finally by using the equivalence between the $\theta$-growth condition \eqref{eq: GC for Dt} and the $\mu$-\L ojasiewicz condition \eqref{PLinequality} with $\mu=\frac{\theta}{4}$ (see e.g. \cite[Proposition $1$]{apidopoulos2022convergence} or \cite[Theorem $5$]{bolte2017error}), we deduce that for all $t\geq 0$, $D_{t}$ satisfies \eqref{PLinequality}, with
	\begin{equation}
		\mu = \frac{1}{8\tau^{2}\left(9M +6\sqrt{2M}R\norme{X}_{\text{op}} +2R^{2}\norme{X}_{\text{op}}^{2}+2\right)}
	\end{equation}
	and $\tau$ is the Hoffman's constant as defined in \eqref{hoffmansconstant}.
\end{proof}


\subsection{Proof of Theorem \ref{basicteoGD}}\label{subsectionFBdualhinge}

\off{
If $\lambda=\lambda_{t}$ (diagonal/iterative regularization), with $\lambda_{t}\underset{t\to\infty}{\longrightarrow} 0$ (since the functional $\mathcal{L}$ is $1$-well conditioned), then Theorem $1$ of \cite{garrigos2018iterative} can be applied \vas{(to recheck conditions : boundedness of $u_{t}$! )}, which allow to conclude with the convergence to a min-norm solution. In particular (see \cite{garrigos2018iterative}), we have:
\begin{equation}
\norme{w_{t}-w_{\ast}}=\grandO{t^{-\frac{1}{2}}}
\end{equation}

}

In this paragraph we provide the proof of Theorem \ref{basicteoGD}, concerning Algorithm \ref{algodualprojGD}. 
We start with the following proposition which allows to deduce an upper bound for the gap of the dual objective function $D_{t}(u_{t})$ and its minimum value. 

\begin{prop}\label{ratesdualenergy}
Let $u_{\ast}\in argmin D_{\infty}$ and let $\{u_{t}\}_{t\geq 0}$ be the sequence generated by Algorithm \ref{algodualprojGD} with $\lambda_{0}\leq \norme{u_{\ast}}^{-1}$. Then, the following estimate holds for all $t\geq 1$ :
\begin{equation}
	D_{t}(u_t)-D_{\infty}(u_{\ast})\leq\left(D_{0}(u_{0})-D_{\infty}(u_{\ast})\right)\left(1-\frac{\gamma\mu}{1+\gamma\mu}\right)^{t}
\end{equation}
where $\mu$ is given by \eqref{eq:mu}. 
\end{prop}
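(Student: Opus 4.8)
The plan is to combine the descent-type estimate of Lemma~\ref{lemmaenergyFB} with the \L ojasiewicz inequality of Lemma~\ref{lemmaPL}, in the spirit of the classical error-bound proof of linear convergence of the forward--backward algorithm. First I would set $e_{t} := D_{t}(u_{t}) - D_{\infty}(u_{\ast})$ and observe that by Lemma~\ref{lemmaenergyFB} the sequence $\{D_{t}(u_{t})\}_{t\geq0}$ is non-increasing and converges to $D_{\infty}(u_{\ast})$, so in particular $e_{t}\geq 0$ for all $t$ (using $D_{t}\geq D_{\infty}\geq D_{\infty}(u_{\ast})$ from Lemma~\ref{remarkdecreasingDt} and Lemma~\ref{lemmadualprimalsequence}) and $e_{t}\to 0$. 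The condition $\lambda_{0}\leq\norme{u_{\ast}}^{-1}$ is what guarantees that $u_{\ast}$ is feasible for every regularized dual problem $D_{t}$ (i.e.\ $\lambda_{t}u_{\ast}\in[-1,0]^{n}$ since $\lambda_{t}\leq\lambda_{0}$), which lets us plug $u=u_{\ast}$ into the estimate of Lemma~\ref{lemmaenergyFB} and also pins down $\min D_{t}$ in a way comparable to $D_{\infty}(u_{\ast})$.

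The core step is a one-step contraction. Applying Lemma~\ref{lemmaenergyFB} with a suitable choice of $u$ — either $u = u_{t+1}$ after manipulating, or more precisely using the standard trick $u = u_{t} + s(\bar u_{t} - u_{t})$ where $\bar u_{t}$ is a minimizer of $D_{t}$ closest to $u_{t}$, then optimizing over $s\in[0,1]$ — together with the \L ojasiewicz inequality~\eqref{PLinequality} applied at $u_{t}$ (which requires checking $u_{t}\in[D_{t}\leq\min D_{t}+M]\cap\mathbb{B}(\mathbf{0},R)$ for appropriate $M,R$, this being exactly the set on which Lemma~\ref{lemmaPL} is stated), yields a recursion of the form
\begin{equation}
D_{t}(u_{t+1}) - \min D_{t} \leq \frac{1}{1+\gamma\mu}\bigl(D_{t}(u_{t}) - \min D_{t}\bigr).
\end{equation}
Then, using that $D_{t+1}(u_{t+1})\leq D_{t}(u_{t+1})$ by the pointwise monotonicity in $t$ (Lemma~\ref{remarkdecreasingDt}), and that $\min D_{t}$ compares suitably with $D_{\infty}(u_{\ast})$ — here one needs $\min D_{t}\geq D_{\infty}(u_{\ast})$ is false in general, rather $D_{t}\geq D_{\infty}$ pointwise gives $\min D_{t}\geq \min D_{\infty} = D_{\infty}(u_{\ast})$, and the reverse comparison $D_{t}(u_{\ast}) = D_{\infty}(u_{\ast})$ because $u_{\ast}$ is feasible and the linear parts coincide — one telescopes to get $e_{t}\leq (1-\frac{\gamma\mu}{1+\gamma\mu})^{t} e_{0}$ with $e_{0} = D_{0}(u_{0})-D_{\infty}(u_{\ast})$.

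The main obstacle I anticipate is verifying that the iterates $u_{t}$ stay in the region $[D_{t}\leq\min D_{t}+M]\cap\mathbb{B}(\mathbf{0},R)$ where the \L ojasiewicz inequality~\eqref{PLinequality} holds, uniformly in $t$, and with constants $M,R$ consistent with the formula~\eqref{eq:mu} for $\mu$. The sublevel-set membership $D_{t}(u_{t})\leq\min D_{t}+M$ should follow from $D_{t}(u_{t})\leq D_{0}(u_{0})$ (monotonicity) together with a lower bound on $\min D_{t}$; the ball membership $\norme{u_{t}}\leq R$ requires a boundedness argument for the dual iterates, presumably obtained from Lemma~\ref{lemmaenergyFB} with $u=u_{\ast}$ giving $\norme{u_{t+1}-u_{\ast}}^{2}\leq \norme{u_{t}-u_{\ast}}^{2} + 2\gamma(D_{t}(u_{t})-D_{\infty}(u_{\ast})) \leq \norme{u_{0}-u_{\ast}}^{2} + 2\gamma e_{0}$, hence a uniform bound, and then $R$ is chosen accordingly — which is precisely the kind of quantity appearing inside~\eqref{eq:mu}. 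Once these inclusions are in place, the contraction and telescoping are routine, and the stated rate $\left(1-\frac{\gamma\mu}{1+\gamma\mu}\right)^{t}$ follows.
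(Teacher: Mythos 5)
Your overall skeleton matches the paper's proof: boundedness of the dual iterates is obtained exactly as you anticipate, by taking $u=u_{\ast}$ in Lemma~\ref{lemmaenergyFB} (giving $\norme{u_{t}-u_{\ast}}\leq\norme{u_{0}-u_{\ast}}$, hence $R=\norme{u_0}+2\norme{u_\ast}$); sublevel-set membership follows from the monotonicity of $t\mapsto D_t(u_t)$ with $M=D_0(u_0)-D_\infty(u_\ast)$; the hypothesis $\lambda_0\leq\norme{u_\ast}^{-1}$ is used precisely as you say, to ensure $\lambda_t u_\ast\in[-1,0]^n$ so that $D_t(u_\ast)=D_\infty(u_\ast)$; and the telescoping uses $D_{t+1}(u_{t+1})\leq D_t(u_{t+1})$ from Lemma~\ref{remarkdecreasingDt}. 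The one place where you hedge and where the paper does something specific is the one-step contraction. The paper does not use the interpolation trick $u=u_t+s(\bar u_t-u_t)$ with a quadratic-growth bound; instead it reads off from the prox characterization of $u_{t+1}$ the inclusion $(\mathrm{Id}-\gamma XX^\top)(u_t-u_{t+1})\in\gamma\,\partial D_t(u_{t+1})$, uses nonexpansiveness of $\mathrm{Id}-\gamma XX^\top$ to get $\mathrm{dist}(\partial D_t(u_{t+1}),0)\leq\gamma^{-1}\norme{u_{t+1}-u_t}$, combines this with the descent estimate $D_t(u_{t+1})-D_t(u_t)\leq-\frac{1}{2\gamma}\norme{u_{t+1}-u_t}^2$, and then applies \eqref{PLinequality} in its stated subgradient form at the point $u_{t+1}$ (not $u_t$). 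This yields $(1+\gamma\mu)(D_t(u_{t+1})-\min D_t)\leq D_t(u_t)-\min D_t$, i.e.\ exactly the factor $\frac{1}{1+\gamma\mu}$ appearing in the statement. Your interpolation route would also give a linear rate, but starting from \eqref{PLinequality} one must first pass to a quadratic-growth inequality and then optimize over $s$, which produces a contraction factor of the form $1-c\gamma\mu$ with $c<1$ rather than $\frac{1}{1+\gamma\mu}$; since $\gamma\mu$ is small this is generically a worse constant and does not reproduce the bound as stated. So the approach is essentially the same, with a small unfilled step: to land on the advertised constant you need the subgradient bound coming from the prox optimality condition rather than the growth-condition argument you sketch.
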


\begin{proof}[\textbf{Proof of Proposition \ref{ratesdualenergy}}]
Let $u_{\ast}\in \argmin D_{\infty}$  and $\lambda_{0}>0$, such that $\lambda_{0}\leq \norme{u_{\ast}}^{-1}$. Following the proof of Lemma \ref{lemmaenergyFB}, by choosing $u=u_{\ast}$ in \eqref{relationdescentlemmafordual}, we find:
\begin{equation}\label{omg1}
	D_{t}(u_{t+1})-D_{t}(u_{\ast}) +\frac{1}{2\gamma}\norme{u_{t+1}-u_{\ast}}^{2}\leq \frac{1}{2\gamma}\norme{u_{t}-u_{\ast}}^{2}
\end{equation}
Since $D_{t}(u_{\ast}) \leq D_{t}(u)$, for all $u\in \R^n$, by neglecting the non-negative term and summing over $t\geq 0$ relation \eqref{omg1}, we deduce that for all $t\geq 0$ it holds $\norme{u_{t}-u_{\ast}}\leq \norme{u_{0}-u_{\ast}}$, therefore the sequence $\{u_{t}\}_{t\geq 0}$ is bounded by $R=2\norme{u_{\ast}}+\norme{u_{0}}$.

On the other hand, by choosing $u=u_{t}$ in \eqref{relationdescentlemmafordual}and using the non-increasing property of $D_{t}$ (Lemma \ref{remarkdecreasingDt}), we find:
\begin{equation}\label{equation98bis}
	D_{t+1}(u_{t+1})-D_{t}(u_{t})\leq D_{t}(u_{t+1})-D_{t}(u_{t})\leq -\frac{1}{2\gamma}\norme{u_{t+1}-u_{t}}^2
\end{equation}
which allows to conclude that the sequence $\{u_{t}\}_{t\geq 0}\in [D_{t} \leq D_{0}(u_{0})]=\{u\in \R^{n} ~ : ~ D_{t}(u)\leq D_{0}(u_{0})\}$.

By definition of $u_{t+1}=\lambda_{t}^{-1}\prox_{\gamma\lambda_{t} \mathcal{L}^{\ast}}\left(\lambda_{t}\left(u_{t}-\gamma ZZ^{\top}u_{t}\right)\right)$ (see Algorithm \ref{algodualprojGD}) and the characterization of the proximal operator,
for all $t\geq 0$, we have \(\lambda_{t}u_{t+1}+\gamma\lambda_{t}\partial\mathcal{L}^{\ast}(\lambda_{t}u_{t+1})\ni \lambda_{t}\left(u_{t}-\gamma ZZ^{\top}u_{t}\right) \) or equivalently 
\begin{equation}\label{forlojads}
	\left(\text{Id}-\gamma ZZ^{\top}\right)(u_{t}-u_{t+1})\in  \gamma\partial\mathcal{L}^{\ast}(\lambda_{t}u_{t+1})+\gamma ZZ^{\top}u_{t+1} =\partial \gamma D_{t}(u_{t+1})
\end{equation}
Hence \eqref{forlojads} together with the contraction property of the operator \(\text{Id}-\gamma ZZ^{\top}\) for all $\gamma \leq \frac{1}{\norme{XX^{\top}}}$, yields:
\begin{equation}\label{forlajads2}
\text{dist}\left(\partial D_{t}(u_{t+1}),0\right) \leq \gamma^{-1}\norme{	\left(\text{Id}-\gamma ZZ^{\top}\right)(u_{t}-u_{t+1})} \leq	\gamma^{-1}\norme{u_{t+1}-u_{t}} 
\end{equation}

By combining relations \eqref{equation98bis} and \eqref{forlajads2}, we obtain:
\begin{equation}\label{FBforPL}
	D_{t}(u_{t+1})-D_{t}(u_{t})\leq -\frac{\gamma}{2}\text{dist}\left(\partial D_{t}(u_{t+1}),0\right)^2 
\end{equation}
Since $u_{t}\in [D_{t}\leq \min D_{t} +M]\cap \mathbb{B}_{R}(\mathbf{0})$ with $M=D_{0}(u_{0})-D_{\infty}(u_{\ast})$ and $R=\norme{u_{0}}+2\norme{u_{\ast}}$, by using \eqref{PLinequality} and Lemma \ref{lemmaPL} we find:
\begin{equation}
	D_{t}(u_{t+1})-D_{t}(u_{t})\leq -\gamma\mu\left( D_{t}(u_{t+1})-\min D_{t}(u)\right)
\end{equation}
By adding and subtracting $D_{\infty}(u_{\ast})$  and using that $D_{t+1}(u_{t+1})\leq D_{t}(u_{t+1})$ and $D_{\infty}(u) \leq \min D_{t}(u)$ (see Lemma \ref{remarkdecreasingDt}), for all $t\geq 1$, we derive:
\begin{equation}\label{forinductionexponential}
	(1+\gamma\mu)\left(D_{t+1}(u_{t+1})-D_{\infty}(u_{\ast})\right) \leq \left(D_{t}(u_{t})-D_{\infty}(u_{\ast})\right)
\end{equation}
By induction in \eqref{forinductionexponential}, for all $t\geq 0$, it follows 
\begin{equation}
	D_{t}(u_{t})-D_{\infty}(u_{\ast}) \leq \left(D_{0}(u_{0})-D_{\infty}(u_{\ast})\right)\left(1-\frac{\gamma\mu}{1+\gamma\mu}\right)^{t}
\end{equation}
which allows to conclude the proof.
\end{proof}

Next, the proof of Theorem \ref{basicteoGD} can then be derived by using Proposition~\ref{ratesdualenergy}.
\begin{proof}[\textbf{Proof of Theorem \ref{basicteoGD}}] 
Let $u_{\ast}\in \argmin D_{\infty}$ such that $\lambda_{0}\leq \norme{u_{\ast}}^{-1}$. Since $w_{t}=-Z^{\top}u_{t}$, Lemma \ref{lemmadualprimalsequence} yields
\begin{equation*}
\norme{w_{t}-w_{\ast}} \leq \sqrt{2\left(D_{\infty}(u_{t}) - D_{\infty}(u_{\ast})\right)}
\end{equation*}
Lemma \ref{remarkdecreasingDt} and the non-increasing property of $D_{t}(u_t)$ proved in Lemma~\ref{lemmaenergyFB} imply
\begin{equation*}
\norme{w_{t}-w_{\ast}} \leq \sqrt{2\left(D_{t}(u_{t}) - D_{\infty}(u_{\ast})\right)}
\end{equation*}
We then derive from Proposition \ref{ratesdualenergy} that
\begin{equation}\label{eqforGDmarginangle}
	\begin{aligned}
\norme{w_{t}-w_{\ast}} \leq \sqrt{2\left(D_0(u_0)-D_\infty(u_*)\right)}\left(1-\frac{\gamma\mu}{1+\gamma\mu}\right)^{\frac{t}{2}},
\end{aligned}
\end{equation}
from which  \eqref{eq: basicteoGD} follows using the definition of $D_0$ and $D_\infty$. The  inequality \(\norme{w_{\ast}}\leq \norme{w_{t}}+\norme{w_{t}-w_{\ast}}\) and the bound \eqref{eqforGDmarginangle} give
\begin{equation}
	\begin{aligned}
	\norme{w_{t}}\geq \norme{w_{\ast}} -\norme{w_{t}-w_{\ast}}&\geq \norme{w_{\ast}}-\sqrt{\norme{w_{0}}^{2}-\norme{w_{\ast}}^2 +\scal{\mathbf{1}}{u_{0}-u_{\ast}}}\left(1-\frac{\gamma\mu}{1+\gamma\mu}\right)^{\frac{t}{2}}\\
	&=\norme{w_{\ast}}\left(1-\sqrt{\frac{\norme{w_{0}}^{2}}{\norme{w_{\ast}}^2}-1 +\frac{1}{\norme{w_{\ast}}^2}\scal{\mathbf{1}}{u_{0}-u_{\ast}}}\left(1-\frac{\gamma\mu}{1+\gamma\mu}\right)^{\frac{t}{2}}\right)
\end{aligned}
\end{equation}
By setting
\begin{equation}\label{eq: tstar}
	t^{\ast}=\frac{\log(\frac{1}{4})-\log\left(\frac{\norme{w_{0}}^{2}}{\norme{w_{\ast}}^2}-1 +\frac{1}{\norme{w_{\ast}}^2}\scal{\mathbf{1}}{u_{0}-u_{\ast}}\right)}{\log\left(1-\frac{\gamma\mu}{1+\gamma\mu}\right)},
\end{equation} 
for all $t\geq t^{\ast}$, it holds $\norme{w_t}\geq \frac{1}{2}\norme{w_{\ast}}$.
From Lemma \ref{implicationrates}  we derive:
 \begin{equation} 1-\frac{\scal{w_{t}}{w_{\ast}}}{\norme{w_{t}}\norme{w_{\ast}}}\leq \frac{1}{\norme{w_{\ast}}^{2}}\norme{w_{t}-w_{\ast}}^2
\quad \text{ and } \quad 
M\Big(\frac{w_{\ast}}{\norme{w_{\ast}}}\Big)-M\Big(\frac{w_{t}}{\norme{w_{t}}}\Big)\leq \frac{2\norme{X}_{F}}{\norme{w_{\ast}}^{2}}\norme{w_{t}-w_{\ast}}
\end{equation}
which, together with \eqref{eqforGDmarginangle}, allows to conclude the proof of Theorem \ref{basicteoGD}.

\end{proof}

\subsection{Proof of Theorem \ref{basicteoiGD}}\label{subsectioniFBdualhinge}

In this paragraph, we turn our attention to the convergence properties of Algorithm \ref{algodualinertialGD}, hence the proof of Theorem \ref{basicteoiGD}. The analysis is based on discrete Lyapunov-energy techniques that have recently  become very popular  for studying inertial schemes like Algorithm \ref{algodualinertialGD}. Our analysis follows the line of study adopted in a recent stream of papers such as \cite{su2016differential,chambolle2015convergence,attouch2018rate,attouch2018fast,apidopoulos2017convergence,apidopoulos2020convergence,calatroni2019accelerated} and their  related references. The proof of Theorem \ref{basicteoiGD} is based on the following proposition which provides some bounds for the dual objective function $D_{t}(u_{t})-D_{\infty}(u_{\ast})$.

\begin{prop}\label{ratesdualenergyinertial}
Let $u_{\ast}$ be a solution of the dual problem \eqref{dualminproblem} and let $\{u_{t}\}_{t\geq 0}$ be the sequence generated by  Algorithm \ref{algodualinertialGD} with $\alpha\geq 3$ and $\lambda_{0}\leq \nor{u_{\ast}}^{-1}$. Then  the following estimate holds true for the dual-objective function:
\begin{equation}\label{dualobjectiveestimate}
D_{t}(u_t)-D_{\infty}(u_{\ast})\leq \frac{C^2}{(t+\alpha-1)^{2}},
\end{equation}
where \begin{equation}
\label{eq:C}
C=(\alpha-1)\bigg(\big(D_{0}(u_{0})-D_{\infty}(u_{\ast})\big)+\frac{\norme{u_{0}-u_{\ast}}^2}{2\gamma}\bigg)^{1/2}.
\end{equation}
\end{prop}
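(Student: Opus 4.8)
The plan is to run the by-now-standard discrete Lyapunov (Nesterov/FISTA-type) analysis for an inertial forward--backward iteration, but to carry it out for the whole family $\{D_t\}_{t\ge0}$ of regularized dual objectives at once, using the pointwise monotonicity $D_{t+1}\le D_t$ from Lemma~\ref{remarkdecreasingDt} to absorb the change of the objective along the iterations. First I would fix notation: write $D_t=g+h_t$ with $g(u)=\tfrac12\|X^{\top}u\|^2$, whose gradient is $\|XX^{\top}\|_{\text{op}}$-Lipschitz, and $h_t(u)=\tfrac1{\lambda_t}\mathcal L^{\ast}(\lambda_t u)=\langle\mathbf 1,u\rangle+\iota_{[-1/\lambda_t,0]^n}(u)$, so that $u_{t+1}$ in Algorithm~\ref{algodualinertialGD} is exactly $\prox_{\gamma h_t}\big(z_t-\gamma\nabla g(z_t)\big)$ and the stepsize condition $\gamma\le\|XX^{\top}\|_{\text{op}}^{-1}$ is precisely what makes the basic sufficient-decrease estimate for this proximal-gradient step hold. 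Two elementary observations streamline everything: since $\{\lambda_t\}_t$ is decreasing, $[-1/\lambda_{t-1},0]^n\subseteq[-1/\lambda_t,0]^n$, so $u_0,\dots,u_t\in\dom h_t$; and since $u_{\ast}\in\argmin D_\infty\subseteq(-\infty,0]^n$ with $\lambda_t\le\lambda_0\le\|u_{\ast}\|^{-1}$, we get $\lambda_t|u_{\ast}^i|\le\lambda_t\|u_{\ast}\|\le1$, hence $u_{\ast}\in[-1/\lambda_t,0]^n$ and in fact $D_t(u_{\ast})=D_\infty(u_{\ast})=:\bar D=\min D_\infty$ for every $t$. Finally $D_t(u_{t+1})\ge D_{t+1}(u_{t+1})\ge D_\infty(u_{t+1})\ge\bar D$ by Lemmas~\ref{remarkdecreasingDt} and~\ref{lemmadualprimalsequence}.

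Next I would write the one-step inequality: the sufficient-decrease lemma for a proximal-gradient step with $\gamma\|XX^{\top}\|_{\text{op}}\le1$ gives, for every $u\in\dom h_t$,
\[
D_t(u_{t+1})\le D_t(u)+\frac1{2\gamma}\|z_t-u\|^2-\frac1{2\gamma}\|u_{t+1}-u\|^2 .
\]
I would apply this at $u=u_{\ast}$ (using $D_t(u_{\ast})=\bar D$ to get a \emph{fixed} reference value) and at $u=u_t$, recalling $z_t-u_t=\alpha_t(u_t-u_{t-1})$ with $\alpha_t=t/(t+\alpha)$; then I would take the same convex combination of the two inequalities as in the constant-objective proof, and use $D_{t+1}(u_{t+1})\le D_t(u_{t+1})$ (Lemma~\ref{remarkdecreasingDt}) to rewrite the left-hand side in terms of the \emph{next} objective. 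This should yield a recursion $\mathcal E_{t+1}\le\mathcal E_t$ for $t\ge1$, where
\[
\mathcal E_t=a_t\big(D_t(u_t)-\bar D\big)+\frac1{2\gamma}\big\| b_t(u_t-u_{\ast})+c_t(u_t-u_{t-1})\big\|^2 ,
\]
with $a_t\asymp(t+\alpha-1)^2$ and $b_t,c_t$ the usual momentum coefficients; the assumption $\alpha\ge3$ is exactly what makes the cross-term coefficients in this recursion have the correct sign, so that $\mathcal E_t$ is non-increasing.

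Finally, since $u_1=u_0$ the first iteration is a plain (non-inertial) proximal-gradient step, which makes $\mathcal E_1$ reduce to a constant of the form $(\alpha-1)^2\big((D_0(u_0)-\bar D)+\tfrac1{2\gamma}\|u_0-u_{\ast}\|^2\big)$ (using $D_1(u_0)\le D_0(u_0)$ for the function-value part and $z_1=u_1$ for the quadratic part); iterating $\mathcal E_{t+1}\le\mathcal E_t$, lower-bounding $\mathcal E_t$ by its nonnegative function-value term $a_t\big(D_t(u_t)-\bar D\big)$, and dividing out $a_t$ then gives
\[
D_t(u_t)-D_\infty(u_{\ast})\le\frac{(\alpha-1)^2}{(t+\alpha-1)^2}\Big((D_0(u_0)-D_\infty(u_{\ast}))+\frac{\|u_0-u_{\ast}\|^2}{2\gamma}\Big)=\frac{C^2}{(t+\alpha-1)^2},
\]
with $C$ as in \eqref{eq:C}, which is the claim.

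I expect the only genuine obstacle to be the adaptation of the classical inertial forward--backward argument to a \emph{moving} objective: in the constant case the telescoping of function values is immediate, whereas here it closes only because (i) $D_{t+1}\le D_t$ lets one pass from the $D_t(u_{t+1})$ that comes out of the descent step to the $D_{t+1}(u_{t+1})$ that must appear in $\mathcal E_{t+1}$, and (ii) the parameter condition $\lambda_0\le\|u_{\ast}\|^{-1}$ pins $D_t(u_{\ast})$ to the fixed value $\bar D$, so the target in the one-step inequality does not drift with $t$. Pinning down the precise coefficients $a_t,b_t,c_t$ and verifying the $\alpha\ge3$ sign conditions (together with the boundary handling at $t=1$) is the bookkeeping core of the proof; the rest follows the templates in \cite{su2016differential,chambolle2015convergence,attouch2018rate,apidopoulos2020convergence,calatroni2019accelerated}.
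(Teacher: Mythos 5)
Your proposal follows essentially the same route as the paper's proof: the same two applications of the descent lemma (at $u_\ast$ and at $u_t$), the same use of the monotonicity $D_{t+1}\le D_t$ and of $\lambda_0\le\|u_\ast\|^{-1}$ to pin $D_t(u_\ast)=D_\infty(u_\ast)$, and the same Lyapunov energy $k_t^2\big(D_t(u_t)-D_\infty(u_\ast)\big)+\tfrac{1}{2\gamma}\|\nu(u_{t-1}-u_\ast)+k_t(u_t-u_{t-1})\|^2$ with $\nu=\alpha-1$ and $\alpha\ge3$ enforcing the sign conditions. The only differences are cosmetic (your quadratic term is anchored at $u_t$ rather than $u_{t-1}$, a reparametrization of the same energy), so the proposal is correct and matches the paper's argument.
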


\begin{proof}[\textbf{Proof of Proposition \ref{ratesdualenergyinertial}}]
	
Let $u_\ast\in \argmin D_{\infty}$ and $u_{t}$ the sequence generated by Algorithm \ref{algodualinertialGD} with $\lambda_{0}\leq \norme{u_{\ast}}^{-1}$. For all $\nu>0$, let us define the following auxiliary sequences:
\begin{align}
	r_{t}&=D_{t}(u_{t})-D_{\infty}(u_{\ast}) ~, \quad 
	\delta_{t} = \norme{u_{t}-u_{t-1}}^2 ~, \quad
	h_{t}=\norme{u_{t}-u_{\ast}}^2 \\
	k_{t}&=t+\alpha-1 ~ \text{ and } ~ v_{t}=\norme{\nu(u_{t-1}-u_{+})+k_{t}(u_{t}-u_{t-1})}^2
\end{align}

The following energy sequence that will play a fundamental role in our analysis
\begin{equation}\label{deflyapunov}
	\begin{aligned}
		E_{t}&= (t+\alpha-1)^2\big(D_{t}(u_{t})-D_{\infty}(u_{\ast})\big) +\frac{1}{2\gamma}\norme{\nu(u_{t-1}-u_{\ast})+k_{t}(u_{t}-u_{t-1})}^2   \\ &=k_{t}^2r_{t}  +\frac{1}{2\gamma}v_{t} 
	\end{aligned}
\end{equation}
We will show that by tuning properly the parameters $\alpha$ and $\nu$, the sequence $E_{t}$ is non-increasing.

The Descent Lemma \ref{descentlemma} (with $y= q_{t}$ and $x=u_{\ast}$), implies
\begin{equation}\label{proofinertial1}
\begin{aligned}
2\gamma \big(D_{t}(u_{t+1})-D_{t}(u_{\ast})\big) & \leq \norme{u_{t}-u_{\ast} + \alpha_{t}(u_{t}-u_{t-1})}^2-h_{t+1} \\ & = h_{t}-h_{t+1} +\alpha_{t}^2\delta_{t}+2\alpha_{t}\scal{u_{t}-u_{t-1}}{u_{t}-u_{\ast}} \\ & = h_{t}-h_{t+1} +\alpha_{t}^2\delta_{t} + \alpha_{t}\big(\delta_{t}+h_{t}-h_{t-1}\big)
\end{aligned}
\end{equation}
By multiplying \eqref{proofinertial1} with $\nu k_{t+1}$ and using the definition of $\alpha_{t}$ we derive:
\begin{equation}\begin{aligned}
2\gamma\nu k_{t+1}\big(D_{t}(u_{t+1})-D_{t}(u_{\ast})\big) &\leq \nu k_{t+1}\big(h_{t}-h_{t+1}\big) +\nu k_{t+1}\alpha_{t}^2\delta_{t}+ \nu k_{t+1}\alpha_{t}\big(\delta_{t}+h_{t}-h_{t-1}\big) \\&= \nu k_{t+1}\big(h_{t}-h_{t+1}\big) +\frac{ \nu
t^2}{k_{t+1}}\delta_{t}+ \nu t\big(\delta_{t}+h_{t}-h_{t-1}\big)
\end{aligned}
\end{equation}

On the other hand, the Descent Lemma \ref{descentlemma} (with $y= q_{t}$ and $x=u_{t}$) yields:
\begin{equation}
2\gamma\big(D_{t}(u_{t+1})-D_{t}(u_{t})\big) \leq \alpha_{t}^2\delta_{t}-\delta_{t+1}
\end{equation}
which by multiplying by $k_{t+1}^2\big(1-\frac{\nu}{k_{t+1}}\big)$ (here $\nu\leq \alpha+1$) and using the definition of $\alpha_{t}$, gives
\begin{equation}\label{proofinertial2}
2\gamma \big(k_{t+1}^2-\nu k_{t+1}\big)\big(D_{t}(u_{t+1})-D_{t}(u_{t})\big) \leq \big(t^2-\frac{\nu t^2}{k_{t+1}}\big)\delta_{t}-\big(k_{t+1}^2-\nu k_{t+1}\big)\delta_{t+1}
\end{equation}

By adding relations \eqref{proofinertial1} and \eqref{proofinertial2}, we obtain
\begin{equation}\label{proofinertial3}
\begin{aligned}
2\gamma \big(k_{t+1}^2 D_{t}(u_{t+1})-k_{t+1}^2D_{t}(u_{t}) & -  \nu k_{t+1}(D_{t}(u_{t})-D_{t}(u_{\ast}))\big) \leq   \big(t^2-\frac{\nu t^2}{k_{t+1}}\big)\delta_{t} -\big(k_{t+1}^2-\nu k_{t+1}\big)\delta_{t+1} \\ & \qquad \qquad +\nu k_{t+1}\big(h_{t}-h_{t+1}\big) +\frac{ \nu
t^2}{k_{t+1}}\delta_{t}+ \nu t\big(\delta_{t}+h_{t}-h_{t-1}\big) \\
&=t^2\delta_{t}-k_{t+1}^2\delta_{t+1} + \nu t\big(\delta_{t}+h_{t}-h_{t-1}\big)-\nu k_{t+1}\big(h_{t+1}-h_{t}-\delta_{t+1}\big)
\end{aligned}
\end{equation}

Next, by using the non-increasing property of $D_{t}$ (in particular $D_{t+1}(\cdot)\leq D_{t}(\cdot)$, see Lemma \ref{remarkdecreasingDt}) and adding and subtracting $(k_{t+1}^2-\nu k_{t+1}) D_{\infty}(u_{\ast})$ on the left-hand side of \eqref{proofinertial3} and using the definition of $r_{t}$, we derive:
\begin{equation}
\begin{aligned}
2\gamma \big(k_{t+1}^2r_{t+1}-(k_{t+1}^2-&\nu k_{t+1})r_{t}- \nu k_{t+1}(D_{t}(u_{\ast})-D_{\infty}(u_{\ast}))\big) \leq \\ & t^2\delta_{t}-k_{t+1}^2\delta_{t+1} + \nu t\big(\delta_{t}+h_{t}-h_{t-1}\big)-\nu k_{t+1}\big(h_{t+1}-h_{t}-\delta_{t+1}\big).
\end{aligned}
\end{equation}
Since $\lambda_{t}\leq \norme{u_{\ast}}^{-1}$, we have $\lambda_{t}u_{\ast}\in [-1,0]^{n}$ and $D_{t}(u_{\ast})=D_{\infty}(u_{\ast})$, for all $t\geq 1$, thus from the previous relation we get:
\begin{equation}
2\gamma \big(k_{t+1}^2r_{t+1}-(k_{t+1}^2-\nu k_{t+1})r_{t}\big) \leq  t^2\delta_{t}-k_{t+1}^2\delta_{t+1} + \nu t\big(\delta_{t}+h_{t}-h_{t-1}\big)-\nu k_{t+1}\big(h_{t+1}-h_{t}-\delta_{t+1}\big)
\end{equation}

By adding and subtracting $k_{t}^2r_{t}$ on both sides of the previous inequality, we have
\begin{equation}\label{forproofinertial1}
\begin{aligned}
2\gamma k_{t+1}^2r_{t+1}- 2\gamma k_{t}^2r_{t} & \leq 2\gamma\rho_{t+1}r_{t}  + t^2\delta_{t}-k_{t+1}^2\delta_{t+1} +\nu t\big(\delta_{t}+h_{t}-h_{t-1}\big)  \\ & \qquad  -\nu k_{t+1}\big(h_{t+1}-h_{t}-\delta_{t+1}\big)
\end{aligned}
\end{equation}
with $\rho_{t+1}=k_{t+1}^2-\nu k_{t+1}-k_{t}^2$.

By considering now the variation of the sequence $v_{t}$ and performing some basic algebraic computations we have:
\begin{equation}\label{forproofinertial2}
\begin{aligned}
v_{t+1}-v_{t} & = \nu^2\big(\norme{u_{t}-u_{\ast}}^2-\norme{u_{t-1}-u_{\ast}}^2\big) + k_{t+1}^2\norme{u_{t+1}-u_{t}}^2- k_{t}^2\norme{u_{t}-u_{t-1}}^2 \\
& \qquad + 2\nu k_{t+1}\scal{u_{t+1}-u_{t}}{u_{t}-u_{\ast}} -2\nu k_{t}\scal{u_{t}-u_{t-1}}{u_{t-1}-u_{\ast}} \\
&= \nu^2\big(h_{t}-h_{t-1}\big) + k_{t+1}^2\delta_{t+1}- k_{t}^2\delta_{t} + \nu k_{t+1}\big(h_{t+1}-h_{t}-\delta_{t+1}\big) -\nu k_{t}\big(h_{t}-h_{t-1}-\delta_{t}\big)
\end{aligned}
\end{equation}

By adding $v_{t+1}-v_{t}$ on both sides of \eqref{forproofinertial1} and using \eqref{forproofinertial2}, we obtain:
\begin{equation}\label{forproofinertial3}
\begin{aligned}
2\gamma k_{t+1}^2r_{t+1} +v_{t+1}- 2\gamma k_{t}^2r_{t}-v_{t} & \leq  2\gamma\rho_{t+1}r_{t}+ \big(t^2 +\nu t+\nu k_{t}-k_{t}^2\big)\delta_{t} \\ &
\quad  +\nu\big(\nu+t-k_{t}\big)\big(h_{t}-h_{t-1}\big)
\end{aligned}
\end{equation}
Since $k_{t}=t+\alpha-1$, \eqref{forproofinertial3} and the definition of $E_{t}$ \eqref{deflyapunov}, yield:
\begin{equation}
\begin{aligned}
2\gamma\big(E_{t+1}-E_{t}\big) & \leq  2\gamma\rho_{t+1}r_{t}+(\nu-\alpha+1)\big(2t+\alpha-1\big)\delta_{t}   +\nu(\nu-\alpha+1)\big(h_{t}-h_{t-1}\big)
\end{aligned}
\end{equation}
Thus, setting $\nu=\alpha-1$ and $\alpha\geq3$ (here notice that $\rho_{t+1}=k_{t+1}^2-\nu k_{t+1}-k_{t}^2=(3-\alpha)k_{t}+2-\alpha$), we get:
\begin{equation}
E_{t+1}-E_{t} \leq 0,
\end{equation}
which shows that the energy sequence $E_{t}$ is non-increasing.
In addition by neglecting the non-negative terms in the definition of $E_{t}$ \eqref{deflyapunov} and using its non-increasing property, we derive: 
\begin{equation}
k_{t}^2\left(D_{t}(u_{t}))-D_{\infty}(u_{\ast}\right)\leq E_{t}\leq E_{0}
\end{equation}

Finally by the definition of $E_{t}$ \eqref{deflyapunov}, we get:
\begin{equation}
r_{t}=D_{t}(u_{t})-D_{\infty}(u_{\ast})\leq \frac{C}{\big(t+\alpha-1\big)^2}
\end{equation}
\begin{equation}
\begin{aligned}
\text{with : } \qquad C& =E_{0}=(\alpha-1)^2\bigg(2\gamma\big(D_{0}(u_{0})-D_{\infty}(u_{\ast})\big)+\frac{\norme{u_{0}-u_{\ast}}^2}{2\gamma}\bigg)  \hspace{5cm}
\end{aligned}
\end{equation}
which concludes the proof of Proposition \ref{ratesdualenergyinertial}.
\end{proof}

\begin{proof}[\textbf{Proof of Theorem \ref{basicteoiGD}}] 
Lemma \ref{lemmadualprimalsequence} and the non-increasing property of $D_{t}$ given in Lemma \ref{remarkdecreasingDt}, together with the upper bound \eqref{dualobjectiveestimate} in Proposition \ref{ratesdualenergyinertial} for the dual iterates yield
\begin{equation}\label{eqformarginangleiGD}
\norme{w_{t}-w_{\ast}}\leq \sqrt{2\big(D_{\infty}(u_{t}) -  D_{\infty}(u_{\ast})\big)} \leq \sqrt{2\big(D_{t}(u_{t}) -  D_{\infty}(u_{\ast})\big)} \leq \frac{C}{t+\alpha-1},
\end{equation}
where $C=(\alpha-1)\bigg(2\big(D_{0}(u_{0})-D_{\infty}(u_{\ast})\big)+\frac{\norme{u_{0}-u_{\ast}}^2}{\gamma}\bigg)^{1/2}$ as follows by \eqref{eq:C}. By using the definition of $D_{0}$ and $D_{\infty}$ in the bound \eqref{eqformarginangleiGD} allows to conclude the proof of the first part of Theorem \ref{basicteoiGD}.
 Regarding the margin and the angle gap rates,  we proceed as in the proof of Theorem \ref{basicteoGD}. By using the triangle inequality and \eqref{eqformarginangleiGD}, and setting
\begin{equation}\label{eq tstar2}
	t^{\ast} = (\alpha-1)\left(2{\left({\frac{\norme{w_{0}}^2}{\norme{w_{\ast}}^2}-1+2\scal{\mathbf{1}}{u_{0}-u_{\ast}}+\frac{\norme{u_{0}-u_{\ast}}^2}{\gamma\norme{w_{\ast}}^2} }\right)^{1/2}}-1\right),
\end{equation}
we deduce that for all $t\geq t^{\ast}$, we have $\norme{w_{t}}\geq \frac{1}{2}\norme{w_{\ast}}$.
Therefore, thanks to Lemma \ref{implicationrates}, the proof follows directly from the bound \eqref{eqformarginangleiGD}.
\end{proof}

\subsection{Proof of Theorem \ref{basicteostability}}\label{subsectionproofstability}

In this paragraph we provide the proof of Theorem \ref{basicteostability} regarding the stability properties of Algorithm \ref{algodualprojGD}, in presence of noise, as discussed in section \ref{section stability} .

\begin{proof}[\textbf{Proof of Theorem \ref{basicteostability}}] 
Without loss of generality let us assume that $S_{N}=\{1,\dots,N\}$, i.e. the set of flipped labels consists of the first $N$ indices (notice that up to a re-indexation one can always retrieve this case).	
Let also $\tilde{u}_{0}=u_{0}=0$ and $\{(\tilde{w}_{t}, \tilde{u}_{t})\}_{t\geq 1}$, $\{w_{t},u_{t}\}_{t\geq0}$ be the pair of sequences generated by Algorithm \ref{algodualprojGD} applied to the noisy $(X,\tilde{Y})$ and true $(X,Y)$ data respectively.

 

By using the triangle inequality, the Algorithm \ref{algodualprojGD} for $\tilde{w}_{t}$ and $w_{t}$ and the definition of the operator norm, for all $t\geq 0$, it holds:
\begin{equation}\label{eq 1 stability}
	\begin{aligned}
			\norme{\tilde{w}_{t} - w_{\ast}} & \leq \norme{\tilde{w}_{t}-w_{t}} + \norme{w_{t}-w_{\ast}}  \\
			& = \norme{\tilde{Z}^{\top}\tilde{u}_{t}-Z^{\top}u_{t}} +\norme{w_{t}-w_{\ast}}	\\
			& \leq \norme{\tilde{Z}^{\top}}_{op}\norme{\tilde{u}_{t}-u_{t}} + \norme{(\tilde{Z}^{\top}-Z^{\top})u_{t}} +\norme{w_{t}-w_{\ast}}
	\end{aligned}
\end{equation}

The bound in the right-hand side of \eqref{eq 1 stability} is formed by three parts, the first two are related with the stability properties of Algorithm \ref{algodualprojGD}, while the last one to its optimization one. By using the convergence result \eqref{eq: basicteoGD} in Theorem \ref{basicteoGD}, the bound \eqref{eq 1 stability} takes the following form:
\begin{equation}\label{eq stability primal}
	\norme{\tilde{w}_{t}-w_{\ast}} \leq \norme{\tilde{Z}^{\top}(\tilde{u}_{t}-u_{t})} + \norme{(\tilde{Z}^{\top}-Z^{\top})u_{t}} +C(1-\rho)^{\frac{t}{2}}
\end{equation}
where	$C=\sqrt{\norme{\Z^{\top}u_{0}}^2-\norme{\Z^{\top}u_{\ast}}^2 +2\scal{\mathbf{1}}{u_{0}-u_{\ast}}}$ and $\rho=\frac{\gamma\mu}{1+\gamma\mu}$.

From the definition of the noise model $\tilde{y}_{i}y_{i}=-1$, for all $i\in S_{N}$, the $i$-th row of matrix $\tilde{Z}^{\top}-Z^{\top}$ can be expressed as follows
\begin{equation}\label{matrix A}
(\tilde{Z}^{\top}-Z^{\top})_{i} =(\tilde{y}_{i}x_{i}-y_{i}x_{i}) = \begin{cases}
		-2x_{i} ~ & \text{ if }  \\
		0 & \text{ if } i \in S_{N}
  \end{cases}
\end{equation}

By using the expression \eqref{matrix A} the second term in the right-hand side of \eqref{eq 1 stability} can be bounded as follows:
\begin{equation}
\begin{aligned}\label{eq stability for second term}
	\norme{(\tilde{Z}^{\top}-Z^{\top})u_{t}} &=2\left(\sum_{i=1}^{N}(u_{t}^{i})^{2}\norme{x_{i}}^{2}\right)^{\frac{1}{2}} \leq 2K\sqrt{N}\norme{u_{t}} \\
	& \leq 2K\sqrt{N}\left(\norme{u_{t}-u_{\ast}} + \norme{u_{\ast}}\right)
	\end{aligned}
\end{equation}
where we used that $\norme{x_{i}}\leq K$, for all $i\in\{1,\dots,n\}$ and $\sum_{i=1}^{N}\abs{u_{t}^{i}}\leq \norme{u_{t}}$. In addition since $u_{t}$ is generated by Algorithm \ref{algodualprojGD}, it satisfies the Fejer property with respect to any minimizer $u_{\ast}$ of $D_{\infty}$, i.e. $\norme{u_{t}-u_{\ast}} \leq \norme{u_{0}-u_{\ast}}$ (this can be seen as an immediate consequence of Lemma \ref{lemmaenergyFB}). Therefore from \eqref{eq stability for second term}, for all $t\geq 1$, it follows:
\begin{equation}\label{eq stability 2}
\norme{(\tilde{Z}^{\top}-Z^{\top})u_{t}} \leq 2K\left(\norme{u_{0}-u_{\ast}} + \norme{u_{\ast}}\right)\sqrt{N}
\end{equation}

For the first term in the right-hand side of \eqref{eq 1 stability}, by using Algorithm \ref{algodualprojGD}, we obtain:	
\begin{equation}\label{recursive for u}
	\begin{aligned}
		\norme{\tilde{u}_{t+1}-u_{t+1}} &=\norme{\prox_{\frac{\gamma}{\lambda_{t}}\mathcal{L}^{\ast}(\lambda_{t}\cdot)}\left(\tilde{u}_{t}-\gamma \tilde{Z}\tilde{Z}^{\top}\tilde{u}_{t}\right) - 	\prox_{\frac{\gamma}{\lambda_{t}}\mathcal{L}^{\ast}(\lambda_{t}\cdot)}\left(u_{t}-\gamma ZZ^{\top}u_{t}\right)} \\
	&  = \norme{\lambda_{t}^{-1}\prox_{\gamma\lambda_{t}\mathcal{L}^{\ast}}\left(\lambda_{t}\left(\tilde{u}_{t}-\gamma \tilde{Z}\tilde{Z}^{\top}\tilde{u}_{t}\right)\right) - 	\lambda_{t}^{-1}\prox_{\gamma\lambda_{t}\mathcal{L}^{\ast}}\left(\lambda_{t}\left(u_{t}-\gamma ZZ^{\top}u_{t}\right)\right)} \\
 &	\leq \norme{\tilde{u}_{t}-\gamma\tilde{Z}\tilde{Z}^{\top}\tilde{u}_{t}-u_{t} +\gamma ZZ^{\top}u_{t}}  \\
 & \leq \norme{\left(\text{Id}-\gamma\tilde{Z}\tilde{Z}^{\top}\right)(\tilde{u}_{t}-u_{t})} + \gamma\norme{\left(ZZ^{\top}-\tilde{Z}\tilde{Z}^{\top}\right)u_{t}} \\
 & \leq \norme{\tilde{u}_{t}-u_{t}} + \gamma\norme{\left(ZZ^{\top}-\tilde{Z}\tilde{Z}^{\top}\right)u_{t}}
	\end{aligned}
\end{equation}
where in the second equality we used the scaling property of the proximal operator \cite[Theorem $6.12$]{beck2017first} and in the first inequality its non expansiveness property \cite[Proposition $12.28$]{bauschke2011convex}. The second inequality is the triangular inequality and the last one follows from the non-expansiveness of the operator $\text{Id}-\gamma\tilde{Z}\tilde{Z}^{\top}$ (since $\gamma\leq \norme{\tilde{Z}\tilde{Z}^{\top}}_{op}^{-1}$).

By applying recursively relation \eqref{recursive for u}, since $\tilde{u}_{0}=u_{0}$, for all $t\geq1$, it follows:
\begin{equation}\label{stability first term}
	\norme{\tilde{u}_{t}-u_{t}} \leq \sum_{s=0}^{t-1}\norme{Bu_{s}}
\end{equation}
where we used the notation $B:=\tilde{Z}\tilde{Z}^{\top} - ZZ^{\top}$.

By using the noise model $\tilde{y}_{i}y_{i}=-1$, for all $i\in S_{N}$, the matrix $B=\tilde{Z}\tilde{Z}^{\top} - ZZ^{\top}$ can be expressed element-wise as follows
\begin{equation}\label{matrix B}
	(B)_{i,j} =(\tilde{y}_{i}\tilde{y}_{j}-y_{i}y_{j})\scal{x_{i}}{x_{j}} = \begin{cases}
		-2\scal{x_{i}}{x_{j}} ~ & \text{ if }\left((i,j)\in S_{N}^{c}\times S_{N}\right)\cup\left((i,j)\in S_{N}\times S_{N}^{c}\right)  \\
		0 & \text{ if } \left((i,j)\in S_{N}\times S_{N}\right)\cup\left((i,j)\in S_{N}^{c}\times S_{N}^{c}\right)
	\end{cases}
\end{equation}

By the definition of the euclidean norm, and the expression \eqref{matrix B}, for any $u\in \R^{n}$, the term $\norme{Bu}$ can be bounded as follows:
\begin{equation}\label{stability for first term}
\begin{aligned}
	\norme{Bu}^{2} & = \norme{\left(\tilde{Z}\tilde{Z}^{\top} - ZZ^{\top}\right)u}^{2} \\
	& =  4\left( \left(\sum_{i=N+1}^{n}u_{i}\scal{x_{1}}{x_{i}}\right)^{2}  +\dots +\left(\sum_{i=N+1}^{n}u_{i}\scal{x_{N}}{x_{i}}\right)^{2}  + \left(\sum_{i=1}^{N}u_{i}\scal{x_{N+1}}{x_{i}}\right)^{2} +\dots + \left(\sum_{i=1}^{N}u_{i}\scal{x_{n}}{x_{i}}\right)^{2}  \right) \\
	& \leq 8\left( \left(\norme{x_{1}}^{2}+\dots+\norme{x_{N}}^{2}\right)\sum_{i=N+1}^{n}(u_{i})^{2}\norme{x_{i}}^{2}  +\left(\norme{x_{N+1}}^{2}+\dots+\norme{x_{n}}^{2}\right)\sum_{i=1}^{N}(u_{i})^{2}\norme{x_{i}}^{2}\right) \\
	& \leq 8\left(K^{4}N\sum_{i=N+1}^{n}(u_{i})^{2} + K^{4}(n-N)\sum_{i=1}^{N}(u_{i})^{2} \right) \\
	& = 8K^{4}\left(N\norme{u}^{2}+ (n-2N)\sum_{i=1}^{N}(u_{i})^{2} \right) \\
	& \leq 8K^{4}(n+1-2N)N\norme{u}^{2}
\end{aligned}
\end{equation}
where in the first inequality we used the convexity of the squared norm, in the second one the definition of $K=\underset{i\leq n}{\max}\{\norme{x_{i}}\}$ and in the last one, the convention $N\leq \frac{n}{2}$.

Therefore, by using \eqref{stability for first term} in \eqref{stability first term}, it follows that:
\begin{equation}\label{eq stability 3}
	\begin{aligned}
	\norme{\tilde{u}_{t}-u_{t}} & \leq 2\sqrt{2}K^{2}\sqrt{2(n+1-2N)N}\sum_{s=0}^{t-1}\norme{u_{s}} \\
	& \leq 2\sqrt{2}K^{2}\sqrt{2(n+1-2N)N}\sum_{s=0}^{t-1}\left(\norme{u_{s}-u_{\ast}} +\norme{u_{\ast}}\right)
	\\
		& \leq 2\sqrt{2}K^{2}\sqrt{2(n+1-2N)N}\left(\norme{u_{0}-u_{\ast}}+\norme{u_{\ast}}\right)t
\end{aligned}
\end{equation}
where we used the triangular inequality $\norme{u_{s}} \leq \norme{u_{s}-u_{\ast}} + \norme{u_{\ast}}$ and the  Fejer property of the sequence $u_{s}$ with respect to any minimizer $u_{\ast}$ of $D_{\infty}$, i.e. $\norme{u_{s}-u_{\ast}} \leq \norme{u_{0}-u_{\ast}}$ (as an immediate consequence of Lemma \ref{lemmaenergyFB}).

Finally, by using the bounds \eqref{eq stability 2} and \eqref{eq stability 3}, in the estimate \eqref{eq stability primal}, for all $t\geq 1$, it holds:
\begin{equation}\label{eq: stability final}
\norme{\tilde{w}_{t}-w_{\ast}} \leq C_{1}\sqrt{2(n+1-2N)N}t +  C_{2}\sqrt{N} +C(1-\rho)^{\frac{t}{2}}
\end{equation}
where $C_{1}=2\sqrt{2}K^{2}\left(\norme{u_{0}-u_{\ast}}+\norme{u_{\ast}}\right)$, $C_{2}=2K\left(\norme{u_{0}-u_{\ast}}+\norme{u_{\ast}}\right)$, 	$C=\sqrt{\norme{\Z^{\top}u_{0}}^2-\norme{\Z^{\top}u_{\ast}}^2 +2\scal{\mathbf{1}}{u_{0}-u_{\ast}}}$ and $\rho=\frac{\gamma\mu}{1+\gamma\mu}$.

The (worst-case) optimal stopping time $t_{\ast}(N):=\max\left\{1, \frac{2}{\ln\left(\frac{1}{1-\rho}\right)}\ln\left(\frac{C\ln\left(\frac{1}{1-\rho}\right)}{2C_{1}\sqrt{2(n+1-2N)N}}\right)\right\}$, as expressed in Theorem \ref{basicteostability} follows by optimizing the right-hand side of \eqref{eq: stability final} over $t$ and then evaluating it in \eqref{eq: stability final}, which yields \eqref{optimal bound stability} and conclude the proof of Theorem \ref{basicteostability}. 
\end{proof}

\phantomsection 
\addcontentsline{toc}{chapter}{Bibliography} 
\bibliographystyle{siam}
\bibliography{reference}

\begin{thebibliography}{10}

\bibitem{angluin1988learning}
{\sc D.~Angluin and P.~Laird}, {\em Learning from noisy examples}, Machine
  learning, 2 (1988), pp.~343--370.

\bibitem{apidopoulos2017convergence}
{\sc V.~Apidopoulos, J.-F. Aujol, and C.~Dossal}, {\em Convergence rate of
  inertial forward--backward algorithm beyond nesterov’s rule}, Mathematical
  Programming, 180 (2020), pp.~137--156.

\bibitem{apidopoulos2020convergence}
{\sc V.~Apidopoulos, J.-F. Aujol, C.~Dossal, and A.~Rondepierre}, {\em
  Convergence rates of an inertial gradient descent algorithm under growth and
  flatness conditions}, Mathematical Programming, 187 (2021), pp.~151--193.

\bibitem{apidopoulos2022convergence}
{\sc V.~Apidopoulos, N.~Ginatta, and S.~Villa}, {\em Convergence rates for the
  heavy-ball continuous dynamics for non-convex optimization, under
  polyak--{\l}ojasiewicz condition}, Journal of Global Optimization,  (2022),
  pp.~1--27.

\bibitem{attouch1996viscosity}
{\sc H.~Attouch}, {\em Viscosity solutions of minimization problems}, SIAM
  Journal on Optimization, 6 (1996), pp.~769--806.

\bibitem{attouch2018rate}
{\sc H.~Attouch, A.~Cabot, Z.~Chbani, and H.~Riahi}, {\em Rate of convergence
  of inertial gradient dynamics with time-dependent viscous damping
  coefficient}, Evolution Equations and Control Theory, 7 (2018), pp.~353--371.

\bibitem{attouch2018fast}
{\sc H.~Attouch, Z.~Chbani, J.~Peypouquet, and P.~Redont}, {\em Fast
  convergence of inertial dynamics and algorithms with asymptotic vanishing
  viscosity}, Mathematical Programming, 168 (2018), pp.~123--175.

\bibitem{auslender1987penalty}
{\sc A.~Auslender, J.~Crouzeix, and P.~Fedit}, {\em Penalty-proximal methods in
  convex programming}, Journal of Optimization Theory and Applications, 55
  (1987), pp.~1--21.

\bibitem{bahraoui1994convergence}
{\sc M.~Bahraoui and B.~Lemaire}, {\em Convergence of diagonally stationary
  sequences in convex optimization}, Set-Valued Analysis, 2 (1994), pp.~49--61.

\bibitem{bauschke2011convex}
{\sc H.~H. Bauschke and P.~L. Combettes}, {\em Convex analysis and monotone
  operator theory in Hilbert spaces}, Springer Science \& Business Media, 2011.

\bibitem{beck2017first}
{\sc A.~Beck}, {\em First-order methods in optimization}, SIAM, 2017.

\bibitem{beck2017linearly}
{\sc A.~Beck and S.~Shtern}, {\em Linearly convergent away-step conditional
  gradient for non-strongly convex functions}, Mathematical Programming, 164
  (2017), pp.~1--27.

\bibitem{beck2009fast}
{\sc A.~Beck and M.~Teboulle}, {\em A fast iterative shrinkage-thresholding
  algorithm for linear inverse problems}, SIAM journal on imaging sciences, 2
  (2009), pp.~183--202.

\bibitem{bolte2017error}
{\sc J.~Bolte, T.~P. Nguyen, J.~Peypouquet, and B.~W. Suter}, {\em From error
  bounds to the complexity of first-order descent methods for convex
  functions}, Mathematical Programming, 165 (2017), pp.~471--507.

\bibitem{bottou201113}
{\sc L.~Bottou and O.~Bousquet}, {\em The tradeoffs of large-scale learning},
  Optimization for machine learning,  (2011), pp.~351--368.

\bibitem{boucheron}
{\sc S.~Boucheron, O.~Bousquet, and G.~Lugosi}, {\em Theory of classification:
  a survey of some recent advances}, ESAIM: Probability and Statistics, 9
  (2010), pp.~323--375.

\bibitem{bousquetstability}
{\sc O.~Bousquet and A.~Elisseeff}, {\em Stability and generalization}, Journal
  of Machine Learning Research, 2 (2002), pp.~499--526.

\bibitem{boyd2004convex}
{\sc S.~Boyd and L.~Vandenberghe}, {\em Convex optimization}, Cambridge
  university press, 2004.

\bibitem{boyer1974quelques}
{\sc R.~Boyer}, {\em Quelques algorithmes diagonaux en optimisation convexe},
  PhD thesis, Universit\'e de Provence, 1974.

\bibitem{brianzi2013preconditioned}
{\sc P.~Brianzi, F.~Di~Benedetto, and C.~Estatico}, {\em Preconditioned
  iterative regularization in banach spaces}, Computational Optimization and
  Applications, 54 (2013), pp.~263--282.

\bibitem{calatroni2019accelerated}
{\sc L.~Calatroni, G.~Garrigos, L.~Rosasco, and S.~Villa}, {\em Accelerated
  iterative regularization via dual diagonal descent}, arXiv preprint
  arXiv:1912.12153,  (2019).

\bibitem{chambolle2015convergence}
{\sc A.~Chambolle and C.~Dossal}, {\em On the convergence of the iterates of
  the “fast iterative shrinkage/thresholding algorithm”}, Journal of
  Optimization Theory and Applications, 166 (2015), pp.~968--982.

\bibitem{chapelle2007training}
{\sc O.~Chapelle}, {\em Training a support vector machine in the primal},
  Neural computation, 19 (2007), pp.~1155--1178.

\bibitem{chizat2020implicit}
{\sc L.~Chizat and F.~Bach}, {\em Implicit bias of gradient descent for wide
  two-layer neural networks trained with the logistic loss}, in Conference on
  Learning Theory, PMLR, 2020, pp.~1305--1338.

\bibitem{clarkson2012sublinear}
{\sc K.~L. Clarkson, E.~Hazan, and D.~P. Woodruff}, {\em Sublinear optimization
  for machine learning}, Journal of the ACM (JACM), 59 (2012), pp.~1--49.

\bibitem{combettes2005signal}
{\sc P.~L. Combettes and V.~R. Wajs}, {\em Signal recovery by proximal
  forward-backward splitting}, Multiscale Modeling \& Simulation, 4 (2005),
  pp.~1168--1200.

\bibitem{cortes1995support}
{\sc C.~Cortes and V.~Vapnik}, {\em Support-vector networks}, Machine learning,
  20 (1995), pp.~273--297.

\bibitem{cristianini2000introduction}
{\sc N.~Cristianini, J.~Shawe-Taylor, et~al.}, {\em An introduction to support
  vector machines and other kernel-based learning methods}, Cambridge
  university press, 2000.

\bibitem{dieuleveut2020bridging}
{\sc A.~Dieuleveut, A.~Durmus, and F.~Bach}, {\em Bridging the gap between
  constant step size stochastic gradient descent and markov chains}, The Annals
  of Statistics, 48 (2020), pp.~1348--1382.

\bibitem{dontchev92}
{\sc A.~Dontchev and F.~Lempio}, {\em Difference methods for differential
  inclusions: A survey}, SIAM Review, 34 (1992), pp.~263--294.

\bibitem{engl1996regularization}
{\sc H.~W. Engl, M.~Hanke, and A.~Neubauer}, {\em Regularization of inverse
  problems}, vol.~375, Springer Science \& Business Media, 1996.

\bibitem{ferris2002interior}
{\sc M.~C. Ferris and T.~S. Munson}, {\em Interior-point methods for massive
  support vector machines}, SIAM Journal on Optimization, 13 (2002),
  pp.~783--804.

\bibitem{freund1998large}
{\sc Y.~Freund and R.~E. Schapire}, {\em Large margin classification using the
  perceptron algorithm}, in Proceedings of the eleventh annual conference on
  Computational learning theory, 1998, pp.~209--217.

\bibitem{garrigos2018iterative}
{\sc G.~Garrigos, L.~Rosasco, and S.~Villa}, {\em Iterative regularization via
  dual diagonal descent}, Journal of Mathematical Imaging and Vision, 60
  (2018), pp.~189--215.

\bibitem{garrigos2017convergence}
\leavevmode\vrule height 2pt depth -1.6pt width 23pt, {\em Convergence of the
  forward-backward algorithm: Beyond the worst case with the help of geometry},
  Mathematical Programming,  (2023), pp.~937--996.

\bibitem{guler2010foundations}
{\sc O.~G{\"u}ler}, {\em Foundations of optimization}, vol.~258, Springer
  Science \& Business Media, 2010.

\bibitem{gunasekar2018characterizing}
{\sc S.~Gunasekar, J.~Lee, D.~Soudry, and N.~Srebro}, {\em Characterizing
  implicit bias in terms of optimization geometry}, in International Conference
  on Machine Learning, PMLR, 2018, pp.~1832--1841.

\bibitem{gunasekar2018implicit}
{\sc S.~Gunasekar, J.~D. Lee, D.~Soudry, and N.~Srebro}, {\em Implicit bias of
  gradient descent on linear convolutional networks}, in Advances in Neural
  Information Processing Systems 31, S.~Bengio, H.~Wallach, H.~Larochelle,
  K.~Grauman, N.~Cesa-Bianchi, and R.~Garnett, eds., Curran Associates, Inc.,
  2018, pp.~9461--9471.

\bibitem{gunasekar2021mirrorless}
{\sc S.~Gunasekar, B.~Woodworth, and N.~Srebro}, {\em Mirrorless mirror
  descent: A natural derivation of mirror descent}, in International Conference
  on Artificial Intelligence and Statistics, PMLR, 2021, pp.~2305--2313.

\bibitem{Gunasekar2017}
{\sc S.~Gunasekar, B.~E. Woodworth, S.~Bhojanapalli, B.~Neyshabur, and
  N.~Srebro}, {\em Implicit regularization in matrix factorization}, in
  Advances in Neural Information Processing Systems, I.~Guyon, U.~V. Luxburg,
  S.~Bengio, H.~Wallach, R.~Fergus, S.~Vishwanathan, and R.~Garnett, eds.,
  vol.~30, Curran Associates, Inc., 2017.

\bibitem{hardt2016train}
{\sc M.~Hardt, B.~Recht, and Y.~Singer}, {\em Train faster, generalize better:
  Stability of stochastic gradient descent}, in International Conference on
  Machine Learning, PMLR, 2016, pp.~1225--1234.

\bibitem{hastie2004entire}
{\sc T.~Hastie, S.~Rosset, R.~Tibshirani, and J.~Zhu}, {\em The entire
  regularization path for the support vector machine}, Journal of Machine
  Learning Research, 5 (2004), pp.~1391--1415.

\bibitem{hoffman1952approximate}
{\sc A.~J. Hoffman}, {\em On approximate solutions of systems of linear
  inequalities}, Journal of Research of the National Bureau of Standards, 49
  (1952).

\bibitem{ji2020gradient}
{\sc Z.~Ji, M.~Dud{\'\i}k, R.~E. Schapire, and M.~Telgarsky}, {\em Gradient
  descent follows the regularization path for general losses}, in Conference on
  Learning Theory, 2020, pp.~2109--2136.

\bibitem{ji2021fast}
{\sc Z.~Ji, N.~Srebro, and M.~Telgarsky}, {\em Fast margin maximization via
  dual acceleration}, in International Conference on Machine Learning, PMLR,
  2021, pp.~4860--4869.

\bibitem{ji2019implicit}
{\sc Z.~Ji and M.~Telgarsky}, {\em The implicit bias of gradient descent on
  nonseparable data}, in Conference on Learning Theory, 2019, pp.~1772--1798.

\bibitem{jin2023stochastic}
{\sc Q.~Jin, X.~Lu, and L.~Zhang}, {\em Stochastic mirror descent method for
  linear ill-posed problems in banach spaces}, Inverse Problems, 39 (2023),
  p.~065010.

\bibitem{Karimi2016}
{\sc H.~Karimi, J.~Nutini, and M.~Schmidt}, {\em Linear convergence of gradient
  and proximal-gradient methods under the polyak-{\l}ojasiewicz condition}, in
  Machine Learning and Knowledge Discovery in Databases, P.~Frasconi,
  N.~Landwehr, G.~Manco, and J.~Vreeken, eds., Cham, 2016, Springer
  International Publishing, pp.~795--811.

\bibitem{kearns1992toward}
{\sc M.~J. Kearns, R.~E. Schapire, and L.~M. Sellie}, {\em Toward efficient
  agnostic learning}, in Proceedings of the fifth annual workshop on
  Computational learning theory, 1992, pp.~341--352.

\bibitem{articleKeerthi}
{\sc S.~Keerthi and E.~Gilbert}, {\em Convergence of a generalized smo
  algorithm for svm classifier design}, Machine Learning, 46 (2002),
  pp.~351--360.

\bibitem{Keith2010}
{\sc M.~J. Keith, A.~Jameson, W.~van Straten, M.~Bailes, S.~Johnston,
  M.~Kramer, A.~Possenti, S.~D. Bates, N.~D.~R. Bhat, M.~Burgay,
  S.~Burke-Spolaor, N.~D'Amico, L.~Levin, P.~L. McMahon, S.~Milia, and B.~W.
  Stappers}, {\em {The High Time Resolution Universe Pulsar Survey – I.
  System configuration and initial discoveries}}, Monthly Notices of the Royal
  Astronomical Society, 409 (2010), pp.~619--627.

\bibitem{kindermann2021optimal}
{\sc S.~Kindermann}, {\em Optimal-order convergence of nesterov acceleration
  for linear ill-posed problems}, Inverse Problems, 37 (2021), p.~065002.

\bibitem{Lecun1998}
{\sc Y.~Lecun, L.~Bottou, Y.~Bengio, and P.~Haffner}, {\em Gradient-based
  learning applied to document recognition}, Proceedings of the IEEE, 86
  (1998), pp.~2278--2324.

\bibitem{lin2016Iterative}
{\sc J.~Lin, L.~Rosasco, and D.-X. Zhou}, {\em Iterative regularization for
  learning with convex loss functions}, Journal of Machine Learning Research,
  17 (2016), pp.~1--38.

\bibitem{Loja63}
{\sc S.~{\L}ojasiewicz}, {\em Une propri{\'e}t{\'e} topologique des
  sous-ensembles analytiques r{\'e}els}, in Les \'{E}quations aux
  {D}{\'e}riv{\'e}es {P}artielles ({P}aris, 1962), {\'E}ditions du Centre
  National de la Recherche Scientifique, Paris, 1963, pp.~87--89.

\bibitem{inproceedingsdorronsoro}
{\sc J.~Lopez and J.~Dorronsoro}, {\em The convergence rate of linearly
  separable smo}, in Proceedings of the International Joint Conference on
  Neural Networks, 08 2013, pp.~1--7.

\bibitem{lyon2016fifty}
{\sc R.~J. Lyon, B.~Stappers, S.~Cooper, J.~M. Brooke, and J.~D. Knowles}, {\em
  Fifty years of pulsar candidate selection: from simple filters to a new
  principled real-time classification approach}, Monthly Notices of the Royal
  Astronomical Society, 459 (2016), pp.~1104--1123.

\bibitem{lyu2019gradient}
{\sc K.~Lyu and J.~Li}, {\em Gradient descent maximizes the margin of
  homogeneous neural networks}, arXiv preprint arXiv:1906.05890,  (2019).

\bibitem{martinet1978perturbation}
{\sc B.~Martinet}, {\em Perturbation des m{\'e}thodes d'optimisation.
  applications}, RAIRO. Analyse num{\'e}rique, 12 (1978), pp.~153--171.

\bibitem{massart2006risk}
{\sc P.~Massart and {\'E}.~N{\'e}d{\'e}lec}, {\em Risk bounds for statistical
  learning}, The Annals of Statistics,  (2006), pp.~2326--2366.

\bibitem{matet2017don}
{\sc S.~Matet, L.~Rosasco, S.~Villa, and B.~C. Vu}, {\em Implicit
  regularization with strongly convex bias: Stability and acceleration},
  Analysis and Applications,  (2023), pp.~165--191.

\bibitem{molinari2021iterative}
{\sc C.~Molinari, M.~Massias, L.~Rosasco, and S.~Villa}, {\em Iterative
  regularization for convex regularizers}, in International Conference on
  Artificial Intelligence and Statistics, PMLR, 2021, pp.~1684--1692.

\bibitem{molitor2020bias}
{\sc D.~Molitor, D.~Needell, and R.~Ward}, {\em Bias of homotopic gradient
  descent for the hinge loss}, Applied Mathematics \& Optimization,  (2020),
  pp.~1--27.

\bibitem{mucke2019beating}
{\sc N.~M{\"u}cke, G.~Neu, and L.~Rosasco}, {\em Beating sgd saturation with
  tail-averaging and minibatching}, Advances in Neural Information Processing
  Systems, 32 (2019), pp.~12568--12577.

\bibitem{pmlr-v97-nacson19a}
{\sc M.~S. Nacson, S.~Gunasekar, J.~Lee, N.~Srebro, and D.~Soudry}, {\em
  Lexicographic and depth-sensitive margins in homogeneous and non-homogeneous
  deep models}, in Proceedings of the 36th International Conference on Machine
  Learning, K.~Chaudhuri and R.~Salakhutdinov, eds., vol.~97 of Proceedings of
  Machine Learning Research, PMLR, 09--15 Jun 2019, pp.~4683--4692.

\bibitem{nacson2018convergence}
{\sc M.~S. Nacson, J.~Lee, S.~Gunasekar, P.~H. Savarese, N.~Srebro, and
  D.~Soudry}, {\em Convergence of gradient descent on separable data}, arXiv
  preprint arXiv:1803.01905,  (2018).

\bibitem{nesterov2013introductory}
{\sc Y.~Nesterov}, {\em Introductory lectures on convex optimization: A basic
  course}, 2013.

\bibitem{Neubauer2016OnNA}
{\sc A.~Neubauer}, {\em On {N}esterov acceleration for {L}andweber iteration of
  linear ill-posed problems}, Journal of Inverse and Ill-posed Problems, 25
  (2016), pp.~381 -- 390.

\bibitem{neyshabur2017geometry}
{\sc B.~Neyshabur, R.~Tomioka, R.~Salakhutdinov, and N.~Srebro}, {\em Geometry
  of optimization and implicit regularization in deep learning}, arXiv preprint
  arXiv:1705.03071,  (2017).

\bibitem{neyshabur2014search}
{\sc B.~Neyshabur, R.~Tomioka, and N.~Srebro}, {\em In search of the real
  inductive bias: On the role of implicit regularization in deep learning},
  arXiv preprint arXiv:1412.6614,  (2014).

\bibitem{novikoff1963convergence}
{\sc A.~B. Novikoff}, {\em On convergence proofs for perceptrons}, tech. rep.,
  Stanford research institution menlo park CA, 1963.

\bibitem{pagliana2019implicit}
{\sc N.~Pagliana and L.~Rosasco}, {\em Implicit regularization of accelerated
  methods in hilbert spaces}, in Advances in Neural Information Processing
  Systems, 2019, pp.~14481--14491.

\bibitem{peypouquet2015convex}
{\sc J.~Peypouquet}, {\em Convex optimization in normed spaces: theory, methods
  and examples}, Springer, 2015.

\bibitem{platt1999fast}
{\sc J.~C. Platt}, {\em Fast training of support vector machines using
  sequential minimal optimization}, in Advances in kernel methods: support
  vector learning, MIT Press, 1999, pp.~185--208.

\bibitem{ramdas2016towards}
{\sc A.~Ramdas and J.~Pena}, {\em Towards a deeper geometric, analytic and
  algorithmic understanding of margins}, Optimization Methods and Software, 31
  (2016), pp.~377--391.

\bibitem{rando2022ada}
{\sc M.~Rando, L.~Carratino, S.~Villa, and L.~Rosasco}, {\em Ada-bkb: Scalable
  gaussian process optimization on continuous domains by adaptive
  discretization}, in International Conference on Artificial Intelligence and
  Statistics, PMLR, 2022, pp.~7320--7348.

\bibitem{rangamani2021dynamics}
{\sc A.~Rangamani, M.~Xu, A.~Banburski, Q.~Liao, and T.~Poggio}, {\em Dynamics
  and neural collapse in deep classifiers trained with the square loss}, Center
  for Brains, Minds and Machines (CBMM) Memo,  (2021).

\bibitem{raskutti2014early}
{\sc G.~Raskutti, M.~J. Wainwright, and B.~Yu}, {\em Early stopping and
  non-parametric regression: an optimal data-dependent stopping rule}, The
  Journal of Machine Learning Research, 15 (2014), pp.~335--366.

\bibitem{rosasco2015learning}
{\sc L.~Rosasco and S.~Villa}, {\em Learning with incremental iterative
  regularization}, in Advances in Neural Information Processing Systems, 2015,
  pp.~1630--1638.

\bibitem{rosasco2014convergence}
{\sc L.~Rosasco, S.~Villa, and B.~C. V{\~u}}, {\em Convergence of stochastic
  proximal gradient algorithm}, Applied Mathematics and Optimization,  (2020),
  pp.~891--917.

\bibitem{rosset2004boosting}
{\sc S.~Rosset, J.~Zhu, and T.~Hastie}, {\em Boosting as a regularized path to
  a maximum margin classifier}, Journal of Machine Learning Research, 5 (2004),
  pp.~941--973.

\bibitem{rosset2004margin}
{\sc S.~Rosset, J.~Zhu, and T.~J. Hastie}, {\em Margin maximizing loss
  functions}, in Advances in neural information processing systems, 2004,
  pp.~1237--1244.

\bibitem{SalVil21}
{\sc S.~Salzo and S.~Villa}, {\em Proximal gradient methods for machine
  learning and imaging}, in Harmonic and applied analysis---from {R}adon
  transforms to machine learning, Appl. Numer. Harmon. Anal.,
  Birkh\"{a}user/Springer, Cham, [2021] \copyright 2021, pp.~149--244.

\bibitem{schuster2012regularization}
{\sc T.~Schuster, B.~Kaltenbacher, B.~Hofmann, and K.~S. Kazimierski}, {\em
  Regularization methods in banach spaces}, in Regularization Methods in Banach
  Spaces, de Gruyter, 2012.

\bibitem{shalev2014understanding}
{\sc S.~Shalev-Shwartz and S.~Ben-David}, {\em Understanding machine learning:
  From theory to algorithms}, Cambridge university press, 2014.

\bibitem{shalev2011pegasos}
{\sc S.~Shalev-Shwartz, Y.~Singer, N.~Srebro, and A.~Cotter}, {\em Pegasos:
  Primal estimated sub-gradient solver for svm}, Mathematical programming, 127
  (2011), pp.~3--30.

\bibitem{soudry2018implicitGD}
{\sc D.~Soudry, E.~Hoffer, M.~S. Nacson, S.~Gunasekar, and N.~Srebro}, {\em The
  implicit bias of gradient descent on separable data}, J. Mach. Learn. Res.,
  19 (2018), p.~2822–2878.

\bibitem{stankewitz2021inexact}
{\sc B.~Stankewitz, N.~M{\"u}cke, and L.~Rosasco}, {\em From inexact
  optimization to learning via gradient concentration}, arXiv preprint
  arXiv:2106.05397,  (2021).

\bibitem{steinwart2008support}
{\sc I.~Steinwart and A.~Christmann}, {\em Support vector machines}, Springer
  Science \& Business Media, 2008.

\bibitem{su2016differential}
{\sc W.~Su, S.~Boyd, and E.~J. Candes}, {\em A differential equation for
  modeling {N}esterov’s accelerated gradient method: theory and insights},
  Journal of Machine Learning Research, 17 (2016), pp.~1--43.

\bibitem{tihonov1963solution}
{\sc A.~N. Tikhonov}, {\em Solution of incorrectly formulated problems and the
  regularization method}, Soviet Math., 4 (1963), pp.~1035--1038.

\bibitem{thikhonov1976methodes}
{\sc A.~N. Tikhonov and V.~Arsenine}, {\em Solutions of ill-posed problems},
  John Wiley and Sons, 1977.

\bibitem{vapnik2013nature}
{\sc V.~Vapnik}, {\em The nature of statistical learning theory}, Springer
  science \& business media, 2013.

\bibitem{wang2014iteration}
{\sc P.-W. Wang and C.-J. Lin}, {\em Iteration complexity of feasible descent
  methods for convex optimization}, The Journal of Machine Learning Research,
  15 (2014), pp.~1523--1548.

\bibitem{yao2007early}
{\sc Y.~Yao, L.~Rosasco, and A.~Caponnetto}, {\em On early stopping in gradient
  descent learning}, Constructive Approximation, 26 (2007), pp.~289--315.

\bibitem{zualinescu2003sharp}
{\sc C.~Z{\v a}linescu}, {\em Sharp estimates for hoffman's constant for
  systems of linear inequalities and equalities}, SIAM Journal on Optimization,
  14 (2003), pp.~517--533.

\bibitem{ZANGHIRATI2003535}
{\sc G.~Zanghirati and L.~Zanni}, {\em A parallel solver for large quadratic
  programs in training support vector machines}, Parallel Computing, 29 (2003),
  pp.~535 -- 551.
\newblock Parallel computing in numerical optimization.

\bibitem{zhang20015Boosting}
{\sc T.~Zhang and B.~Yu}, {\em {Boosting with early stopping: Convergence and
  consistency}}, The Annals of Statistics, 33 (2005), pp.~1538--1579.

\bibitem{zhang2023acceleration}
{\sc Y.~Zhang}, {\em On the acceleration of optimal regularization algorithms
  for linear ill-posed inverse problems}, Calcolo, 60 (2023), p.~6.

\end{thebibliography}
\end{document}